\def\argmax{\operatornamewithlimits{arg\,max}}
\def\argmin{\operatornamewithlimits{arg\,min}}
\newcommand \dd[1]  { \,\textrm d{#1} }
\newcommand{\QX}{Q_{\scriptscriptstyle X}}
\newcommand{\BX}{\mathbb{X}}
\newcommand{\BY}{\mathbb{Y}}
\newcommand{\BA}{\mathbb{A}}
\newcommand{\BI}{\mathbb{I}}
\newcommand{\CX}{\mathcal{X}}
\newcommand{\CF}{\mathcal{F}}
\newcommand{\CA}{\mathcal{A}}
\newcommand{\CH}{\mathcal{H}}
\newcommand{\CN}{\mathcal{N}}
\def \CG{\mathcal{G}}
\def \CF{\mathcal{F}}
\def \CX{\mathcal{X}}
\def \CD{\mathcal{D}}
\DeclarePairedDelimiter\floor{\lfloor}{\rfloor}
\DeclareMathOperator {\sign}{sign}
\newcommand{\tr}{^\mathrm{T}}  
\newcommand{\RR}{\mathbb{R}}
\newcommand{\R}{\mathbb{R}}
\newcommand{\NN}{\mathbb{N}}
\newcommand{\EE}{\mathbb{E}}
\newcommand{\PP}{\mathbb{P}}
\newcommand{\norm}[1]{\left\lVert#1\right\rVert}
\newcommand\Tstrut{\rule{0pt}{2.6ex}}         
\newcommand\Bstrut{\rule[-0.9ex]{0pt}{0pt}}
\newtheorem{assumption}{A\hspace{-1.4mm}}
\newtheorem{assumptionp}{P\hspace{-0.5mm}}
\providecommand{\customgenericname}{}
\newcommand{\newcustomtheorem}[2]{%
  \newenvironment{#1}[1]
  {%
   \renewcommand\customgenericname{#2}%
   \renewcommand\theinnercustomgeneric{##1}%
   \innercustomgeneric
  }
  {\endinnercustomgeneric}
}
\theoremstyle{plain}
\newtheorem{theorem}{Theorem}[section]
\newtheorem{lemma}[theorem]{Lemma}
\newtheorem{definition}[theorem]{Definition}
\theoremstyle{remark}
\begin{document}
\begin{frontmatter}
\title{Resampled Confidence Regions\\
with Exponential Shrinkage for the\\ 
Regression Function of Binary Classification}
\runtitle{Confidence Regions for the Regression Function of Binary Classification}

\begin{aug}

\author[1, 2]{\fnms{Ambrus}~\snm{Tamás}\ead[label=e1]{tamasamb@sztaki.hu}}
\and
\author[1, 2]{\fnms{Balázs Csanád}~\snm{Csáji}\ead[label=e2]{csaji@sztaki.hu}}

\address[1]{Institute for Computer Science and Control (SZTAKI), Hungarian Research Network (HUN-REN)\printead[presep={,\ }]{e1,e2}}

\address[2]{Dept.\ of Probability Theory and Statistics, Institute of Mathematics, E{\"o}tv{\"o}s Lor{\'a}nd University (ELTE)}
\runauthor{A. Tamás and B.Cs., Csáji}
\end{aug}

\begin{abstract}
The regression function is one of the key objects of binary classification, since it not only determines a Bayes optimal classifier, hence, defines an optimal decision boundary, but also encodes the conditional distribution of the output given the input. In this paper we build distribution-free confidence regions for the regression function for any user-chosen confidence level and any finite sample size based on a resampling test. These regions are abstract, as the model class can be almost arbitrary, e.g., it does not have to be finitely parameterized. We prove the strong uniform consistency of a new empirical risk minimization based approach for model classes with finite pseudo-dimensions and inverse Lipschitz parameterizations. We provide exponential probably approximately correct bounds on the $L_2$ sizes of these regions, and demonstrate the ideas on specific models. Additionally, we also consider a k-nearest neighbors based method, for which we prove strong pointwise bounds on the probability of exclusion. Finally, the constructions are illustrated on a logistic model class and compared to the asymptotic ellipsoids of the maximum likelihood estimator.
\end{abstract}

\begin{keyword}[class=MSC]
\kwd[Primary ]{62G15, 62G10}
\kwd[; secondary ]{62J12}
\end{keyword}

\begin{keyword}
\kwd{binary classification}
\kwd{confidence regions}
\kwd{sample complexity}
\end{keyword}

\end{frontmatter}

\section{Introduction}

Classification or pattern recognition is one of the fundamental problems of statistical learning theory \citep{Vapnik1998}, and it is widely studied in several fields, such as machine learning, computer vision, system identification, signal processing and information theory. Assuming we want to minimize the probability of misclassification, an intrinsic goal is to estimate the regression function, because it not only determines a Bayes optimal classifier, but also encodes the conditional misclassification probability given the input.
Standard methods typically only provide point estimates for the regression function, which usually equal to the target model with probability zero in case of finite samples. Hence, it is crucial to quantify the uncertainty of an estimator, for example, by providing error bounds for a given point estimate. In this paper our goal is to construct these in the form of confidence regions.

There are several methods to construct region estimates, for example, confidence intervals and ellipsoids, but most of these approaches suffer from theoretical limitations. In the area of parametric statistics, under strong assumptions on the observations, the distribution of the estimate can be derived. For example, confidence intervals for the expected value of a normal distribution can be constructed based on the sample mean and variance.

In statistical learning the distribution of the sample is unknown, therefore, distribution-free methods are sought. 
The asymptotic theory of distribution-free regression function estimation is rich \citep{gyorfi2002distribution}. For example, k-nearest neighbors (kNN) methods and Nadaraya-Watson type kernel estimators are proved to be strongly universally $L_2$ consistent \citep{devroye1994strong, walk2002almost}, however, the convergence rates of these methods can be arbitrarily slow \citep{cover1968rates} and for nonparametric families of distributions the rates are usually way too conservative (for example, minimax) to be useful in practice. With an asymptotic approach, we can approximate the distribution of the scaled estimation error with its limiting distribution, which can often be characterized by using the central limit theorem. However, asymptotic methods can perform poorly for small samples; the provided guarantees are only approximate for finite datasets.

In finite sample settings, a standard approach is to rely on concentration inequalities, such as the results in \cite{hoeffding1994probability, bernstein1937modi, bennett1962probability}. These yield a variety of probably approximately correct (PAC) bounds, however, they are not fully data-driven, e.g., they still need information on moments or ranges. An adaptive martingale approach was developed in \citep{Waudby2023} to derive uniform concentration bounds for the mean of a random variable, under a boundedness assumption. Distribution-free binary classification was studied in \cite{barber2020distribution} in the framework of conformal inference \citep{vovk2005algorithmic}. Barber proves an explicit lower bound on the expected width of distribution-free confidence intervals for the conditional class probability in case of binary outputs and constructs a procedure that achieves this length approximately \cite{barber2020distribution}.

In this paper our goal is to build data-driven, nonparametric, non-asymptotically exact confidence regions for the regression function of binary classification and prove strong consistency results as well as exponential PAC bounds  under very mild statistical assumptions. The approach was motivated by the Sign-Perturbed Sums (SPS) method, which was introduced in system identification for linear regression \citep{csaji2015closed, csaji2014sign}. SPS constructs non-asymptotically exact confidence regions and it is proved to be strongly consistent \citep{weyer2017asymptotic}. Extensions of SPS for classification were suggested in \citep{csajitamas2019, tamas2021}, which we improve in several ways.

We present resampling methods that make inference based on alternatively (artificially) drawn samples. We argue that resampling methods can provide universal and adaptive stochastic guarantees for binary classification. Similarly to Monte Carlo tests, we generate samples from a test distribution and compare suitable statistics of the alternative datasets to the original sample \citep{zhu2006nonparametric}. We define the confidence sets via rank tests, thus, they are presented in an abstract form, i.e., for each candidate model a test decides whether it is included in the region for a given confidence level. We emphasize the confidence region perspective and show how to reduce the cost of the alternative data generation process to a constant and still be able to test any candidate. We introduce an empirical risk minimization (ERM) based approach and prove that it builds strongly uniformly consistent confidence regions. We prove an exponential PAC bound for the $L_2$ size of ERM-based confidence sets. We also present a new consistency theorem for kNN-based algorithms under milder assumptions than in \citep{csajitamas2019}. Additionally, a pointwise exponential PAC bound is proved for the kNN-based algorithm, as well. We prove the strong consistency of our confidence regions in an $L_2$ space, under no assumptions on the marginal distribution of the inputs. This seemingly contradicts the results of \cite{barber2020distribution}, however, our method does not construct confidence intervals with vanishing width for regression function values, but tests the functions themselves instead. Recall that $L_2$ convergence neither implies pointwise nor uniform convergence, in general. Finally, the methods are validated through numerical experiments on logistic models, and the constructed regions are compared to the asymptotic confidence ellipsoids around the maximum likelihood estimator (MLE). 

\section{Binary Classification}

Let $(\BX, \CX)$ be a measurable input space and $\BY=\{+1,-1\}$ be the binary output space. Let $\mathcal{D}_0= \{(X_i,Y_i)\}_{i=1}^n$ denote an i.i.d.\ sample from the unknown joint distribution, $Q_{\scriptscriptstyle{X,Y}}$ or simply $Q$, of the input-output pair $(X,Y) \in \BX \times \BY$, for $i\in [\hspace{0.3mm}n\hspace{0.3mm}]\doteq \{1,\dots, n\}$.

Measurable $\BX \to \BY$ type functions are called classifiers or decision rules.  Let us apply a measurable loss function $L:\BY \times \BY \to [\hspace{0.3mm}0,\infty)$ that penalizes label mismatch. We recall the zero-one loss, $L(y_1,y_2)\doteq \BI ( y_1 \neq y_2)$ for $y_1,y_2 \in \BY,$ where $\BI$ is an indicator function. The expected loss of  classifier $\phi$ is $R(\phi) \doteq \EE[\hspace{0.3mm} L(\phi(X), Y)\hspace{0.3mm}]$, which is also called the Bayes risk. Typically the goal of classification is to minimize this quantity. For the zero-one loss, the Bayes risk is the probability of misclassification. Since the joint distribution of $(X,Y)$ is unknown, minimizing the Bayes risk is challenging, and it is based on empirical estimates.

It is known that, assuming binary outputs, the joint distribution, $Q$,  is determined by the marginal distribution of the inputs, $\QX$, and the {\em regression function} $f^*(x) \doteq \EE[\hspace{0.3mm}Y|X=x\hspace{0.3mm}]$. Furthermore, for the zero-one loss, a Bayes (optimal) classifier takes the form of $\phi^*(x) \doteq \sign( f^*(x))$. The regression function not only determines an optimal classifier, but also encodes the misclassification probabilities for every input. That is why $f^*$ is a key object for binary classification. Henceforth, in this paper we aim at building confidence regions for $f^*$.

\section{Resampling for Classification}

In this section we present a resampling framework to build distribution\hyp free, non\hyp asymptotic confidence regions, which contain the regression function of binary classification with a prescribed (rational) confidence level. Later, we consider two specific resampling algorithms to build exact confidence sets for the regression function under mild statistical assumptions.

The core assumptions of the framework are as follows, cf. \cite{csajitamas2019}:
\begin{assumption}\label{ass:b1}
    The given sample, $\CD_0=\{(X_i,Y_i)\}_{i=1}^n$, is i.i.d. from distribution $Q\vspace{-1mm}$.
\end{assumption}
\begin{assumption}\label{ass:b2}
    A class of candidate regression functions is available, indexed by an arbitrary set of ``parameters'', which contains $f^*$, that is,
	$f^* \in \CF_0 \, \doteq \, \big\{\, f_{\theta}: \BX \to [\,-1,+1\,]\, \mid\, \theta \in \Theta\, \big\}.\vspace{-1mm}$
\end{assumption}
\begin{assumption}\label{ass:b3}
    The parameterization is injective in the $\mathcal{L}_2(\QX)$ sense, i.e., for all $\theta_1 \neq \theta_2 \in \Theta:$
    \vspace{-1mm}
		\begin{equation}
			\|\, f_{\theta_1} - f_{\theta_2} \|^2_{\scriptscriptstyle Q} \, \doteq \int_{\BX} (f_{\theta_1}(x)-f_{\theta_2}(x))^2 \dd\, \QX(x) \, \neq \, 0.
			\label{eq:L2-distance-PX}
	\end{equation}
\end{assumption}

Let $\theta^*$ denote the parameter which corresponds to the true regression function, that is $f_{\theta^*} = f^*$. The true parameter is well-defined, because of A\ref{ass:b3}.
Although we refer to $\Theta$ as ``parameter space'', it can be almost any abstract set (a metric structure will be assumed for the uniform consistency results), e.g., $\Theta$ can be an infinite dimensional vector space, moreover, the functions themselves can be the ``parameters''.
A confidence region is a random subset of $\Theta$ which covers $\theta^*$ with a user-chosen probability. In the section that follows abstract region estimators are defined as the accepted parameters of a rank test.

One of the main observations needed for our approach is that a regresssion function candidate determines the conditional distribution of the outputs given the inputs, that is
\begin{align}
	\label{eq:regression-function}
	\PP(\,Y(\theta) = \pm1\; |\; X = x\,)\, =\, \big(1 \pm f_\theta(x)\big)/2,
\end{align}
for all $\theta \in \Theta$. Notice that $Y$ has the same distribution as $Y(\theta^*)$.
Our idea is to generate alternative outputs for a given candidate model, then compare the alternative datasets to the original sample with a similarity measure to test the candidate's suitability. The comparison is performed via rank statistics, cf. Definition \ref{def:ranking-function-1}.

\begin{definition}[ranking function]\label{def:ranking-function-1}
	Let $\BA$ be a measurable space. A (measurable) function $\psi : \BA^m \to [\,m\,]$ is called a	{ranking function} if $\forall(a_1, \dots, a_m) \in \BA^m$ it satisfies P\ref{ass:p1} and P\ref{ass:p2}$\,:$
        \begin{assumptionp}\label{ass:p1}
            For all permutation $\tau$ on set $\{2,\dots, m\}$, 
			$\psi\big(\,a_1, a_{2}, \dots, a_{m}\,\big)\; = \;
			\psi\big(\,a_1, a_{\tau(2)}, \dots, a_{\tau(m)}\,\big),$
		that is, function $\psi$ is invariant w.r.t.\ reordering the last $m-1$ terms of its arguments.\vspace*{-1mm}
        \end{assumptionp}
        \begin{assumptionp}\label{ass:p2}
        For all indices $i,j \in  [\hspace{0.3mm}m\hspace{0.3mm}]$,
		if $a_i \neq a_j$, then we have
	    $\psi\big(\,a_i, \{a_{k}\}_{k\neq i}\,\big)\, \neq \;\psi\big(\,a_j, \{a_{k}\}_{k\neq j}\,\big),$
		where the simplified notation is justified by property P\ref{ass:p1}.
        \end{assumptionp}
\end{definition}

The (random) value of a ranking function is called the {\em rank}. The main observation about the rank is given by the lemma that follows \cite[Lemma 1]{csajitamas2019}:
\begin{lemma}\label{lemma:uniform-rank-distribution}
	{Let $\xi_1, \xi_2, \dots, \xi_m$ be 
    almost surely pairwise different exchangeable random elements taking values in an arbitrary measurable space, and 
    let  $\psi$ be a ranking function. Then, the rank, $\psi(\xi_1, \dots, \xi_m)$, is distributed uniformly on $[\hspace{0.3mm}m\hspace{0.3mm}]$.}
\end{lemma}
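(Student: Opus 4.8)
The plan is to use the two defining properties of a ranking function to realize $\psi$ as the first coordinate of a random \emph{bijection} of $[\,m\,]$, and then to equidistribute that bijection's values by symmetrization via exchangeability. First I would pass to the almost sure event $E$ on which $\xi_1,\dots,\xi_m$ are pairwise distinct; on $E^{c}$ the value of $\psi$ is irrelevant for the distribution, and on $E$ no information is lost. On $E$, property P\ref{ass:p1} allows us to treat $\psi(\xi_i,\{\xi_k\}_{k\neq i})$ as a genuine function of the first argument $\xi_i$ together with the unordered collection $\{\xi_1,\dots,\xi_m\}$; property P\ref{ass:p2} then states that, with this collection held fixed, the assignment $\xi_i \mapsto \psi(\xi_i,\{\xi_k\}_{k\neq i})$ is injective, hence — being a map from an $m$-element set into $[\,m\,]$ — a bijection onto $[\,m\,]$. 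Writing $\rho(i) \doteq \psi(\xi_i,\{\xi_k\}_{k\neq i})$, we obtain a (random) permutation $\rho$ of $[\,m\,]$ with $\psi(\xi_1,\dots,\xi_m)=\rho(1)$ almost surely.

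The second step is to show that $\rho(i)$ has the same law for every $i\in[\,m\,]$. Fix $i$ and pick any permutation $\sigma$ of $[\,m\,]$ with $\sigma(1)=i$. Exchangeability gives $(\xi_{\sigma(1)},\dots,\xi_{\sigma(m)}) \stackrel{d}{=} (\xi_1,\dots,\xi_m)$, hence $\psi(\xi_{\sigma(1)},\xi_{\sigma(2)},\dots,\xi_{\sigma(m)}) \stackrel{d}{=} \psi(\xi_1,\dots,\xi_m)$. By P\ref{ass:p1} the left-hand side equals $\psi(\xi_i,\{\xi_k\}_{k\neq i}) = \rho(i)$ (up to the null set $E^{c}$), so $\rho(i) \stackrel{d}{=} \rho(1)=\psi(\xi_1,\dots,\xi_m)$ for every $i$.

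Finally, fix $k\in[\,m\,]$. Since $\rho$ is almost surely a bijection, exactly one index $i$ satisfies $\rho(i)=k$, so $\sum_{i=1}^{m}\one(\rho(i)=k)=1$ a.s. Taking expectations and using the previous step, $1 = \sum_{i=1}^{m}\PP(\rho(i)=k) = m\,\PP(\rho(1)=k) = m\,\PP(\psi(\xi_1,\dots,\xi_m)=k)$, whence $\PP(\psi(\xi_1,\dots,\xi_m)=k)=1/m$ for all $k$, which is the assertion. I expect the only delicate points to be bookkeeping: making the reduction to $E$ clean enough that the shorthand $\{\xi_k\}_{k\neq i}$ and the bijectivity of $\rho$ hold literally (not merely in distribution), and applying the exchangeability identity to the correct coordinate. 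There is no analytic content here — the argument is purely combinatorial together with a single symmetrization over the position of the first coordinate.
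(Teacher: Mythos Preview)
Your proposal is correct and follows essentially the same route as the paper: both arguments use P\ref{ass:p1} and P\ref{ass:p2} to show that on the almost sure event of pairwise distinctness the map $i\mapsto\psi(\xi_i,\{\xi_k\}_{k\neq i})$ is a bijection onto $[\,m\,]$, then use exchangeability to equidistribute and sum to one. The paper phrases the bijectivity step via disjoint covering events $C_i=\{\psi(\xi_i,\{\xi_k\}_{k\neq i})=k\}$ rather than your random permutation $\rho$, but this is only a notational difference.
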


Lemma \ref{lemma:uniform-rank-distribution}, which is an important observation about the rank of exchangeable random elements, will be our main tool to prove that the (user-chosen) confidence level of the constructed confidence region is exact for every finite sample size.
We emphasize that Lemma \ref{lemma:uniform-rank-distribution} is distribution-free, i.e., the marginal distribution of the given random elements, $\{\xi_j\}_{j=1}^m$, can be arbitrary. In addition, the elements that we compare do not need to be fully independent, only their exchangeability is required, which is more general than the i.i.d.\ assumption.

We use a ranking function $\psi$ on the resampled datasets. The given sample, $\CD_0$, is included in $(\BX \times \BY)^n$. Our framework generates $m-1$ alternative random elements from this space for any candidate model, i.e., $\CD_j(\theta) \doteq \{(X_i,Y_{i,j}(\theta))\}_{i=1}^n$, where $Y_{i,j}(\theta) =\sign(f_\theta(X_i) + U_{i,j})$ with i.i.d.\ uniform variables $\{U_{i,j}\}$ on interval $[-1,1]$ for $i\in[n]$ and $j\in[m-1]$. That is, the conditional distribution of $Y_{i,j}(\theta)$ w.r.t.\ $X_i$ is given by $f_\theta$. Hence, we consider $m$ elements in $(\BX \times \BY)^n$. To ensure pairwise difference, which is a technical assumption in Lemma \ref{lemma:uniform-rank-distribution}, we extend the datasets with the elements of a uniformly sampled random permutation $\pi: [m] \to [m]$, i.e., we let $\CD_0^\pi(\theta)\doteq \big( \CD_0, \pi(m)\big)$ and $\CD_j^\pi(\theta)\doteq \big( \CD_j(\theta), \pi(j)\big)$ for $j \in [m-1]$. Henceforth, we construct $\psi: \big((\BX \times \BY)^n \times [m]\big)^m \to [m]$ type ranking functions.

\begin{algorithm}[t]
    \caption{Initialization}\label{alg:initialization}
    \hspace*{-8.87cm}\textbf{Input:} rational confidence level $\gamma \in (0,1)$\\[1mm]
    \begin{algorithmic}[1]\label{hypothesis_algo2}
         \hrule
         \vspace{1mm}
         \STATE Select integers $1 \leq q_1 < q_2 \leq m$ such that 
         $\gamma = (q_2-q_1 +1)/m.$
         \STATE Generate i.i.d.\ variables, $\mathcal{U}\doteq \{U_{i,j}\}\text{\;for\;}i \in [n], j \in [m-1]\text{\;with\;}$
              $U_{i,j} \sim \text{Uniform}(-1,1).\hspace*{-1mm}$
         \STATE Generate a random permutation $\pi$ uniformly from the symmetric group over $[m]$.
    \end{algorithmic}
\end{algorithm}
The confidence region construction for an arbitrary ranking function $\psi$ consists of two parts. First, in the initialization phase, in Algorithm \ref{alg:initialization}, the integer hyperparameters ($q_1$, $q_2$ and $m$) are selected w.r.t.\ the prescribed (rational) confidence level, $\gamma$, and a stem sample is generated. Then, in Algorithm \ref{alg:rank-statistic}, rank statistics are calculated on the parameter space with the help of $\psi$. Finally, we construct the abstract confidence region based on these ranks by
\begin{equation}
	\Theta_n^{\psi}\, \doteq\, \big\{ \, \theta \in \Theta \;|\;  q_1 \leq \psi\big(\,\CD^{\pi}_0(\theta), \{ \CD^{\pi}_j(\theta) \}_{j \neq 0}\,\big) \leq q_2\, \big\}.
\end{equation}

\begin{algorithm}[t]
    \caption{Rank Statistic}\label{alg:rank-statistic}
    \hspace*{-3.2cm}\textbf{Inputs:} 
    parameter $\theta$, samples $\CD_0$ and $\mathcal{U}$,
    ranking function $\psi$, integer $m$, permutation $\pi$\\[1mm]
    \begin{algorithmic}[1]\label{hypothesis_algo2}
         \hrule
         \vspace*{1mm}
         \STATE Let $Y_{i,j}(\theta) \doteq \sign(f_\theta(X_i) + U_{i,j})$ for $i \in [n]$ and $j \in [m-1]$ be the alternative outputs for model $f_\theta$. For simplicity let $Y_{i,0}(\theta)= Y_i$ for $i \in [n]$.
         \STATE Let $\CD_j(\theta) \doteq \{(X_i,Y_{i,j}(\theta))\}_{i=1}^n$ be an alternative sample for $j \in [m-1]$.
         \STATE Le $\CD_j^\pi(\theta)\doteq \big( \CD_j(\theta), \pi(j)\big)$ be the extended alternative sample for $j \in [m-1]$ and for notational simplicity let $\CD_0^\pi(\theta) \doteq ( \CD_0, \pi(m)),$ 
         for all $\theta\in \Theta$.
         \STATE Return the rank statistic of parameter $\theta$, that is return
         $\psi(\,\CD_0^\pi(\theta), \{\CD_j^\pi(\theta)\}_{j=1}^{m-1}\,).$\vspace{-1mm}
    \end{algorithmic}
\end{algorithm}

 We show that $\Theta_n^\psi$ is an {\em exact} confidence region for $\theta^*$ with confidence level $\gamma$.
    Let us fix an index $i \in [n]$ and take the observations that follows. First, we note that
	\begin{align}
		\PP(\,Y_{i,j}(\theta)=\pm1\,|\,X_i\,)\, = \,
        (1 \pm f_\theta(X_i))/2
	\end{align}
	and consequently $Y_{i,j}(\theta)$ is generated from the conditional distribution with respect to $X_i$ determined by candidate regression function $f_\theta$ for all $j \in [m-1]$. Second, clearly $Y_{i,0}$ has the same conditional distribution with respect to $X_i$ as $Y_{i,j}(\theta^*)$. In addition $\{ \,Y_{i,j}(\theta^*)\,\}_{j=0}^{m-1}$ are all conditionally i.i.d.\ with respect to $X_i$, because $\{U_{i,j}\}_{j=1}^{m-1}$ are independent of each other and $Y_{i,0}$. We conclude that $\CD_0,\CD_1(\theta^*), \dots, \CD_{m-1}(\theta^*)$ are conditionally i.i.d. with respect to $\{X_i\}_{i=1}^n$, thus they are also exchangeable. 
	Hence, the application of Lemma \ref{lemma:uniform-rank-distribution} yields the general theorem that follows, which is one of        the main building blocks of the framework:
	\begin{theorem}
		\label{theorem:exact-confidence}
		Assume that A\ref{ass:b1}, A\ref{ass:b2} and A\ref{ass:b3} hold. Then, for any ranking function $\psi$, integers $1\, \leq\, q_1\, \leq\, q_2\, \leq\, m$ and sample size $n \in \mathbb{N}$, we have\hspace{0.3mm}
		$\PP\big(\, \theta^* \in \Theta_{n}^{\psi}  \, \big)\, = \, (q_2-q_1+1)/m.$
	\end{theorem}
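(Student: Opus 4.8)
The plan is to deduce the statement from Lemma \ref{lemma:uniform-rank-distribution}, applied to the $m$ extended datasets built at the true parameter $\theta^*$. That lemma requires only two things of its arguments, exchangeability and almost sure pairwise difference, so the proof reduces to verifying these two properties for $\CD_0^\pi(\theta^*), \CD_1^\pi(\theta^*), \dots, \CD_{m-1}^\pi(\theta^*)$, and then reading off the probability of the acceptance event $\{q_1 \le \psi(\cdot) \le q_2\}$ that defines membership of $\theta^*$ in $\Theta_n^\psi$.

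First I would check exchangeability by conditioning on the inputs $X_1, \dots, X_n$. For every $j \in [m-1]$ and every $i$, the resampled label $Y_{i,j}(\theta^*) = \sign(f_{\theta^*}(X_i) + U_{i,j})$ has conditional law $\PP(Y_{i,j}(\theta^*) = \pm 1 \mid X_i) = (1 \pm f_{\theta^*}(X_i))/2$; by A\ref{ass:b2} together with the definition of $\theta^*$ (well posed thanks to A\ref{ass:b3}) this equals $(1 \pm f^*(X_i))/2$, which is exactly the conditional law of $Y_{i,0} = Y_i$ since $f^*(x) = \EE[Y \mid X = x]$. Because the perturbations $\{U_{i,j}\}$ are i.i.d.\ and independent of $\CD_0$ (using A\ref{ass:b1}), the $m$ columns $\CD_0, \CD_1(\theta^*), \dots, \CD_{m-1}(\theta^*)$ are, conditionally on $(X_1, \dots, X_n)$, independent and identically distributed, hence conditionally exchangeable, hence exchangeable after integrating out the inputs. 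Attaching the coordinates $\pi(m), \pi(1), \dots, \pi(m-1)$ does not spoil this: $\pi$ is drawn uniformly from the symmetric group on $[m]$ independently of $\CD_0$ and $\CU$, so the joint law of the $m$ pairs consisting of a column and its attached permutation value is invariant under any relabelling of the column index.

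Second, the pairwise-difference hypothesis holds surely, not merely almost surely, precisely because of these appended coordinates: $\pi(m), \pi(1), \dots, \pi(m-1)$ are $m$ distinct integers (the images of a bijection of $[m]$), so the extended datasets $\CD_0^\pi(\theta^*), \dots, \CD_{m-1}^\pi(\theta^*)$ differ already in their last component and are therefore pairwise distinct. With both hypotheses established, Lemma \ref{lemma:uniform-rank-distribution}, applied to these $m$ elements and the given ranking function $\psi$, shows that $R \doteq \psi\big(\CD_0^\pi(\theta^*), \{\CD_j^\pi(\theta^*)\}_{j=1}^{m-1}\big)$ is uniform on $[m]$. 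Since by the definition of $\Theta_n^\psi$ the event $\{\theta^* \in \Theta_n^\psi\}$ coincides with $\{q_1 \le R \le q_2\}$, its probability is $(q_2 - q_1 + 1)/m$, as claimed. I expect the only genuinely delicate point to be the exchangeability bookkeeping — checking that the tie-breaking permutation coordinates, which are indispensable for the pairwise-difference requirement, are attached in a way that keeps the $m$ columns symmetric; conditioning on the inputs and exploiting the independence of $\pi$ makes this routine.
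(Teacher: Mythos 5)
Your proof is correct and follows essentially the same route as the paper: establish that $\CD_0, \CD_1(\theta^*), \dots, \CD_{m-1}(\theta^*)$ are conditionally i.i.d.\ given the inputs, note that the appended permutation values make the extended datasets exchangeable and pairwise distinct, and invoke Lemma \ref{lemma:uniform-rank-distribution} to get a uniform rank. Your treatment is in fact slightly more careful than the paper's on two points — spelling out why attaching $\pi(m), \pi(1), \dots, \pi(m-1)$ preserves exchangeability, and observing that pairwise distinctness holds surely rather than merely almost surely — but the argument is the same.
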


    Note that Theorem \ref{theorem:exact-confidence} holds for any finite dataset. If interpreted as a hypothesis test, Theorem \ref{theorem:exact-confidence} quantifies exactly the probability of type I error.
    Any rational coverage level can be achieved accurately by using appropriate hyperparameters, which are under our control. 
    If needed, irrational coverage levels could also be achieved by using additional randomization.
    We allow parameterizing the regression function, while the distribution of the inputs can be arbitrary, hence our approach can be applied in a semi-parametric setting, too.
	On the other hand, our method provides valid confidence sets on a completely distribution-free fashion, e.g., if we consider the model class of all possible regression functions.

 The application of the proposed framework requires an appropriate ranking function. We develop rankings that produce consistent confidence regions, which eventually exclude every ``false'' parameter. We aim at ordering datasets $\{ \CD_j^{\pi}(\theta)\}_{j=0}^{m-1}$ with the help of $\psi$. The extended datasets are from a high-dimensional space, where a total order is not given in general. For this purpose we introduce real-valued reference variables of the form
	$Z_n^{(j)}(\theta)\, \doteq\, T(\CD_j(\theta),\theta),$
	for $j=0,\dots,m-1$, where $T: (\BX \times \BY)^n \times \Theta \to \RR$ (measurable for all $\theta \in \Theta)$. To sort $\{Z_n^{(j)}(\theta)\}_{j=0}^{m-1}$ we use the total order $\prec_\pi$ defined by
	$Z_n^{(j)}(\theta)\, \prec_\pi\, Z_n^{(k)}(\theta)$ if and only if $Z_n^{(j)}(\theta) < Z_n^{(k)}(\theta)$ or $\big(Z_n^{(j)}(\theta)\,=\,Z_n^{(k)}(\theta) \text{ \,and\, } \pi(j) \,<\, \pi(k)\big)$.
	With these notations let
	\begin{equation}\label{eq:ranking-function}
		\begin{aligned}
			&\psi\,\big(\,\CD_0^\pi(\theta), \{\CD_{j}^\pi(\theta)\}_{j=1}^{m-1}\,\big)\,\doteq\, 1 + \sum_{j=1}^{m-1} \BI(Z_n^{(0)}(\theta) \succ_\pi Z_n^{(j)}(\theta)).
		\end{aligned}
	\end{equation}
	Clearly, $\psi$ satisfies P\ref{ass:p1} and P\ref{ass:p2}. In the following sections we design real-valued statistics and reference variables that guarantee consistency for the constructed confidence regions beside the exact coverage probability. Note that $T$ can depend on the model class as well, however, in this paper $\CF_0$ is fixed, therefore we do not indicate this dependence in the notation.

    \section{Point Estimator Based Constructions}
	In this section we present a general approach to define reference variables. Our idea is to estimate the regression function from each available dataset, from the original one as well as the resampled ones, and compare their empirical performance w.r.t.\ the candidate function. 
	Any estimator of the regression function can be used, but in this paper we focus on two particular approaches:
    an ERM-based and a kNN-based technique. Both approaches construct exact confidence regions for any user-chosen confidence level and we prove that they also have strong asymptotic guarantees. Sufficient conditions are presented for strong uniform consistency for the ERM-based construction and strong pointwise consistency is proved for the kNN-based scheme. 
	We make the additional assumption that follows:
        \begin{assumption}\label{ass:b4}
            (Euclidean input space condition) $\BX \subseteq \RR^d$
        \end{assumption}
	
	\subsection{Empirical Risk Minimization}\label{sec:ERM-based}
	
	It is well-known that the regression function is a risk minimizer; it minimizes the true expected squared loss among all measurable functions, 
	$f^* \,=\, \argmin\nolimits_f\, \EE \big[(f(X)-Y)^2\big].$
	It motivates the application of the ERM principle, which given a model class $\CF$ and a sample $\{(X_i,Y_i)\}_{i=1}^n$ estimates the regression function by choosing
	\begin{equation}
		{f}_{n}\, \doteq\, \argmin_{f \in \CF} \frac{1}{n}\sum_{i=1}^n (f(X_i)-Y_i)^2,\vspace{1mm}
	\end{equation}
	where the quantity in the right hand side is called the empirical risk. Note that we assume the existence and the measurability of ${f}_{n}$, however, we do not require its uniqueness.
	
	When $f^*$ is included in a model class $\CF$, the consistency of ERM estimates relies on the uniform convergence of the empirical risk to the true risk, i.e., whether we have, as $n\to \infty$,
    \begin{equation}\label{eq:ulln}
		\sup_{f \in \CF}\hspace{0.3mm} \Bigg| \frac{1}{n} \sum_{i=1}^n (f(X_i)-Y_i)^2 -\EE \big[ (f(X) -Y)^2\big]\Bigg| \xrightarrow{\;a.s.\;}0.
	\end{equation}
	These type of convergences are called the (strong) uniform laws of  large numbers (ULLN) and are in the core of statistical learning theory \citep{Vapnik1998, gyorfi2002distribution}.
	The measurability of the supremum in \eqref{eq:ulln} needs to be verified for the model class as it may contain uncountably many elements. For simplicity and to avoid digressions, we always assume the measurability of the arising supremums without further notice.

    Classical approaches define a complexity measure to quantify the capacity of the model class and prove ULLN by posing bounds on the complexity.
        We use the concept of pseudo-dimension to restrict the expressivity of $\CF_0$. This notion is based on the celebrated Vapnik-Chervonenkis dimension (VC dimension), whose definition is recalled in Appendix \ref{app:theorems}.    
	\begin{definition}[pseudo-dimension]
		Given a model class $\CH$ let $\CH^{+}$\! contain the subgraphs of functions in $\CH$, that is
		$\CH^{+} \,=\,\{\, h^{+}\;|\;h \in \CH\,\}$ where $h^{+}\,\doteq\, \{\hspace{0.3mm}(x,t) \in \RR^d \times \RR\;|\; t \leq h(x)\hspace{0.3mm}\},$
		then, the VC dimension of class $\CH^{+}$, that is $V_{\CH^{+}}$, is called the pseudo-dimension of $\CH$.
	\end{definition}
	
	\subsection{Empirical Risk Minimization-Based Ranking}\label{subsec:ERM-based}
	
	For a given parameter 
	$\theta$,
        let us consider the ERM estimate for all datasets, i.e., let
	\begin{equation}
		f_{\theta,n}^{(j)} \,\doteq\,\argmin_{f \in \CF_0} \frac{1}{n}\sum_{i=1}^n (f(X_i)-Y_{i,j}(\theta))^2,
	\end{equation}
	for $j =0,\dots,m-1$.
	With these models let the reference variables be defined as
	\begin{equation}\label{def:reference-var}
		Z_n^{(j)}(\theta) \,\doteq\, \frac{1}{n}\sum_{i=1}^n (f_\theta(X_i)- {f}_{\theta,n}^{(j)}(X_i))^2,
	\end{equation}
	for $j=0,\dots,m-1$, that is, as the empirical error terms w.r.t.\ candidate function $f_\theta$ we are currently testing. Finally, let us use the ranking function defined in \eqref{eq:ranking-function}.
	
	Notice that $f_{\theta,n}^{(0)}$ does not depend on parameter $\theta$ because it corresponds to the original dataset, $\CD_0$; therefore let us denote it by $f_{*,n}^{(0)}$. Then, on the one hand, $f_{*,n}^{(0)}$ should converge to $f^*$ for all $\theta \in \Theta$, because it only uses the original sample, but on the other hand, $f_{\theta,n}^{(j)}$ should tend to $f_\theta$ for all $j \in [m-1]$, for  the reason that they estimate the regression function based on samples generated from the conditional distribution determined by $f_\theta$. Because of these observations, our intuition is that for $\theta \neq \theta^*$ for $n$ large enough $Z_n^{(0)}(\theta)$ tends to be the greatest. Therefore, we choose $q_1=1$ and $q_2 = q$ leading to the confidence region
	\begin{equation}
		\label{eq:confidence-region}
		\Theta_{n}^{(1)} \,\doteq\, \big\{ \, \theta \in \Theta \;|\;  \psi\big(\,\CD^{\pi}_0, \{ \CD^{\pi}_j(\theta) \}_{j \neq 0}\,\big) \leq q\, \big\}.
		\vspace{1mm}
	\end{equation}
	For proving strong uniform consistency we make two additional assumptions:
        \begin{assumption}\label{ass:b5}
            (inverse Lipschitz condition) Let $(\Theta,\Delta)$ be a metric space and let the parameterization be inverse Lipschitz-continuous, i.e., there exists a constant real number $L>0$ such that for all $\theta_1,\theta_2 \in \Theta$ we have
				$\Delta(\theta_1,\theta_2) \,\leq\, L \cdot \|\, f_{\theta_1} - f_{\theta_2} \|_{\scriptscriptstyle Q}.$
        \end{assumption}
        \begin{assumption}\label{ass:b6}
            (finite pseudo-dimension) $V_{\CF_0^{+}}<\infty.$
        \end{assumption}

        \begin{theorem}\label{theorem:algorithm-I}
		Assume A\ref{ass:b1}-A\ref{ass:b6}, then for all sample size $n \in \NN$ and integers $q\leq m$ we have $\PP\big(\,\theta^* \in \Theta_{n}^{(1)}\,\big)\, = \, \nicefrac{q}{m}$.
		  In addition, if\, $q <m$, then $(\Theta_{n }^{(1)})_{n \in \NN}$ is strongly uniformly consistent, i.e., by 
        denoting $B(\theta^*, \varepsilon) \doteq \{\hspace{0.3mm}\theta \in \Theta\;|\;\Delta(\theta,\theta^*)< \varepsilon\hspace{0.3mm}\}$, we have for 
        all $\varepsilon > 0$ that
         \begin{equation}\label{eq:strong-uniform-consistency}
         \PP \bigg( \bigcup_{n=1}^\infty \bigcap_{k=n}^\infty \big\{ \widehat{\Theta}_k^{(1)} \subseteq B(\theta^*, \varepsilon)\big\}  \bigg) = 1.
         \end{equation}
\end{theorem}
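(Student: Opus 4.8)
The plan is to treat the exact-coverage claim and the consistency claim separately. For coverage, I would apply Theorem~\ref{theorem:exact-confidence} with $q_1=1$ and $q_2=q$: the ranking function in \eqref{eq:ranking-function} satisfies P\ref{ass:p1}--P\ref{ass:p2}, so under A\ref{ass:b1}--A\ref{ass:b3} one gets $\PP(\theta^*\in\Theta_n^{(1)})=(q-1+1)/m=q/m$ for every $n$ and every $q\le m$. For consistency (the event in \eqref{eq:strong-uniform-consistency}, with $\widehat{\Theta}_k^{(1)}=\Theta_k^{(1)}$), the first step is to reduce to the $\mathcal{L}_2(\QX)$ metric via A\ref{ass:b5}: since $\Delta(\theta,\theta^*)\le L\,\|f_\theta-f_{\theta^*}\|_Q$, it suffices to show that, almost surely, for every $\varepsilon>0$ there is a random $N$ such that for all $k\ge N$ no $\theta$ with $\|f_\theta-f_{\theta^*}\|_Q\ge\delta:=\varepsilon/L$ belongs to $\Theta_k^{(1)}$; then $\Theta_k^{(1)}\subseteq\{\theta:\|f_\theta-f_{\theta^*}\|_Q<\delta\}\subseteq B(\theta^*,\varepsilon)$. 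Writing $\|\cdot\|_{Q_k}$ for the seminorm of the empirical measure of $X_1,\dots,X_k$, we have $Z_k^{(0)}(\theta)=\|f_\theta-f_{*,k}^{(0)}\|_{Q_k}^2$ and $Z_k^{(j)}(\theta)=\|f_\theta-f_{\theta,k}^{(j)}\|_{Q_k}^2$, and a $\theta$ is excluded from $\Theta_k^{(1)}$ whenever $Z_k^{(0)}(\theta)$ strictly exceeds each $Z_k^{(j)}(\theta)$, $j\in[m-1]$ --- then the rank equals $m>q$, and since the inequalities are strict the tie-break by $\pi$ is irrelevant. So the goal becomes: uniformly over $\{\theta:\|f_\theta-f_{\theta^*}\|_Q\ge\delta\}$, eventually $Z_k^{(0)}(\theta)\ge c_1>\sup_{\theta\in\Theta}\max_{j\in[m-1]}Z_k^{(j)}(\theta)$ for some $c_1=c_1(\delta)>0$.

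I would derive this from three almost-sure uniform limits, each an application of the strong uniform law of large numbers for a uniformly bounded class of finite pseudo-dimension. (i) $\sup_{f,g\in\CF_0}\bigl|\,\|f-g\|_{Q_k}^2-\|f-g\|_Q^2\,\bigr|\to0$, using that $\{(f-g)^2:f,g\in\CF_0\}$ is bounded by $4$ and that finite pseudo-dimension is preserved under differences and products (A\ref{ass:b6}). (ii) $\|f_{*,k}^{(0)}-f^*\|_Q^2\to0$, via the standard ERM oracle inequality: because $f^*\in\CF_0$ (A\ref{ass:b2}) and $\EE(f(X)-Y)^2-\EE(f^*(X)-Y)^2=\|f-f^*\|_Q^2$, one has $\|f_{*,k}^{(0)}-f^*\|_Q^2\le 2\sup_{f\in\CF_0}\bigl|\tfrac1k\sum_{i=1}^k(f(X_i)-Y_i)^2-\EE(f(X)-Y)^2\bigr|\to0$. (iii) $\sup_{\theta\in\Theta}\max_{j\in[m-1]}\|f_{\theta,k}^{(j)}-f_\theta\|_Q^2\to0$: for fixed $\theta,j$ the pairs $(X_i,Y_{i,j}(\theta))_{i\le k}$ are i.i.d.\ with the same input marginal $\QX$ and regression function $f_\theta\in\CF_0$ (cf.\ \eqref{eq:regression-function}), so the same oracle inequality gives $\|f_{\theta,k}^{(j)}-f_\theta\|_Q^2\le 2\sup_{f\in\CF_0}\bigl|\tfrac1k\sum_{i=1}^k g_{f,\theta}(X_i,U_{i,j})-\EE\,g_{f,\theta}(X,U)\bigr|$ with $g_{f,\theta}(x,u):=(f(x)-\sign(f_\theta(x)+u))^2$, and the right-hand side is bounded uniformly in $\theta$ (and over the finitely many $j$) by the ULLN for the class $\{g_{f,\theta}:f\in\CF_0,\ \theta\in\Theta\}$.

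The main obstacle is exactly step (iii): one must verify that $\{g_{f,\theta}\}$ is a uniformly bounded class of controlled complexity, and here the $\theta$-dependence of the resampled outputs through $\sign(f_\theta(\cdot)+\cdot)$ has to be reconciled with the pseudo-dimension hypothesis A\ref{ass:b6}. My argument would be that the $\{-1,+1\}$-valued family of functions $b_\theta(x,u):=\sign(f_\theta(x)+u)$, $\theta\in\Theta$, has finite VC dimension, because the associated sets $\{(x,u):u>-f_\theta(x)\}$ are, after the measure-preserving substitution $u\mapsto -u$, the (open) subgraphs of the functions in $\CF_0$, whose subgraph collection has VC dimension controlled by $V_{\CF_0^{+}}<\infty$; then, writing $g_{f,\theta}=f^2+1-2f\cdot b_\theta$ and using that finite pseudo-dimension is preserved under sums, products and passage to subclasses, $\{g_{f,\theta}\}$ has finite pseudo-dimension and is bounded by $4$, so the uniform deviation bound with polynomially growing covering numbers, combined with Borel--Cantelli, yields the claimed a.s.\ uniform convergence. (Throughout, the asymptotics are read on a common probability space carrying the infinite i.i.d.\ sequences $(X_i,Y_i)_{i\ge1}$, $(U_{i,j})_{i\ge1,\,j\in[m-1]}$ and the fixed permutation $\pi$.)

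Granting (i)--(iii), the assembly is routine. On the intersection of the three almost-sure events and for $k$ large enough: for every $\theta$ with $\|f_\theta-f_{\theta^*}\|_Q\ge\delta$ the triangle inequality in $\mathcal{L}_2(\QX)$ gives $\|f_\theta-f_{*,k}^{(0)}\|_Q\ge\|f_\theta-f_{\theta^*}\|_Q-\|f^*-f_{*,k}^{(0)}\|_Q\ge\delta-o(1)$ uniformly by (ii), and then (i) upgrades this to $Z_k^{(0)}(\theta)\ge\|f_\theta-f_{*,k}^{(0)}\|_Q^2-o(1)\ge\delta^2/2=:c_1$; meanwhile (i) and (iii) give $\sup_{\theta}\max_{j\in[m-1]}Z_k^{(j)}(\theta)\le\sup_{\theta}\max_{j}\|f_\theta-f_{\theta,k}^{(j)}\|_Q^2+o(1)\to0$. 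Hence for all large $k$ and all such $\theta$ we have $Z_k^{(0)}(\theta)>Z_k^{(j)}(\theta)$ strictly for every $j\in[m-1]$, so the rank of $\theta$ equals $m>q$ (this is where $q<m$ enters) and $\theta\notin\Theta_k^{(1)}$; together with the A\ref{ass:b5}-reduction this gives \eqref{eq:strong-uniform-consistency}.
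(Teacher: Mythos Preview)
Your proposal is correct. The coverage part and the reduction to the $\mathcal{L}_2(\QX)$ metric via A\ref{ass:b5} match the paper exactly (its Lemma~\ref{lemma:Lipschitz-cont}). The consistency argument, however, differs from the paper's in how the alternative reference variables $Z_k^{(j)}(\theta)$ are controlled \emph{uniformly in $\theta$}. The paper (Lemma~\ref{lemma:uniform-bound}) adds and subtracts the expectations $\EE[(f_\theta(X)-f_{\theta,n}^{(j)}(X))^2]$, invokes a deterministic rate bound $a_n$ for the expected ERM $L_2$ error (Theorem~\ref{thm:expect}, i.e., \cite[Theorem~11.5]{gyorfi2002distribution}), and runs the ULLN only on the class $\CG=\{(f-g)^2:f,g\in\CF_0\}$ of functions of $X$ alone (Lemma~\ref{lemma:covering-bound}). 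You instead establish directly that $\sup_{\theta}\max_{j}\|f_{\theta,k}^{(j)}-f_\theta\|_Q^2\to0$ a.s., by applying a ULLN to the enlarged class $\{g_{f,\theta}(x,u)=(f(x)-\sign(f_\theta(x)+u))^2\}$ of functions of $(X,U)$, with the key observation that $\{b_\theta(x,u)=\sign(f_\theta(x)+u)\}$ is a VC class precisely because its positivity sets are, up to a reflection in $u$, the subgraphs of $\CF_0$ and $V_{\CF_0^{+}}<\infty$. Your route is more self-contained (no external rate theorem needed) and makes the uniformity in $\theta$ completely transparent; the paper's route keeps the covering-number work on the smaller class $\CG$ but leans on Theorem~\ref{thm:expect}.

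One correction of language: your repeated claim that ``finite pseudo-dimension is preserved under sums/differences/products'' is not true as stated (pseudo-dimension can blow up under such operations). What \emph{is} preserved is a polynomial bound on the $L_1$ covering numbers---this is exactly what the paper proves in Lemma~\ref{lemma:covering-bound} for $\CG$, and the same device (bounding the covering number of the combined class by a product of covering numbers of the constituent classes) handles your $\{g_{f,\theta}\}$ once you have the VC bound for $\{b_\theta\}$. With that wording fixed, your argument goes through.
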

	
	Observe that Theorem \ref{theorem:algorithm-I} provides uniform stochastic guarantees, i.e., for every $\varepsilon > 0$ with probability $1$ the confidence regions are included in $B(\theta^*, \varepsilon)$ with at most finitely many exceptions. We made only very mild statistical and structural assumptions. The function class can be rich, e.g., it can be a subset of an infinite dimensional vector space; only the pseudo-dimension of the model class needs to be bounded, therefore the distribution of the sample is mildly restricted. The inverse Lipschitz condition is required intuitively to ensure that parameters, whose models are similar in the $\mathcal{L}_2(\QX)$ sense, should be close to each other.
     An important observation is that the confidence region is built around the ERM estimator, because $f_{\hat{\theta}} = f_{\hat{\theta},n}^{(0)}$ and $Z_n^{(0)}(\hat{\theta})=0$, henceforth the rank equals to $1$ for $\hat{\theta}$ with high probability.

	A quantitative version of Theorem \ref{theorem:algorithm-I} is also formulated to provide PAC guarantees. The proofs, including the constants involved, are presented in Appendix \ref{app:algorithm-I}.
	\begin{theorem}\label{theorem:quantitative-uniform}
		Assume A\ref{ass:b1}-A\ref{ass:b6} and $q <m$. For all $\varepsilon >0$ there exists $N_0$ such that for $n \geq N_0$:
		\begin{equation}\label{eq:PAC-uniform}
			\PP \,\big( \,\Theta_{n}^{(1)} \subseteq B(\theta^*, \varepsilon)\,\big) \,\geq\, 1- C_1\,e^{-n\lambda_1},
		\end{equation}
		where $C_1$ depends only on $\varepsilon$ and  $V_{\CF_0^{+}}$, and $\lambda_1$ depends only on $\varepsilon$ and the Lipschitz constant.
	\end{theorem}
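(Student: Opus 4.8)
The plan is to run the qualitative argument behind Theorem \ref{theorem:algorithm-I} with explicit tail bounds at every step, so that the "with probability one, eventually" conclusion is upgraded to "with probability at least $1-C_1e^{-n\lambda_1}$, for $n\ge N_0$". The event $\{\Theta_n^{(1)}\subseteq B(\theta^*,\varepsilon)\}$ fails exactly when some $\theta\notin B(\theta^*,\varepsilon)$ survives the test, i.e.\ when $Z_n^{(0)}(\theta)$ is \emph{not} the $\prec_\pi$-largest among $\{Z_n^{(j)}(\theta)\}_{j=0}^{m-1}$. By the inverse Lipschitz condition A\ref{ass:b5}, $\Delta(\theta,\theta^*)\ge\varepsilon$ forces $\|f_\theta-f^*\|_Q\ge\varepsilon/L$, so it suffices to bound the probability that there exists $f\in\CF_0$ with $\|f-f^*\|_Q\ge\varepsilon/L$ whose reference variable $Z_n^{(0)}$ is dominated by at least one of the $m-1$ alternatives $Z_n^{(j)}(f)$, $j\ge1$. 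The heart of the matter is to show that (i) $Z_n^{(0)}(\theta)=\frac1n\sum_i(f_\theta(X_i)-f^{(0)}_{*,n}(X_i))^2$ is bounded below, uniformly over such $\theta$, by roughly $\|f_\theta-f^*\|_Q^2\ge(\varepsilon/L)^2$ minus a vanishing term, while (ii) each $Z_n^{(j)}(\theta)=\frac1n\sum_i(f_\theta(X_i)-f^{(j)}_{\theta,n}(X_i))^2$ is small, since $f^{(j)}_{\theta,n}$ is an ERM fit to data generated from $f_\theta$ itself.

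For both (i) and (ii) I would invoke a uniform deviation inequality for the squared-loss empirical process over $\CF_0$ with finite pseudo-dimension $V_{\CF_0^+}$ --- this is where A\ref{ass:b6} enters. Concretely, using a Vapnik--Chervonenkis / Pollard-type bound (of the form stated in Appendix \ref{app:theorems}), one gets
\begin{equation}
\PP\Bigl(\sup_{f\in\CF_0}\Bigl|\tfrac1n\textstyle\sum_{i=1}^n (f(X_i)-Y'_i)^2-\EE[(f(X)-Y')^2]\Bigr|>t\Bigr)\le C\,n^{V_{\CF_0^+}}e^{-cnt^2},
\end{equation}
valid for the original data $(X_i,Y_i)$ and, by the same argument applied conditionally on the permutation and the $U_{i,j}$, for each resampled dataset $(X_i,Y_{i,j}(\theta))$ with the corresponding population risk governed by $f_\theta$. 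Combining the ULLN with the basic identity $\EE[(f(X)-Y)^2]=\EE[(f(X)-f^*(X))^2]+\EE[(Y-f^*(X))^2]$ shows that on the good event the ERM estimate $f^{(0)}_{*,n}$ satisfies $\|f^{(0)}_{*,n}-f^*\|_Q^2\le 2t$, and similarly $\|f^{(j)}_{\theta,n}-f_\theta\|_Q^2\le 2t$; transferring these back to empirical norms (again paying a uniform-deviation term) yields $Z_n^{(0)}(\theta)\ge(\varepsilon/L)^2-O(t)$ and $Z_n^{(j)}(\theta)\le O(t)$ simultaneously for all relevant $\theta$ and all $j\in[m-1]$. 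Choosing $t$ a small fixed multiple of $(\varepsilon/L)^2$ makes $Z_n^{(0)}(\theta)$ strictly exceed every $Z_n^{(j)}(\theta)$, so the rank of every such $\theta$ is $m$ and it is excluded; this forces $\Theta_n^{(1)}\subseteq B(\theta^*,\varepsilon)$. The failure probability is then at most $m\cdot C\,n^{V_{\CF_0^+}}e^{-cn t^2}$, and absorbing the polynomial factor $n^{V_{\CF_0^+}}$ into the exponential for $n$ beyond some $N_0$ (depending on $\varepsilon$, $L$, $V_{\CF_0^+}$) gives the claimed bound $1-C_1e^{-n\lambda_1}$, with $C_1$ a function of $\varepsilon$ and $V_{\CF_0^+}$ and $\lambda_1$ a function of $\varepsilon$ and $L$ through $t\asymp(\varepsilon/L)^2$.

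The main obstacle I anticipate is handling the ERM fit $f^{(j)}_{\theta,n}$ \emph{uniformly in the candidate $\theta$ as well as in $j$}: the resampled labels $Y_{i,j}(\theta)$ depend on $\theta$, so one needs a deviation bound that is uniform over the whole family of conditional distributions indexed by $\CF_0$ at once, not just over $\CF_0$ for a fixed sampling law. I would resolve this by conditioning on $\{U_{i,j}\}$ and noting $Y_{i,j}(\theta)=\sign(f_\theta(X_i)+U_{i,j})$ is a fixed measurable function of $(X_i,U_{i,j})$, so that $(f(X_i)-Y_{i,j}(\theta))^2$ ranges over a class whose complexity is still controlled by $V_{\CF_0^+}$ (a product/composition of the subgraph classes of $\CF_0$ and the fixed sign maps), letting the same uniform inequality apply. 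A secondary nuisance is the $\prec_\pi$ tie-breaking, but since the dominating gap between $Z_n^{(0)}$ and the $Z_n^{(j)}$ is strict on the good event, ties do not occur there and the permutation plays no role; the degenerate case $q=m$ is excluded by hypothesis so at least one exclusion threshold is active. Everything else is bookkeeping of constants, which I would relegate to Appendix \ref{app:algorithm-I} as the statement indicates.
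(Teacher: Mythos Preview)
Your overall strategy---force $Z_n^{(0)}(\theta)>Z_n^{(j)}(\theta)$ uniformly over $\theta\notin B(\theta^*,\varepsilon)$ on a high-probability event and pass to parameters via A\ref{ass:b5}---matches the paper's, but the technical decomposition you propose is different and more laborious than what the paper actually does. The paper never applies a ULLN to the resampled risks $\frac1n\sum_i(f(X_i)-Y_{i,j}(\theta))^2$; instead it expands $Z_n^{(0)}(\theta)-Z_n^{(j)}(\theta)$ around $f^*$ and lower-bounds it by $\varepsilon^2/L^2-a_n-3A_n-4\sqrt{A_n+a_n}$, where $A_n\doteq\sup_{h\in\CG}\bigl|\frac1n\sum_i h(X_i)-\EE[h(X)]\bigr|$ with $\CG=\{(f-g)^2:f,g\in\CF_0\}$ is a single uniform deviation over the \emph{inputs only}, and $a_n=(c_1+(c_2+c_3\log n)V_{\CF_0^+})/n$ is the deterministic expected-ERM-error bound of Theorem~\ref{thm:expect}, valid for \emph{any} underlying regression function and hence automatically uniform in $\theta$ and independent of $j$. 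This completely circumvents the ``main obstacle'' you identify: no ULLN over the $\theta$-indexed family of resampled laws is needed, no union bound over $j$ (so $m$ does not enter $C_1$), and since Lemma~\ref{lemma:covering-bound} bounds the $\mathcal{L}_1$-covering number of $\CG$ by a constant independent of $n$ via Haussler's inequality, there is no $n^{V_{\CF_0^+}}$ prefactor to absorb either.

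Your route via the class $\{(x,u)\mapsto(f(x)-\sign(g(x)+u))^2:f,g\in\CF_0\}$ is plausible and could likely be completed, but it requires an additional covering-number argument you only sketch (the sign part is indeed the subgraph indicator class with VC dimension $V_{\CF_0^+}$, but the composition with $f$ still has to be handled), it introduces an $m$ factor in $C_1$ that contradicts the dependence stated in the theorem, and it yields a weaker tail bound with a polynomial prefactor. The paper's decomposition buys uniformity in $\theta$ and $j$ for free and gives the cleaner constants claimed.
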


    \subsection{Example: Perceptron and Generalized Linear Models}
	
	An important ERM estimator is the perceptron \citep{rosenblatt1958perceptron}, which is closely related to generalized linear models in statistics \citep{hastie2017generalized}. They apply an activation (or link) function on an affine transformation of the input vectors, i.e.,
    $f_{(\theta,b)}(x) \doteq \sigma(\theta\tr x + b)$ for $\theta, x \in \RR^d$ and $b \in \RR$.
    They can also be interpreted as simple neural networks with a single artificial neuron. 
    The classical approach of Rosenblatt applies the sign function as activation and its iterative learning-rate-free perceptron algorithm terminates in finite steps, for linearly separable data \citep{understandingML}.
    
    For potentially non-linearly separable data and general activation functions, one can apply a least-squares type approach. This however, typically leads to non-convex optimization problems.
    Approximate numerical solutions can be achieved by stochastic gradient descent type methods, for example, by the celebrated backpropagation algorithm \citep{lecun1988theoretical}.
	
    Now, let us use $\sigma(z) \,\doteq\, 2/(1+ e^{-z})-1$. We want to estimate the regression function of binary classification, in case of $\{-1,1\}$ valued classes. Let us consider $\CF_0$, the model class of 
	$\{f_{ ( \theta ,b)}\}$ functions with
    $(\theta\tr,b)\tr \in \RR^{d+1}$. One can show that the pseudo-dimension of neural networks with fixed structure is finite        \citep{anthony2009neural}, thus, as a special case, $V_{\CF_0^{+}} < \infty$, implying that, by Theorem          \ref{theorem:algorithm-II}, our method constructs non\hyp asymptotically exact confidence regions in $\CF_0$, 
    which are strongly uniformly consistent in $\mathcal{L}_2(\QX)$.

    The perceptron with the logistic type activation function above differs from the logistic regression estimator in the used error criterion only. The perceptron typically uses the least-squares estimator, whereas for logistic regression one needs to maximize the conditional likelihood function.
    In Section \ref{sec:numerical-tests} we present numerical experiments to compare the corresponding confidence regions based on the presented resampling framework.

    In Appendix \ref{app:model-classes} additional model classes, namely linear models, feed-forward neural networks and radial basis function networks, are considered as demonstrative examples. In these use cases the ERM-based method constructs
    confidence regions for the true regression function for every finite sample size, all of which are strongly uniformly consistent.

    \section{Local Averaging Estimators}
	
    Our resampling framework allows the use of any tractable regression function estimator. As another example, in this section we consider the kNN-based construction originally suggested by \cite{csajitamas2019}.
    We strengthen \cite[Theorem 2]{csajitamas2019} and prove strong pointwise consistency under milder distributional assumptions. Particularly, in Theorem \ref{theorem:algorithm-II} marginal distribution $\QX$ does not need to be absolutely continuous and the support of $\QX$ does not need to be compact. Further, we present Theorem \ref{theorem:quantitative-knn}, a quantitative result
    providing an exponential bound on the exclusion probability of false models.
	
	\subsection{The kNN Estimator}
	
	Let $\BX \subseteq \RR^d$ as above with the Euclidean metric. The kNN estimator is defined by 
	\vspace{-0.5mm}
	\begin{equation}
		{f}_n(x)\, \doteq\, \frac{1}{k_n} \sum_{i=1}^n Y_i\, \BI( X_i \in N_n(x,k_n)),
		\vspace{-0.5mm}
	\end{equation}
        where $\{(X_i,Y_i)\}$ is an i.i.d.\ sample and $N_n(x,k_n)$ denotes the set of $k_n$ closest points in $\{X_i\}_{i=1}^n$ to $x$. We assume that $\norm{X-X'}=\norm{X-X''}$ holds with probability zero for $i\neq j \in [n]$, where $\{X, \hspace{0.2mm}X'\!, X''\}$ are i.i.d. copies. This assumption is needed to define the $k_n$ closest neighors in $\{X_i\}_{i=1}^n$ $\QX$-almost surely. Note that this assumption can be eliminated by using random tie breaking for those $x \in \BX$ and $\{X_i, X_j\}$ for which we have $\norm{X_i - x} = \norm{X_j - x}$.
	By \cite[Theorem 23.7]{gyorfi2002distribution} if $k_n/n \to 0$ and $k_n \to \infty$ as $n\to \infty$, then 
	\vspace{-0.5mm}
	\begin{equation}
		\int \big(\,{f}_n(x)- f^*(x)\,\big)^2 \dd \QX(x)\, \xrightarrow{\,a.s.\,}\,0. 
		\vspace{-0.5mm}
	\end{equation}

	\subsection{Neighbor-Based Ranking}
	
	We follow the procedure in Section \ref{subsec:ERM-based} to generate alternative samples for a given candidate model. Then, we consider the kNN estimates for every dataset, i.e., we let
	\begin{equation}
		{f}_{\theta,n}^{(j)}(x)\, \doteq\, \frac{1}{k_n} \sum_{i=1}^n Y_{i,j}(\theta)\, \BI( X_i \in N_n(x,k_n)),
	\end{equation}
	for all $j =0,\dots,m-1$. Finally, we define the reference variables as in \eqref{def:reference-var}, the ranking function as in \eqref{eq:ranking-function} and the confidence region for integers $q$ and $m$ by
	\begin{equation}
		\label{eq:confidence-region2}
		\Theta_{n}^{(2)} \,\doteq\, \big\{ \, \theta \in \Theta \;|\;  \psi\big(\,\CD^{\pi}_0, \{ \CD^{\pi}_j(\theta) \}_{j \neq 0}\,\big) \leq q\, \big\}.
	\end{equation}
	The guarantees of the construction are summarized as:
	\begin{theorem}\label{theorem:algorithm-II}
		Assume A\ref{ass:b1}-A\ref{ass:b4}, then for all sample size $n \in \NN$ and integers $q\leq m$ we have $\PP\big(\,\theta^* \in \Theta_{\varrho,n }^{(2)}\,\big) = \nicefrac{q}{m}$.
	    Further, if $q <m$, 
		$\nicefrac{k_n}{n} \to 0$, and there exists $\delta >0$ such that $\nicefrac{k_n^2}{n^{(1+\delta)}} \to \infty$ as $n\to\infty$, then $(\,\Theta_{n }^{(2)}\,)_{n \in \NN}$ is strongly pointwise consistent, i.e., for $\theta\neq\theta^*$:
        \begin{equation}
            \PP\bigg( \bigcap_{n=1}^\infty \bigcup_{k=n}^\infty \big\{ \theta \in \widehat{\Theta}_k^{(2)}\big\} \bigg) = 0
        \end{equation}
            holds for all distribution of $(X,Y)$ such that
		$\PP\big(\hspace{0.3mm}\norm{X- X'}_2 \neq \norm{X-X''}_2\hspace{0.3mm}\big) =1,$
		where $X'$ and $X''$ are independent copies of $X$.
	\end{theorem}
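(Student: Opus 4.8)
The exact coverage statement $\PP(\theta^* \in \Theta_n^{(2)}) = q/m$ is immediate from Theorem~\ref{theorem:exact-confidence}: the ranking function defined in \eqref{eq:ranking-function} satisfies P\ref{ass:p1} and P\ref{ass:p2}, and the region \eqref{eq:confidence-region2} is precisely $\Theta_n^{\psi}$ with the choice $q_1=1$, $q_2=q$. So the content of the theorem is the strong pointwise consistency, and the plan is as follows. Fix $\theta\neq\theta^*$ and put $c\doteq\norm{f_\theta-f^*}_{\scriptscriptstyle Q}^2$, which is strictly positive by A\ref{ass:b3}. It suffices to exhibit, for each $n$, an event $G_n$ on which $\theta\notin\Theta_n^{(2)}$ and such that $G_n^{c}$ occurs only finitely often almost surely.

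First I would unwind the rank through the reference variables. Write $\hat g_n^{(j)}$ for the kNN estimate built from $\CD_j(\theta)$; here $\hat g_n^{(0)}$ uses the original sample and estimates $f^*$, while for $j\in[m-1]$ the pair sample $\{(X_i,Y_{i,j}(\theta))\}_{i=1}^n$ is i.i.d.\ with regression function $f_\theta$ by \eqref{eq:regression-function}. Set $E_n^{(j)}\doteq\frac1n\sum_{i=1}^n(\hat g_n^{(j)}(X_i)-g^{(j)}(X_i))^2$ with $g^{(0)}\doteq f^*$ and $g^{(j)}\doteq f_\theta$ for $j\ge1$; then $Z_n^{(j)}(\theta)=E_n^{(j)}$ for $j\in[m-1]$ by \eqref{def:reference-var}, whereas for $j=0$ the elementary bound $(a-b)^2\ge\frac12 a^2-b^2$ gives $Z_n^{(0)}(\theta)\ge\frac1{2n}\sum_i(f_\theta(X_i)-f^*(X_i))^2-E_n^{(0)}$. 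Let
\[
  G_n \doteq \Bigl\{\tfrac1n\sum_i(f_\theta(X_i)-f^*(X_i))^2 > c/2\Bigr\}\ \cap\ \bigcap_{j=0}^{m-1}\bigl\{E_n^{(j)} < c/8\bigr\}.
\]
On $G_n$ one has $Z_n^{(0)}(\theta) > \tfrac12\cdot\tfrac c2-\tfrac c8 = \tfrac c8 > Z_n^{(j)}(\theta)$ for every $j\in[m-1]$, so the rank of $\theta$ equals $m$; since $q<m$ this means $\theta\notin\Theta_n^{(2)}$, as desired. Hence $\{\theta\in\Theta_n^{(2)}\}\subseteq G_n^c$, and $\{G_n^c\text{ i.o.}\}$ is contained in $\{\tfrac1n\sum_i(f_\theta(X_i)-f^*(X_i))^2\le c/2\text{ i.o.}\}$ together with the finitely many events $\{E_n^{(j)}\ge c/8\text{ i.o.}\}$. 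The first has probability zero by the strong law, since $\tfrac1n\sum_i(f_\theta(X_i)-f^*(X_i))^2\to c>c/2$ a.s.\ (bounded i.i.d.\ summands). So the whole problem reduces to proving $E_n^{(j)}\to 0$ almost surely for each $j$.

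For this I would split into (i) convergence in expectation and (ii) concentration plus Borel--Cantelli. For (i), the obstacle is that $E_n^{(j)}$ is the empirical kNN error \emph{at the training points}, not the integrated error $\int(\hat g_n^{(j)}(x)-g^{(j)}(x))^2\dd{\QX}(x)$ controlled by \cite[Theorem 23.7]{gyorfi2002distribution}; at $X_\ell$ the estimate $\hat g_n^{(j)}(X_\ell)$ uses $X_\ell$ itself as one of its $k_n$ neighbours. I would bridge this by a leave-one-out comparison: writing $\hat g_n^{-\ell}$ for the kNN estimate based on the $k_n$ nearest neighbours of $X_\ell$ among $\{X_i\}_{i\ne\ell}$, one has $|\hat g_n^{(j)}(X_\ell)-\hat g_n^{-\ell}(X_\ell)|\le 2/k_n$, whence $E_n^{(j)}\le 2\cdot\frac1n\sum_\ell(\hat g_n^{-\ell}(X_\ell)-g^{(j)}(X_\ell))^2+8/k_n^2$, and $\EE\bigl[\frac1n\sum_\ell(\hat g_n^{-\ell}(X_\ell)-g^{(j)}(X_\ell))^2\bigr]=\EE\!\int(\hat g_{n-1}^{(j)}(x)-g^{(j)}(x))^2\dd{\QX}(x)\to 0$ by \cite[Theorem 23.7]{gyorfi2002distribution} and bounded convergence; this step needs only $k_n/n\to0$, $k_n\to\infty$, and is fully distribution-free (no absolute continuity or compact support of $\QX$). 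So $\EE[E_n^{(j)}]\to 0$. For (ii), $E_n^{(j)}$ is a function of the i.i.d.\ sample (for $j\ge1$ one may condition on $\{X_i\}$ and regard it as a function of the independent perturbations $\{U_{i,j}\}$) with the bounded-difference property: altering one coordinate changes at most $\gamma_d k_n$ of the $n$ summands --- by the standard geometric fact that a fixed point of $\RR^d$ is among the $k_n$ nearest neighbours of at most $\gamma_d k_n$ of the $X_i$, with $\gamma_d$ depending only on $d$ --- each by $O(1/k_n)$, so the increments are $O(1/n)$. McDiarmid's inequality then gives $\PP(E_n^{(j)}\ge\EE E_n^{(j)}+t)\le\exp(-\kappa n t^2)$ for some $\kappa>0$ depending only on $d$, and, once $\EE E_n^{(j)}<\varepsilon/2$, this yields $\sum_n\PP(E_n^{(j)}\ge\varepsilon)<\infty$ for every $\varepsilon>0$; the first Borel--Cantelli lemma finishes. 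The growth conditions $k_n/n\to0$ and $k_n^2/n^{1+\delta}\to\infty$ comfortably cover the requirements of these two steps (the remainder $8/k_n^2$ and the concentration estimate).

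The hard part is this last item: upgrading the classical in-expectation, integrated $L_2$-consistency of the kNN rule to an almost sure bound on the empirical error $\frac1n\sum_i(\hat g_n^{(j)}(X_i)-g^{(j)}(X_i))^2$, while keeping the statement distribution-free. The leave-one-out reduction disposes of the training-point artefact, but the a.s.\ conclusion genuinely requires the exponential concentration, whose bounded-difference constant rests on the neighbour-count lemma for nearest neighbours in $\RR^d$. Everything else --- exact coverage via Theorem~\ref{theorem:exact-confidence}, the reference-variable unwinding, and the $(a-b)^2\ge\frac12 a^2-b^2$ and strong-law steps --- is routine once this ingredient is in place.
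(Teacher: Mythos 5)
Your proposal is correct and follows the same overall architecture as the paper's proof: exact coverage from Theorem~\ref{theorem:exact-confidence}; a deterministic lower bound on $Z_n^{(0)}(\theta)-Z_n^{(j)}(\theta)$ in terms of $\frac1n\sum_i(f_\theta(X_i)-f^*(X_i))^2$ minus empirical kNN estimation errors; the strong law for the first term; a leave-one-out reduction of the expected empirical error to the in-expectation integrated consistency of the kNN rule (your eq.\ mirrors the paper's computation, which compares $f_n(X_n)$ to a kNN estimate built from the other $n-1$ points and an independent evaluation point, then invokes Theorem~\ref{thm:weak-consistency}); and finally McDiarmid plus Borel--Cantelli. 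The differences in the reduction step (working with the squared empirical error and $(a-b)^2\ge\frac12a^2-b^2$ rather than expanding cross terms into $L_1$ errors) are cosmetic.

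The one genuinely different ingredient is your bounded-difference constant. The paper uses the sup-norm stability $|f_n^{\CD_1}(x)-f_n^{\CD_2}(x)|\le 2/k_n$, hence $c_i=2/k_n$ and the tail $\exp(-k_n^2\varepsilon^2/(2n))$; this is exactly why the hypothesis $k_n^2/n^{1+\delta}\to\infty$ appears, to make these tails summable. You instead invoke Stone's neighbour-count lemma (a fixed point is among the $k_n$ nearest neighbours of at most $\gamma_d k_n$ of the sample points), so that altering one datum perturbs only $O(\gamma_d k_n)$ of the $n$ summands, each by $O(1/k_n)$, giving $c_i=O(1/n)$ and a tail $\exp(-\kappa n\varepsilon^2)$. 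This is a sharper concentration estimate: if carried out carefully (the affected evaluation points are at most $2\gamma_d k_n$, counting both the removed and the inserted point, and the $i$-th summand itself), it makes the condition $k_n^2/n^{1+\delta}\to\infty$ superfluous --- $k_n\to\infty$ and $k_n/n\to 0$ would suffice for the consistency claim. So your argument is not only valid but proves a slightly stronger statement than the one in the theorem.
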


	The proof can be found in Appendix \ref{app:algorithm-II}. Observe that the statement is distribution-free as $\CF_0$ may contain all possible regression functions and ties can be resolved by a random permutation. As before, the exact confidence level is non-asymptotically guaranteed and easily adjustable. Pointwise consistency ensures that false parameters are included infinitely many times in the confidence regions with probability $0$. The conditions on $k_n$ can be easily satisfied, e.g., $k_n=\floor*{n^\alpha}$ for $\alpha \in (\nicefrac{1}{2},1)$ are appropriate choices. In the quantitative version of Theorem \ref{theorem:algorithm-II} we prove an exponential bound on the exclusion probability for any $\theta \neq \theta^*$\!. 
	\begin{theorem}\label{theorem:quantitative-knn}
		Assume that A\ref{ass:b1}-A\ref{ass:b4} and $q <m$ hold. In addition, $\nicefrac{k_n}{n} \to 0$, and there exists $\delta >0$ such that $\nicefrac{k_n^2}{n^{(1+\delta)}} \to \infty$ as $n\to\infty$. For every distribution of $(X,Y)$ such that\,
		$\PP\big(\norm{X- X'}_2 \neq \norm{X-X''}_2\big) =1,$
		where $X'$ and $X''$ are independent copies of $X$, for a given $\theta \in \Theta$, $\theta\neq \theta^*$ there exists $N_0$ such that for $n \geq N_0$  we have
		\begin{equation}\label{eq:PAC-pointwise}
			\PP\big(\, \theta \notin \Theta_{n}^{(2)}\,\big)\,\geq\, 1 -C_2\, e^{-\nicefrac{k_n^2\lambda_2}{n}},
		\end{equation}
		where $C_2$ depends only on $m$, the number of datasets, and $\lambda_2$ depends on $\kappa \doteq \|f_\theta - f^*\|_{\scriptscriptstyle Q}$.
	\end{theorem}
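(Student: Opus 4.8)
The coverage identity is already contained in Theorem~\ref{theorem:algorithm-II}; the substance is the exclusion bound~\eqref{eq:PAC-pointwise}. The plan is to reduce $\{\theta\notin\Theta_n^{(2)}\}$ to a deviation statement for the reference variables. Since the rank of $\theta$ equals $1+\sum_{j=1}^{m-1}\BI\big(Z_n^{(0)}(\theta)\succ_\pi Z_n^{(j)}(\theta)\big)$ and $q\le m-1$, the inclusion
\[
\{\theta\notin\Theta_n^{(2)}\}\ \supseteq\ \{Z_n^{(0)}(\theta)>c_\kappa\}\ \cap\ \bigcap_{j=1}^{m-1}\{Z_n^{(j)}(\theta)<c_\kappa\},\qquad c_\kappa\doteq\tfrac12\kappa^2,
\]
holds, because on the right-hand side $Z_n^{(0)}(\theta)$ strictly exceeds every $Z_n^{(j)}(\theta)$, so the rank is $m>q$ irrespective of how $\prec_\pi$ breaks ties. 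A union bound then reduces the task to bounding $\PP\big(Z_n^{(0)}(\theta)\le c_\kappa\big)$ and each $\PP\big(Z_n^{(j)}(\theta)\ge c_\kappa\big)$, $j\in[m-1]$, by a multiple of $e^{-k_n^2\lambda_2/n}$. Recall that $Z_n^{(0)}(\theta)=\frac1n\sum_i\big(f_\theta(X_i)-f_{*,n}^{(0)}(X_i)\big)^2$ is built from the kNN estimate of $f^*$ on the original sample, while $Z_n^{(j)}(\theta)=\frac1n\sum_i\big(f_\theta(X_i)-f_{\theta,n}^{(j)}(X_i)\big)^2$, $j\ge1$, is built from a kNN estimate of $f_\theta$ on data whose labels are generated from $f_\theta$.

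For the means I would invoke the universal $L_2$-consistency of the kNN regression estimate, \cite[Theorem~23.7]{gyorfi2002distribution}, which holds for an \emph{arbitrary} marginal $\QX$ under the tie condition --- precisely why neither absolute continuity nor compactness of $\supp\QX$ is required. It gives $\EE\!\int\big(f_{*,n}^{(0)}-f^*\big)^2\dd\QX\to0$ and the analogue with $f_\theta$ and the resampled labels; passing to the in-sample averages $\frac1n\sum_i(\cdot)^2$ costs only $O(1/k_n)$ (each point being its own nearest neighbour). Expanding the square in $Z_n^{(0)}(\theta)$ around $f^*(X_i)$ and using Cauchy--Schwarz then yields $\EE[Z_n^{(0)}(\theta)]\to\kappa^2$ and, similarly, $\EE[Z_n^{(j)}(\theta)]\to0$ for $j\in[m-1]$. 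Hence there is an $N_0$ --- depending on the distribution, on $d$ and on $\kappa$ --- with $\EE[Z_n^{(0)}(\theta)]>\tfrac34\kappa^2$ and $\EE[Z_n^{(j)}(\theta)]<\tfrac14\kappa^2$ for $n\ge N_0$, so that each event in the union bound is a deviation of $Z_n^{(j)}(\theta)$ from its mean by at least $\kappa^2/4$.

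The heart of the argument is an exponential concentration of $Z_n^{(j)}(\theta)$ at the rate $k_n^2/n$. I would condition on the inputs $\{X_i\}_{i=1}^n$ --- which freezes all the neighbourhood sets $N_n(X_i,k_n)$ and removes the awkward global dependence of the kNN map on a single input --- and apply McDiarmid's bounded differences inequality in the remaining randomness: the labels $Y_i$ if $j=0$, and the perturbations $U_{i,j}$ (equivalently $Y_{i,j}(\theta)$) if $j\ge1$, all conditionally independent. Changing one such coordinate changes the relevant kNN estimate by at most $2/k_n$ at every point; bounding the number of affected evaluation points $X_i$ crudely by $n$ and using $|a^2-b^2|\le4|a-b|$ for $|a|,|b|\le2$, it changes $Z_n^{(j)}(\theta)$ by at most $8/k_n$. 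With $n$ coordinates this gives the variance proxy $64\,n/k_n^2$, hence $\PP\big(\,|Z_n^{(j)}(\theta)-\EE[Z_n^{(j)}(\theta)\mid\{X_i\}]|\ge t\mid\{X_i\}\,\big)\le2\exp\!\big(-t^2k_n^2/(32n)\big)$, a \emph{dimension-free} bound. It then remains to control the conditional mean $\EE[Z_n^{(j)}(\theta)\mid\{X_i\}]$: it splits into the i.i.d.\ average $\frac1n\sum_i(f_\theta(X_i)-f^*(X_i))^2$, which concentrates around $\kappa^2$ at the faster rate $e^{-cn}$ by Hoeffding, and an in-sample kNN bias term of vanishing expectation, for which I would use the classical geometric fact that each $X_l$ lies in at most $\gamma_d k_n$ of the neighbourhoods $N_n(X_i,k_n)$ to obtain a $c_d/n$ bounded difference and an $e^{-c_d n}$ deviation bound. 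Since $k_n/n\to0$ gives $n\gg k_n^2/n$, this dimension-dependent term is dominated by $e^{-k_n^2\lambda_2/n}$ for $n$ large, so $d$ enters only through $N_0$, and $\lambda_2$ can be taken of the form $c\,\kappa^4$.

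Taking $t=\kappa^2/4$ and summing over $j=0,\dots,m-1$ finally gives~\eqref{eq:PAC-pointwise} with $C_2$ of order $m$ and $\lambda_2$ of order $\kappa^4$; the hypothesis $k_n^2/n^{1+\delta}\to\infty$ (with $k_n/n\to0$) is used only to force $k_n^2/n\to\infty$, so that the bound is non-vacuous --- and, in the almost sure version of Theorem~\ref{theorem:algorithm-II}, summable for Borel--Cantelli. The step I expect to be the main obstacle is keeping the constants honest: a kNN estimator depends on the whole input sample through the neighbourhood structure, so a single bounded-differences bound over all $2n$ coordinates is inherently dimension-dependent, and the only route I see to the dimension-free rate $k_n^2/n$ with $\lambda_2$ a function of $\kappa$ alone is to condition on the inputs, handle the label noise cleanly, and then verify --- the genuinely technical point --- that the residual input randomness in the conditional mean (notably the in-sample kNN bias for $f^*$) concentrates strictly faster than $k_n^2/n$, which is exactly where a stability lemma for nearest neighbours is needed.
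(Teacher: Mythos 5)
Your proposal is correct and shares the paper's overall architecture (reduce exclusion to a $\sim\kappa^2$ separation of the reference variables, union bound over $j\in\{0,\dots,m-1\}$, Hoeffding for the term $\tfrac1n\sum_i(f_\theta(X_i)-f^*(X_i))^2$, McDiarmid for the kNN error terms, and the weak $L_2$ consistency of kNN to push the relevant means below $\kappa^2/4$ for $n\ge N_0$), but the concentration step is handled differently. The paper works with the empirical $\mathcal{L}_1$ errors $\tfrac1n\sum_i|f^*(X_i)-f^{(0)}_{*,n}(X_i)|$ and $\tfrac1n\sum_i|f_\theta(X_i)-f^{(j)}_{\theta,n}(X_i)|$ (arising from its lower bound on $Z_n^{(0)}(\theta)-Z_n^{(j)}(\theta)$) and applies McDiarmid \emph{once}, unconditionally, to the full sample of $n$ pairs: changing one pair $(x_i,y_i)$ swaps at most one neighbour in or out of $N_n(x,k_n)$ for any fixed query $x$, so the estimate moves by $O(1/k_n)$ uniformly in $x$ and the averaged error has bounded differences $O(1/k_n)$, giving the dimension-free rate $e^{-ck_n^2\kappa^2/n}$ directly. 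This means the obstacle you flag at the end --- that a bounded-differences bound over all coordinates is ``inherently dimension-dependent'' --- is not actually there; the per-query perturbation is already $O(1/k_n)$ without counting how many neighbourhoods a point belongs to. Your alternative (condition on the inputs, McDiarmid in the labels with increments $8/k_n$, then a second concentration of the conditional mean via Stone's covering constant $\gamma_d$ at the faster rate $e^{-c_d n}$) is also valid and correctly identifies Stone's lemma as the tool for the residual input randomness, but it is a longer route to the same $\lambda_2$ of order $\kappa^4$ (the paper's $\lambda_2=\kappa^2/(2\cdot48^2)$ is stated for $\kappa=\|f_\theta-f^*\|_Q^2$, so the two agree). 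The one point worth tightening in your write-up is the transition from the $L_2$-consistency of the kNN estimate at an independent query point to the in-sample averages: the paper does this by an explicit leave-one-out comparison with a $(k_n-1)$-NN estimate at cost $2/k_n$, which is cleaner than the ``each point is its own nearest neighbour'' heuristic you gesture at.
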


	In Theorem \ref{theorem:quantitative-knn} the required number of data points is not quantified explicitly for the exponential bound. In the appendix it is argued that $n$ should be large enough to reduce the expected losses below $\nicefrac{\kappa}{8}$, which is ensured by the weak consistency of kNN estimators, by Theorem \ref{thm:weak-consistency}.  However, this weak consistency can be arbitrarily slow, hence, the sample size that is needed for \eqref{eq:PAC-pointwise} depends on the distribution. A possible bound on the minimum sample size can be found via rate of convergence results, which restrict the model class (e.g., by requiring Lipschitzness) and make distributional assumptions (e.g., limited conditional variance function, bounded support). For such results we refer to \cite[Theorem 14.5]{biau2015lectures}, \cite[Theorem 6]{doring2017rate} and \cite[Theorem 6.2]{gyorfi2002distribution}.

    \section{Numerical Experiments}\label{sec:numerical-tests}
		
	We carried out numerical experiments on synthetic datasets to demonstrate the suggested algorithms. In this example we considered a logistic model class defined as
	\begin{equation}\label{eq:logistic}
		\CF_0 \doteq \Big\{\hspace{0.3mm}f_{(a,b)}(x)\doteq 2/(1 + e^{-(b\cdot x +a)}) -1 \;\big|\; a, b \in \RR\hspace{0.3mm}\Big\}
	\end{equation}
	and compared the perceptron-based and the kNN-based method to the asymptotic normal confidence ellipsoids around the MLE, whose formulation is included in Appendix \ref{app:mle}.  
    Output variable $Y$ took values $-1$ and $1$ with probability $\nicefrac{1}{2}$ each. 
    The conditional distribution of $X$ on $Y$ was Gaussian with mean $Y$ and variance $1$, in which case $f^*(x) = 2/(1- e^{-2x})-1$ and $[a_*,b_*] = [0,2]$, thus the true parameter (vector) was included in $\CF_0$. The sample size was set to $500$ and we used $m=40$ as the resampling parameter. We tested the model parameters on a fine grid and plotted their relative ranks, which indicate the rejection probabilities of the test, see the color bar in Figure \ref{fig:knn}. The results of the perceptron-based construction and the kNN-based construction are presented in Figure \ref{fig:perceptron} and Figure \ref{fig:knn}. The standard asymptotic confidence ellipsoids centered around the MLE are shown in Figure \ref{fig:Asymptotic}. 
	
	Recall that our resampling framework can use any specific regression function estimator, therefore we also tested an MLE-based algorithm, for which the reference variables, \eqref{def:reference-var},  are defined with the help of the MLE for all parameters.  Figure \ref{fig:ML} shows the results for this MLE-based resampling method. By Theorem \ref{theorem:exact-confidence} this variant constructs exact confidence regions for any finite sample similarly to the kNN-based and ERM-based versions. 
	
    The plots illustrate that our resampling framework produces comparable region estimates to the asymptotic ellipsoids. It is clear that those methods which use a priori knowledge about the regression function's parametric structure construct tighter bounds, if the a priori information is correct.
    We also plotted the point estimates and the true parameters on the figures. Note that the kNN estimate was not included      in $\CF_0$, therefore it is not marked.

\begin{figure*}[!t]
    \centering
		\begin{subfloat}[Perceptron-Based Levels\label{fig:perceptron}]{
        \centering
        \includegraphics[width=2.6in]{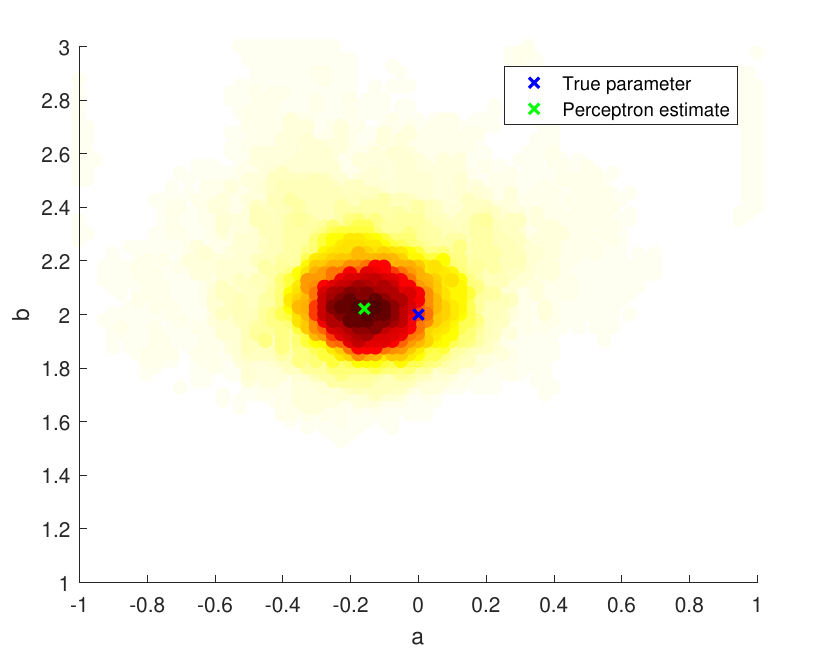}}
            \end{subfloat}
            \quad
		\begin{subfloat}[kNN-Based Levels\label{fig:knn}]{\centering
        \includegraphics[width=2.6in]{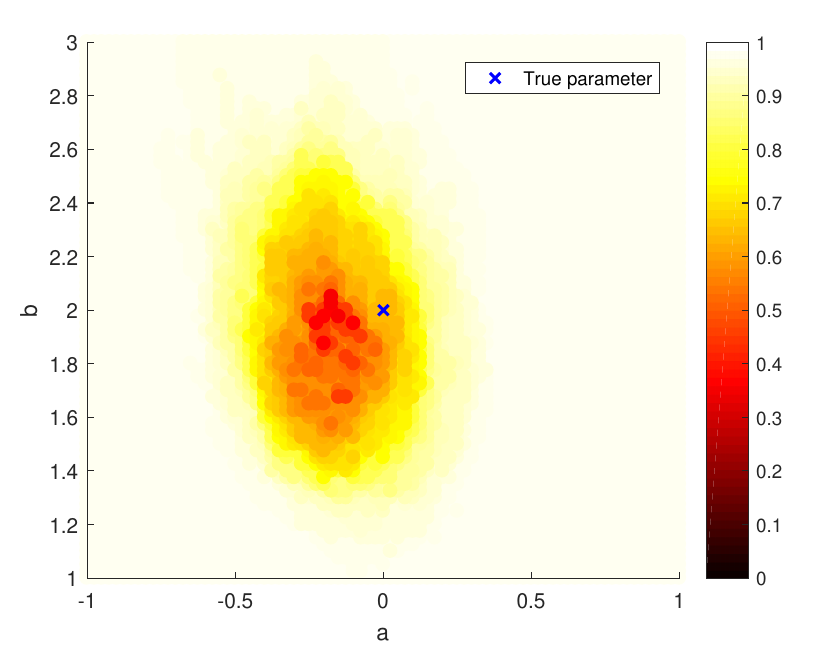}}
			       \end{subfloat}
		\\
           \hspace{3mm}
        \begin{subfloat}[Asymptotic Ellipsoids\label{fig:Asymptotic}]{\centering
        \includegraphics[width=2.6in]{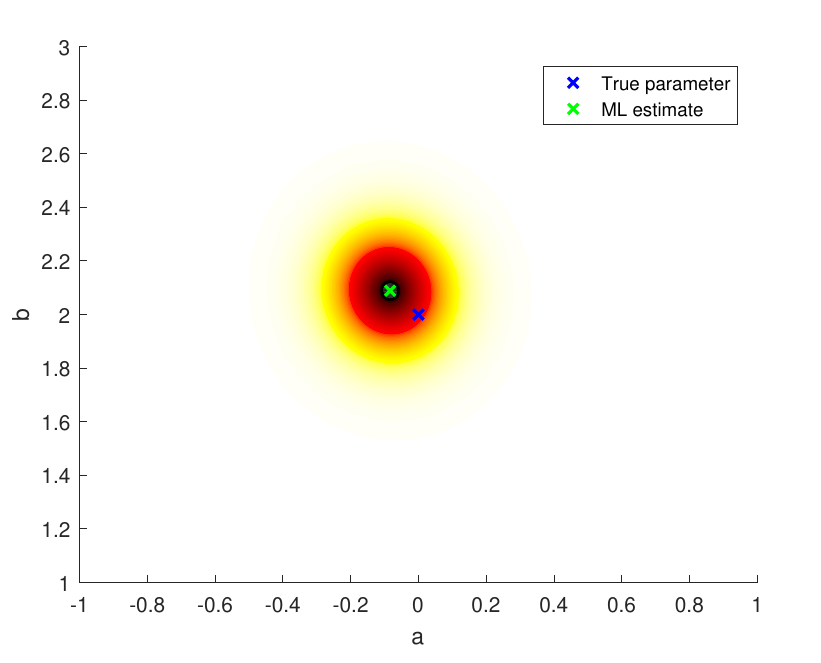}}
            \end{subfloat}
           	\quad
		\begin{subfloat}[MLE-Based Levels\label{fig:ML}]{\centering
        \includegraphics[width=2.6in]{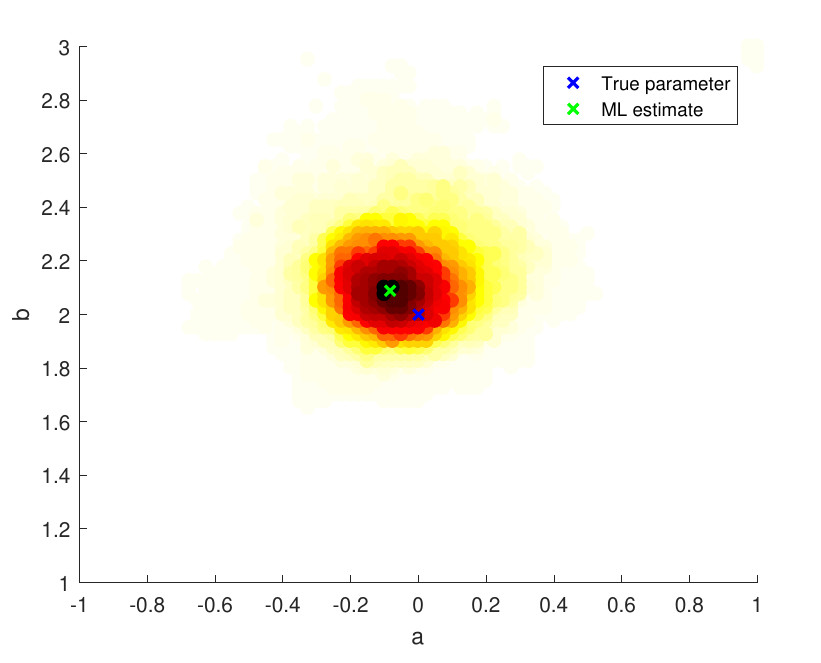}}
	    \end{subfloat}	
        \medskip
        \caption{\small Families of confidence regions for the perceptron-based, the kNN-based and the MLE-based resampling constuctions are presented along with the asymptotic confidence ellipsoids. 
        We tested a class of logistic functions parameterized by $a$ and $b$, see \eqref{eq:logistic}.
        The relative ranks, which indicate the rejection probabilities, are plotted with their colors for each parameter pair.
        }
		\label{fig_sim}
	\end{figure*}

    We estimated the coverage probability of the  regions with theoretical value $95\%$ based on $30\,000$ Monte Carlo experiments, which is by Hoeffding's inequality sufficient to reach a $1\%$ precision with probability at least $99 \%$. We tested if the true parameter was included in the confidence region with hyperparameters $m=20$ and $q=1$, and estimated the confidence level by the frequency of inclusions. 
	The kNN-based, the MLE-based and the perceptron-based constructions were all applied. Similarly, we tested if the true parameter was included in the asymptotic confidence ellipsoid with approximate confidence level $95\%$ and calculated the frequency of inclusions. We carried out tests for $n=20$, $50$ and $100$ observations. We can see in Table \ref{conf_level} that the confidence levels of the asymptotic regions are conservative and higher than $96 \%$ for $n=20$ and $n=50$  instead of the desired $95 \%$, while our exact methods are close to the theoretical value for every sample size. 
    	
    We carried out another experiment, when the marginal distribution of the input was uniform on $[-1,1]$, but the regression function and the model class remained the same.  These results, shown in Table \ref{conf_level2}, validate that our resampling framework provides exact guarantees for any sample size, for all variants, as before. On the other hand, the coverage levels of the asymptotic confidence sets (column ``Ellipsoid'') are inaccurate for small sample sizes, though they tend to the theoretical value ($95 \%$) as the sample size $n$ increases.

    We can conclude that our non-asymptotic exact guarantee is preferable to the asymptotic one, since the latter is conservative for small samples, even if its a priori assumptions hold. 

     {\renewcommand{\arraystretch}{1.25}
    	\begin{table}[t]
			\begin{minipage}{.47\linewidth}
                \scriptsize
				\caption{\small Empirical coverage probabilities for $n=20$, $50$ and $100$ observations estimated from $30\,000$ Monte Carlo trials. The underlying conditional marginal distribution of $X$ given $Y$ was normal with mean $Y$ and variance $1$.\label{conf_level}\vspace*{1mm}}
                \begin{tabular}{p{0.4cm} p{1.2cm} p{1cm} p{1cm} p{1.1cm}} 
                    \toprule
                    $n$ & Ellipsoid & kNN & MLE & Perceptron\Tstrut\Bstrut\\ 
                    \midrule
                    20 & 96.50 $\%$ & 95.21 $\%$ & 95.10 $\%$ & 95.19 $\%$ \Tstrut\Bstrut\\
                    50 & 96.22 $\%$ & 94.84 $\%$ & 95.07 $\%$ & 94.90 $\%$ \Tstrut\Bstrut\\
                    100 & 95.86 $\%$ & 95.01 $\%$ & 94.88 $\%$ & 94.83 $\%$\Tstrut\Bstrut \\
                    \midrule
                \end{tabular}                
			\end{minipage}
			\hspace{3mm}
			\begin{minipage}{.47\linewidth}
                \scriptsize
				\caption{\small Empirical coverage probabilities for $n=20$, $50$ and $100$ observations estimated from $30\,000$ Monte Carlo trials.  The marginal distribution of the inputs was uniform on $[-1,1]$. The regression function was the same.   
                \label{conf_level2}\vspace*{1mm}}
                \begin{tabular}{p{0.4cm} p{1.2cm} p{1cm} p{1cm} p{1.1cm}} 
                    \toprule
                    $n$ & Ellipsoid & kNN & MLE & Perceptron\Tstrut\Bstrut\\ 
                    \midrule
                    20 & 97.49 $\%$ & 94.92 $\%$ & 95.01 $\%$ & 95.09 $\%$ \Tstrut\Bstrut\\
                    50 & 96.63 $\%$ & 94.99 $\%$ & 94.90 $\%$ & 94.69 $\%$ \Tstrut\Bstrut\\
                    100 & 95.63 $\%$ & 94.72 $\%$ & 94.89 $\%$ & 94.94 $\%$\Tstrut\Bstrut \\
                    \midrule
                \end{tabular}                
			\end{minipage}
            \vspace*{-3mm}
		\end{table}

          \section{Discussion}

         In this paper we studied the problem of  assessing the uncertainty of regression function estimates for binary classification in the form of confidence regions. Such regions are of high practical importance, as they are essential, e.g., for risk management and robust decision making. The regression function is one of the key objects of this problem, because it not only determines an optimal classifier (if we work with the zero-one loss), but it can also be used to compute the probability of misclassification for any given input. We extended and improved the distribution-free framework originally suggested in \citep{csajitamas2019}. Our data-driven resampling approach provides exact stochastic guarantees for finding the true underlying regression function under the i.i.d.\ assumption for any finite sample size. One of the key ideas is to test each candidate by generating alternative samples by a suitable resampling mechanism, and then compare those to the original dataset using a rank test.
         The presented algorithm can exactly reach any (rational) confidence level. We investigated two specific methods, as well. The ERM-based construction was introduced in this paper, while the kNN-based method was suggested earlier in the aforementioned paper. We proved the strong consistency of both methods under very mild statistical assumptions. In the kNN-case, we significantly relaxed the assumptions of the original paper. For our ERM-based method the consistency is uniform, while for kNN-based version it is pointwise. We also proved exponential PAC bounds on the probabiliy of excluding false parameters for both approaches. Finally, we demonstrated the framework by numerical experiments and compared the results to the confidence ellipsoids of the MLE method for logistic regression.


\begin{appendix}
\section{Proof of Lemma \ref{lemma:uniform-rank-distribution}}\label{app:discrete-uniform}

\begin{customlemma}{\ref{lemma:uniform-rank-distribution}}
Let $\xi_1, \xi_2, \dots, \xi_m$ be 
    almost surely pairwise different exchangeable random elements taking values in an arbitrary measurable space, and 
    let  $\psi$ be a ranking function. Then, the rank, $\psi(\xi_1, \dots, \xi_m)$, is distributed uniformly on $[\hspace{0.3mm}m\hspace{0.3mm}]$.
\end{customlemma}

\begin{proof}
Random elements $\{A_i\}_{i=1}^m$ are exchangeable, thus
\begin{equation}
\begin{aligned}
&\PP \big(\,\psi\big(\,A_{1}, \dots, A_{m}\,\big)\,= \, k\,\big) = \;\PP \big(\,\psi\big(\,A_{\mu(1)}, \dots, A_{\mu(m)}\,\big)\, = \, k\,\big)
\end{aligned}
\end{equation}
for all $k\in [\,m\,]$ and permutation $\mu$ on set $[\,m\,]$.\ Let us fix the value of $k$. Observe that because of P\ref{ass:p1} we have
\begin{equation}
\begin{aligned}
&\big\{ \,\psi\big(\,A_{1}, \dots, A_{m}\big)= k\,\big\}=\big\{ \,\psi\big(\,A_{1}, A_{\sigma(2)} \dots, A_{\sigma(m)}\big)=  k\,\big\} 
\end{aligned}
\end{equation}
for all permutation $\sigma$ on set $\{2, \dots, m \}$. Using the same notation as in P\ref{ass:p2} let
\begin{equation}
    C_i \doteq \big\{ \psi\big(\,A_{i}, \{ A_{j} \}_{j \neq i},\big)=  k\,\big\}.
\end{equation} Notice that $\big\{ C_i \big\}_{i=1}^m$ are disjoint because of P\ref{ass:p2}. We show that they cover a probability one event. Let $\Omega_0$ be the zero probability event which covers those cases when there exists $(i,j) \in [m]\times [m]$ such that $A_i = A_j$. Because of P\ref{ass:p2} for all $\omega \in \Omega \setminus \Omega_0$ the elements of $\{\psi (A_j(\omega), \{A_k(\omega)\}_{k \neq j})\}_{j=1}^m$ are different. By definition these values are included in $[m]$,  hence there exists $i \in [m]$ such that $\psi (A_i(\omega), \{A_k(\omega)\}_{k \neq i})=k$ implying that $\omega \in \textstyle \bigcup_{i=1}^m C_i$ and $\Omega \setminus \Omega_0 \subseteq \textstyle \bigcup_{i=1}^m C_i$. Consequently  $\PP \Big(\textstyle \bigcup_{i=1}^m C_i\Big) =1$. Putting this together with the disjoint property of events $\{C_i \}_{i=1}^m$, and the exchangeability of $\{A_i \}_{i=1}^m$ yields
\begin{equation}
\begin{aligned}
1 &= \PP \big(\Omega \setminus \Omega_0 \big) = \PP \big( \textstyle \bigcup_{i=1}^m C_i \big)= \sum_{i=1}^m \PP\big(C_i\big) = \sum_{i=1}^m \PP\big( \, \psi\big(\,A_{i}, \{ A_{j} \}_{j \neq i},\big)=  k\,\big)\\
&= \sum_{i=1}^m \PP( \psi\big(\,A_{1}, \{ A_{j} \}_{j \neq 1},\big)=  k\,) = m\, \PP\big(  \,\psi\big(\,A_{1}, \dots, A_{m}\big)= k\,\big),
\end{aligned}
\end{equation}
from which it follows that $\PP( \,\psi\big(\,A_{1}, \dots, A_{m}\big)= k\,) = \nicefrac{1}{m}$ for all $k \in [\,m\,]$.
\end{proof}

\section{Proof of Theorem \ref{theorem:exact-confidence}}

\begin{customthm}{\ref{theorem:exact-confidence}}
		Assume that A\ref{ass:b1}, A\ref{ass:b2} and A\ref{ass:b3} hold. Then, for any ranking function $\psi$ and integers $1\, \leq\, q_1\, \leq\, q_2\, \leq\, m$, we have
		$\PP\big(\, \theta^* \in \Theta_{n}^{\psi}  \, \big)\; = \; (q_2-q_1+1)/m.$
\end{customthm} 
	\begin{proof}
		We observed that $\CD_0$, $\CD_1(\theta^*), \dots , \CD_{m-1}(\theta^*)$ are conditionally i.i.d.\ with respect to the inputs, $\{X_i\}_{i=1}^n$, hence their extended versions are exchangeable, and also (a.s.) pairwise different due to their constructions. 
		Thus, Lemma \ref{lemma:uniform-rank-distribution} implies that $\psi ( \CD_0^{\pi}(\theta),  \{ \CD^{\pi}_k(\theta) \}_{k \neq 0} ) = k$ with exact probability $\nicefrac{1}{m}$ for all $k \in [m]$.
	\end{proof}

\section{Proofs of Theorem \ref{theorem:algorithm-I} and Theorem \ref{theorem:quantitative-uniform}}\label{app:algorithm-I}

The proof of strong uniform consistency relies on the ULLN, in particular on the concepts of packing and covering numbers and three auxiliary lemmas. First, we present these concepts and three lemmas. Then, we apply the lemmas to prove the theorems. 
	
	Condition A\ref{ass:b6} bounds the pseudo-dimension of $\CF_0$ which is sufficient for the ULLN to hold on $\CF_0$ \cite[Theorem 9.6]{gyorfi2002distribution}, however, we need to apply the ULLN on the transformation of $\CF_0$, i.e., let us consider the set of $\BX \to \RR$ type functions given by
	\begin{equation}\label{eq:h-definition}
		\CG\,\doteq\, \{\,h=(f-g)^2 \;\,|\,\; f,g \in \CF_0\,\}.
	\end{equation}
	The first step in the proof is to show that condition A\ref{ass:b6} is sufficient for $V_{\CG^{+}}<\infty$ to hold. Proving this is straightforward with the help of covering and packing numbers. Our reasoning is motivated by the arguments in \citep{gyorfi2002distribution, understandingML, Vapnik1998, wainwright2019high}.
	\begin{definition}[$\varepsilon$-covering number and packing number]
		Let $\CF$ be a class of real-valued functions and $\norm{\cdot}$ be a norm on $\CF$ and $\varepsilon >0$.
		\begin{enumerate}[leftmargin = 0.6cm]
			\item
			A set $\{ f_1, \dots, f_k \}$ is an $\varepsilon$-cover of $\CF$ w.r.t.\ $\norm{\cdot}$, if for all $f \in \CF$ there exists an index $j \in [k]$ such that $\norm{f_j-f}\, \leq\, \varepsilon$.
			The size of the smallest $\varepsilon$-cover of $\CF$ w.r.t.\ $\norm{\cdot}$ is denoted by $\mathcal{N}(\varepsilon, \CF,\norm{\cdot})$ and it is called the $\varepsilon$-covering number of $\CF$ w.r.t.\ $\norm{\cdot}$.
			\item
			A set $\{ f_1, \dots, f_l \} \subseteq  \CF$ is an $\varepsilon$-packing of $\CF$ w.r.t.\ $\norm{\cdot}$ if for all $1\leq i < j \leq l$ we have
			$\norm{f_i-f_j}\, >\, \varepsilon$.
			The size of the largest $\varepsilon$-packing of $\CF$ w.r.t.\ $\norm{\cdot}$ is denoted by $\mathcal{M}(\varepsilon, \CF,\norm{\cdot})$ and it is called the $\varepsilon$-packing number of $\CF$ w.r.t.\ $\norm{\cdot}$.
		\end{enumerate}
            Both $\mathcal{N}(\varepsilon, \CF,\norm{\cdot})$ and $\mathcal{M}(\varepsilon, \CF,\norm{\cdot})$ can be infinite.
	\end{definition}
	In the proof we consider random $\mathcal{L}_p$ covers with respect to the empirical measure, that is for a random vector $(X_1,\dots,X_n) \in \BX^n$ let $\mathbb{P}_n$ denote the empirical measure, i.e., let $\mathbb{P}_n(A) \doteq \frac{1}{n}\sum_{i=1}^n \BI\,(X_i \in A)$ for all measurable $A$ and let
	\begin{equation}
		\begin{aligned}
		&\CN_1(\,\varepsilon,\CG,\{X_1,\dots,X_n\}\,) \,\doteq\, \CN \big( \,\varepsilon,\CG, \norm{\cdot}_{\mathcal{L}_1(\mathbb{P}_n)}\big), \\
            &\mathcal{M}_1(\,\varepsilon,\CG,\{X_1,\dots,X_n\}\,) \,\doteq\, \mathcal{M} \big( \,\varepsilon,\CG, \norm{\cdot}_{\mathcal{L}_1(\mathbb{P}_n)}\big),
		\end{aligned}
	\end{equation}
	denote the random $\mathcal{L}_1(\mathbb{P}_n)$ $\varepsilon$-covering and packing numbers of $\CG$. It is known that a sufficient condition for the ULLN can be formulated with the help of these terms, i.e., the ULLN holds if the expectations of the $\mathcal{L}_1(\mathbb{P}_n)$-covering numbers are summable \cite[Theorem 9.1]{gyorfi2002distribution}. The referred theorem is included in Appendix \ref{app:theorems}. Thus, bounding the expectation of the $\mathcal{L}_1(\mathbb{P}_n)$-covering number is a key step in the proof of Theorem \ref{theorem:algorithm-I}.

	\begin{lemma}\label{lemma:covering-bound}
		Let $X_1, \dots, X_n$ be random vectors in $\RR^d$. 
		If A\ref{ass:b6} holds, then for all $\varepsilon>0$ there exists a constant $C$, independent of $X_1,\dots X_n$, such that
		\begin{equation}\label{eq:bounded-covering}
			\CN_1(\,\varepsilon,\CG,\{X_1,\dots,X_n\}\,) \,\leq\, C.
		\end{equation}
	\end{lemma}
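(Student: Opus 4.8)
The plan is to bound the covering number of $\CG$ in terms of that of $\CF_0$, and then to invoke the classical fact that a uniformly bounded function class of finite pseudo-dimension has $\mathcal{L}_1$ covering numbers that are bounded independently of the underlying (empirical) measure. First I would record an elementary Lipschitz estimate that makes the squaring harmless: since every $f \in \CF_0$ takes values in $[-1,+1]$, for any $f_1, f_2, g_1, g_2 \in \CF_0$ we have, pointwise on $\BX$,
\[
\big| (f_1-g_1)^2 - (f_2-g_2)^2 \big| \,=\, \big| (f_1-g_1)-(f_2-g_2) \big| \cdot \big| (f_1-g_1)+(f_2-g_2) \big| \,\le\, 4\,\big( |f_1-f_2| + |g_1-g_2| \big),
\]
because $|(f_1-g_1)+(f_2-g_2)| \le 4$. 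Integrating against $\mathbb{P}_n$ gives $\norm{(f_1-g_1)^2 - (f_2-g_2)^2}_{\mathcal{L}_1(\mathbb{P}_n)} \le 4\big( \norm{f_1-f_2}_{\mathcal{L}_1(\mathbb{P}_n)} + \norm{g_1-g_2}_{\mathcal{L}_1(\mathbb{P}_n)} \big)$. Hence, if $\{f_1,\dots,f_k\}$ is an $(\varepsilon/8)$-cover of $\CF_0$ in $\norm{\cdot}_{\mathcal{L}_1(\mathbb{P}_n)}$, then the at most $k^2$ functions $(f_i - f_j)^2$ form an $\varepsilon$-cover of $\CG$ in the same norm, so that
\[
\CN_1(\varepsilon,\CG,\{X_1,\dots,X_n\}) \,\le\, \CN_1(\varepsilon/8,\CF_0,\{X_1,\dots,X_n\})^2 .
\]

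Next I would bound the right-hand side uniformly over the sample. By A\ref{ass:b6} the pseudo-dimension $V_{\CF_0^+}$ is finite, and $\CF_0$ is uniformly bounded by $1$; the classical covering-number estimate for such classes (e.g.\ \cite[Theorem 9.4]{gyorfi2002distribution}) then yields a bound on $\CN_1(\varepsilon/8,\CF_0,\{X_1,\dots,X_n\})$ that depends only on $\varepsilon$ and $V_{\CF_0^+}$ --- and crucially not on $X_1,\dots,X_n$ nor on $n$, since that estimate holds for an arbitrary multiset of sample points. Denoting this bound by $C_0 = C_0(\varepsilon, V_{\CF_0^+})$, we obtain \eqref{eq:bounded-covering} with $C \doteq C_0^2$, which is the claim. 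An alternative route first establishes $V_{\CG^{+}} < \infty$ directly --- using that the subgraph class of $\CF_0 - \CF_0$ has finite VC dimension and that $t \mapsto t^2$ is monotone on each half-line, so that the subgraph of $(f-g)^2$ is a Boolean combination of boundedly many subgraph sets built from $\CF_0 - \CF_0$ --- and then applies the same covering estimate to $\CG$ itself; I would keep whichever is shorter to write out.

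The only genuine difficulty is the nonlinearity introduced by squaring: the map $t \mapsto t^2$ is not monotone, so one cannot simply quote a ``pseudo-dimension is preserved under monotone reparametrization'' lemma to pass from $\CF_0$ to $\CG$. The Lipschitz estimate above sidesteps this completely by arguing at the level of covering numbers rather than combinatorial dimensions, at the negligible cost of a slightly larger constant. Everything else is routine bookkeeping: tracking the factor $8$ and the squaring of the cover size, and checking that the invoked covering bound is indeed stated for an arbitrary empirical measure, hence uniform in $X_1,\dots,X_n$.
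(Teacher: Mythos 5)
Your proposal is correct and follows essentially the same route as the paper: the identical factor-of-$8$ Lipschitz estimate reducing $\CN_1(\varepsilon,\CG,\cdot)$ to $\CN_1(\varepsilon/8,\CF_0,\cdot)^2$, followed by a sample-independent covering bound for $\CF_0$ from its finite pseudo-dimension (the paper goes through an affine rescaling to $[0,1]$, the covering-by-packing comparison, and Haussler's bound, whereas you cite the Gy\"orfi et al.\ estimate directly --- an immaterial difference in constants). The only cosmetic omission is the paper's remark that cover elements may be truncated to $[-1,1]$ so that the boundedness used in your Lipschitz step applies to them as well.
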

	\begin{proof}
First, we show that for any $x_1,\dots,x_n \in \BX$ we have
\begin{equation}\label{eq:l1covers}
\begin{aligned}
&\CN_1(\,\varepsilon,\CG,\{x_1,\dots,x_n\}\,) \leq \big(\CN_1(\varepsilon/8,\CF_0,\{x_1,\dots,x_n\})\big)^2.
\end{aligned}
\end{equation}
Let $\{f_1,\dots,f_k\}$ be an $\nicefrac{\varepsilon}{8}$-cover of $\CF_0$ w.r.t. the $\mathcal{L}_1(\mathbb{P}_n)$ norm with minimal size. It can be assumed w.l.o.g. that $f_i$ is bounded between $[-1,1]$ for all $i \in [k]$, otherwise the truncated versions can be used. We argue that $\{\,(f_i-f_j)^2\,|\,i,j \in [k]\,\}$ is an $\varepsilon$-cover of $\CG$ including at most $k^2$ elements. For an arbitrary $h \in \CG$ there exist $f,g \in \CF_0$ such that $h=(f-g)^2$. For these functions one can find indexes $i(f), j(g) \in [k]$ such that
\begin{equation}\label{eq:cover}
\begin{aligned}
&\norm{f-f_i}_{\mathcal{L}_1(\mathbb{P}_n)} \leq \nicefrac{\varepsilon}{8} \quad \text{and}\quad \norm{g-f_j}_{\mathcal{L}_1(\mathbb{P}_n)} \leq \nicefrac{\varepsilon}{8}.
\end{aligned} 
\end{equation}
Since $f,g,f_i$ and $f_j$ are all bounded in $[-1,1]$ and \eqref{eq:cover} holds, we obtain that
\begin{equation}
\begin{aligned}
\int \big| h- (f_i -f_j)^2\big| \textrm{d} \mathbb{P}_n &= \int \big| (f-g)^2- (f_i -f_j)^2\big| \textrm{d} \mathbb{P}_n \\
&\leq \int \,\big|(f-g + f_i -f_j)(f-g -f_i +f_j)\,\big|\,\textrm{d} \mathbb{P}_n \\
&\leq 4 \int \,\big|f-f_i\big|\,\textrm{d} \mathbb{P}_n + 4 \int \,\big| g-f_j\big|\,\textrm{d} \mathbb{P}_n \leq \varepsilon,
\end{aligned}
\end{equation}
thus \eqref{eq:l1covers} is proved. In the second step we consider function class $\widetilde{\CF} \doteq \{\,\nicefrac{1}{2}(f+1)\,|\,f\in \CF_0\,\}$ to satisfy the condition of Theorem \ref{thm:vcbound-expect}, requiring the boundedness of each $\tilde{f} \in \widetilde{\CF}$ in $[0,1]$. Notice that this transformation does not affect the pseudo-dimension, that is  $V_{\CF_0^{+}} = V_{\widetilde{\CF}^{+}}$, and for the covering numbers it holds that
\begin{equation}
\begin{aligned}
&\CN_1(\varepsilon/8,{\CF_0},\{x_1,\dots,x_n\}) \leq \CN_1(\varepsilon/16,\widetilde{\CF},\{x_1,\dots,x_n\}),
\end{aligned}
\end{equation}
becuase $\{f_1,\dots,f_k\}$ is an $\varepsilon$-cover of $\CF_0$ if and only if $\{\nicefrac{1}{2}(f_1+1),\dots,\nicefrac{1}{2}(f_k+1)\}$ is an $\nicefrac{\varepsilon}{2}$-cover of $\widetilde{\CF}$.
In the third step we bound the covering numbers with packing numbers by Lemma \ref{lemma:covering-packing}, \citep{gyorfi2002distribution}, as
\begin{equation}
\begin{aligned}
&\CN_1(\varepsilon/16,\widetilde{\CF},\{x_1,\dots,x_n\}) \leq \mathcal{M}_1(\varepsilon/16,\widetilde{\CF},\{x_1,\dots,x_n\}).
\end{aligned}
\end{equation}
Finally, by Theorem \ref{thm:vcbound-expect} we derive an upper bound depending only on pseudo-dimension $V_{\CF_0^{+}}$ and $\varepsilon > 0$, cf. \citep{haussler1995sphere}),
\begin{equation}
\mathcal{M}_1(\varepsilon/16,\widetilde{\CF},\{x_1,\dots,x_n\})  \leq e(V_{\CF_0^{+}}+1)  \bigg(  \frac{32e}{\varepsilon}\bigg)^{\!\!V_{\CF_0^{+}}}\hspace*{-1mm},\hspace*{1mm}
\end{equation}
where we used that $V_{\CF_0^{+}} = V_{\widetilde{\CF}^{+}}$. Combining these inequalities yields that
\begin{equation}
    C\doteq e^2(V_{\CF_0^{+}}+1)^2  \bigg(  \frac{32e}{\varepsilon}\bigg)^{2V_{\CF_0^{+}}}
\end{equation}
is an almost surely valid bound on the covering number of $\CG$ w.r.t.\ the $L_1$ norm.
\end{proof}
    
        Lemma \ref{lemma:covering-bound} and Theorem \ref{thm:ulln} are our main tools to prove Lemma \ref{lemma:uniform-bound}, which is a simplified version of Theorem \ref{theorem:algorithm-I}.
	
	\begin{lemma}\label{lemma:uniform-bound}
		Consider the constructed confidence regions in $\CF_0$, that is let
		\begin{equation}
			\CF_{n}^{(1)}\, \doteq\, \big\{ \, f_\theta \in \CF_0 \;\,|\,\;  \theta \in \Theta_{n}^{(1)}\, \big\}.
		\end{equation}
		Assume A\ref{ass:b1}, A\ref{ass:b2}, A\ref{ass:b3}, A\ref{ass:b4} and A\ref{ass:b6}, then for all\, $\varepsilon >0$
		\begin{equation}\label{eq:model-uniformcons}
			\mathbb{P}\,\bigg(\,\bigcup_{k=1}^{\infty} \bigcap_{n=k}^{\infty} \left\{\, \CF_{n}^{(1)} \subseteq B(f^*, \varepsilon) \,\right\} \bigg) \,=\, 1,
			\vspace{0mm}
		\end{equation}
		where $B(f^*, \varepsilon)\doteq \{ f \in \CF_0 \;|\; \norm{f-f^*}_{\scriptscriptstyle Q}<\varepsilon\}$.
	\end{lemma}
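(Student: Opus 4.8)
\medskip
\noindent\emph{Proof proposal.}
Fix $\varepsilon>0$. The plan is to show that every ``false'' candidate $f_\theta\notin B(f^*,\varepsilon)$ (i.e.\ every $\theta$ with $\norm{f_\theta-f^*}_{\scriptscriptstyle Q}\ge\varepsilon$) is excluded from $\Theta_n^{(1)}$ for all sufficiently large $n$, on a probability-one event. By \eqref{eq:ranking-function} the rank of $\theta$ equals $1+\sum_{j=1}^{m-1}\BI\big(Z_n^{(0)}(\theta)\succ_\pi Z_n^{(j)}(\theta)\big)$, so I would aim to produce an almost surely finite (random) $N$ such that, for every $n\ge N$ and every false $\theta$, $Z_n^{(0)}(\theta)>Z_n^{(j)}(\theta)$ for all $j\in[m-1]$; then the rank of $\theta$ equals $m>q$, hence $\theta\notin\Theta_n^{(1)}$, hence $\CF_n^{(1)}\subseteq B(f^*,\varepsilon)$. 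Since the event inside \eqref{eq:model-uniformcons} is precisely ``$\CF_n^{(1)}\subseteq B(f^*,\varepsilon)$ eventually'', exhibiting such an $N$ almost surely finishes the proof. The heuristic behind the dichotomy: $Z_n^{(0)}(\theta)$ is built from the \emph{original} sample, so it should approach $\norm{f_\theta-f^*}_{\scriptscriptstyle Q}^2\ge\varepsilon^2$, whereas $Z_n^{(j)}(\theta)$ is built from data whose regression function is $f_\theta$ itself, so it should tend to $0$ --- and, crucially, \emph{uniformly} in $\theta$.

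I would set this up with three uniform laws of large numbers. Write $\mathbb{P}_n$ for the empirical measure of $X_1,\dots,X_n$ and $\norm{u}_{\mathbb{P}_n}^2=\frac1n\sum_{i=1}^n u(X_i)^2$. First, Lemma~\ref{lemma:covering-bound} bounds the $\mathcal{L}_1(\mathbb{P}_n)$ covering numbers of $\CG$ by a constant, so Theorem~\ref{thm:ulln} gives $\delta_n\doteq\sup_{h\in\CG}\big|\frac1n\sum_i h(X_i)-\EE[h(X)]\big|\to0$ a.s.; in particular $\big|\,\norm{f-g}_{\mathbb{P}_n}^2-\norm{f-g}_{\scriptscriptstyle Q}^2\,\big|\le\delta_n$ for all $f,g\in\CF_0$. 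Second, I would consider the uniformly bounded classes
\[ \CL_0\doteq\big\{(x,y)\mapsto(f(x)-f^*(x))(y-f^*(x)):f\in\CF_0\big\},\quad \CL_1\doteq\big\{(x,u)\mapsto(f(x)-g(x))\big(\sign(g(x)+u)-g(x)\big):f,g\in\CF_0\big\}, \]
and argue their $\mathcal{L}_1(\mathbb{P}_n)$ covering numbers are also bounded by constants: $f^*\in\CF_0$; the $\{-1,1\}$-valued class $\{(x,u)\mapsto\sign(g(x)+u):g\in\CF_0\}$ has VC dimension bounded in terms of $V_{\CF_0^{+}}$ (shattering points $(x_i,u_i)$ is shattering $(x_i,-u_i)$ by the subgraphs of $\CF_0$), so Haussler's bound applies; and sums and products of uniformly bounded classes with bounded covering numbers again have bounded covering numbers --- all via the same manipulations as in the proof of Lemma~\ref{lemma:covering-bound}. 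Then, using $\EE[Y-f^*(X)\mid X]=0$ and $\EE[\sign(g(X)+U_{i,j})\mid X]=g(X)$ (so both classes are mean-zero), Theorem~\ref{thm:ulln} gives
\[ D_n\doteq\sup_{f\in\CF_0}\Big|\tfrac1n\textstyle\sum_i(f(X_i)-f^*(X_i))(Y_i-f^*(X_i))\Big|\to0,\qquad C_n^{(j)}\doteq\sup_{f,g\in\CF_0}\Big|\tfrac1n\textstyle\sum_i(f(X_i)-g(X_i))\big(\sign(g(X_i)+U_{i,j})-g(X_i)\big)\Big|\to0 \]
a.s., for each $j\in[m-1]$; note the pairs $(X_i,U_{i,j})$ are i.i.d.\ in $i$, which is the decoupling that defuses the $\theta$-dependence of the resampled labels.

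Next I would combine these. Because $f_{*,n}^{(0)}=\argmin_{f\in\CF_0}\frac1n\sum_i(f(X_i)-Y_i)^2$ and $f^*\in\CF_0$, the standard ERM ``basic inequality'' (expand $\frac1n\sum_i(f_{*,n}^{(0)}(X_i)-Y_i)^2\le\frac1n\sum_i(f^*(X_i)-Y_i)^2$ around $f^*$) yields $\norm{f_{*,n}^{(0)}-f^*}_{\mathbb{P}_n}^2\le\frac2n\sum_i(f_{*,n}^{(0)}(X_i)-f^*(X_i))(Y_i-f^*(X_i))\le2D_n$; combined with the reverse triangle inequality in $\mathcal{L}_2(\mathbb{P}_n)$ and $\delta_n$, for every false $\theta$
\[ Z_n^{(0)}(\theta)^{1/2}=\norm{f_\theta-f_{*,n}^{(0)}}_{\mathbb{P}_n}\ge\norm{f_\theta-f^*}_{\mathbb{P}_n}-\norm{f^*-f_{*,n}^{(0)}}_{\mathbb{P}_n}\ge\big(\varepsilon^2-\delta_n\big)_+^{1/2}-(2D_n)^{1/2}, \]
hence $Z_n^{(0)}(\theta)\ge\rho_n\doteq\big(\,(\varepsilon^2-\delta_n)_+^{1/2}-(2D_n)^{1/2}\,\big)_+^2\to\varepsilon^2$ a.s., uniformly over false $\theta$. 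Symmetrically, for $j\in[m-1]$, $f_{\theta,n}^{(j)}=\argmin_{f\in\CF_0}\frac1n\sum_i(f(X_i)-Y_{i,j}(\theta))^2$ with $f_\theta\in\CF_0$ and $Y_{i,j}(\theta)-f_\theta(X_i)=\sign(f_\theta(X_i)+U_{i,j})-f_\theta(X_i)$, so the same basic inequality gives $Z_n^{(j)}(\theta)=\norm{f_{\theta,n}^{(j)}-f_\theta}_{\mathbb{P}_n}^2\le\frac2n\sum_i(f_{\theta,n}^{(j)}(X_i)-f_\theta(X_i))\big(\sign(f_\theta(X_i)+U_{i,j})-f_\theta(X_i)\big)\le2C_n^{(j)}$, a bound uniform in $\theta$ since it only uses $(f_{\theta,n}^{(j)},f_\theta)\in\CF_0\times\CF_0$. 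Setting $\sigma_n\doteq2\max_{j\in[m-1]}C_n^{(j)}\to0$ a.s., on the probability-one event on which $\rho_n\to\varepsilon^2$ and $\sigma_n\to0$ there is an a.s.\ finite $N$ with $\rho_n>\varepsilon^2/2>\sigma_n$ for $n\ge N$; then for $n\ge N$, any false $\theta$, any $j$: $Z_n^{(0)}(\theta)\ge\rho_n>\sigma_n\ge Z_n^{(j)}(\theta)$, so $Z_n^{(0)}(\theta)\succ_\pi Z_n^{(j)}(\theta)$, the rank is $m>q$, and $\theta\notin\Theta_n^{(1)}$ --- i.e.\ $\CF_n^{(1)}\subseteq B(f^*,\varepsilon)$ for all $n\ge N$, which is \eqref{eq:model-uniformcons}.

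The hard part will be the \emph{uniformity over the abstract set $\Theta$} in $\sup_\theta Z_n^{(j)}(\theta)\to0$. Since the resampled labels $Y_{i,j}(\theta)$ themselves depend on $\theta$, this is not a textbook ULLN; the key move is to write $Y_{i,j}(\theta)=\sign(f_\theta(X_i)+U_{i,j})$ with the genuinely i.i.d.\ pairs $(X_i,U_{i,j})$ (which do not depend on $\theta$), push all the $\theta$-dependence into the fixed class $\CL_1$, and then verify --- via the subgraph and covering-number estimates of Lemma~\ref{lemma:covering-bound} together with Haussler's bound for the binary class $\{\sign(g(\cdot)+\cdot)\}$ --- that $\CL_1$ still has bounded covering numbers, so that a single ULLN handles all $\theta$ at once. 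The remaining ingredients (the basic inequality, the triangle inequalities, and the bookkeeping of the rank) are routine.
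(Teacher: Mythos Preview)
Your argument is correct and takes a genuinely different route from the paper's. The paper bounds the \emph{difference} $Z_n^{(0)}(\theta)-Z_n^{(j)}(\theta)$ from below by $\varepsilon^2-a_n-3A_n-4\sqrt{A_n+a_n}$, where $A_n$ is the ULLN deviation over $\CG$ (exactly your $\delta_n$) and $a_n$ is a \emph{deterministic} bound on $\EE\big[(f_\theta(X)-f_{\theta,n}^{(j)}(X))^2\big]$ coming from the black-box expected-risk bound of Theorem~\ref{thm:expect}; the uniformity in $\theta$ enters because that bound depends only on $V_{\CF_0^{+}}$. You instead bound $Z_n^{(0)}(\theta)$ from below and $Z_n^{(j)}(\theta)$ from above \emph{separately}, via the ERM basic inequality and two additional ULLNs on the centered ``score'' classes $\CL_0$ and $\CL_1$; your key move --- writing $Y_{i,j}(\theta)=\sign(f_\theta(X_i)+U_{i,j})$ with $\theta$-free i.i.d.\ pairs $(X_i,U_{i,j})$ and absorbing all $\theta$-dependence into $\CL_1$ --- is what the paper avoids by invoking Theorem~\ref{thm:expect}. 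Both routes rely on Lemma~\ref{lemma:covering-bound}-style manipulations, and your VC argument for $\{(x,u)\mapsto\sign(g(x)+u)\}$ via subgraphs is correct. The paper's approach is shorter (one ULLN plus a cited risk bound) and plugs directly into the quantitative Theorem~\ref{theorem:quantitative-uniform}; yours is more self-contained, does not need Theorem~\ref{thm:expect}, and yields the slightly sharper empirical bound $\sup_\theta Z_n^{(j)}(\theta)\le 2C_n^{(j)}$ rather than a detour through the expectation. Minor bookkeeping: Theorem~\ref{thm:ulln} is stated for $[0,B]$-valued classes, so you should note the obvious shift for $\CL_0,\CL_1$, and record that the $(X_i,U_{i,j})$ live in $\RR^{d+1}$ so A\ref{ass:b4} still licenses the theorem.
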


        \begin{proof}
We are going to show that $Z_n^{(0)}(\theta)$ tends to be the greatest in the  ordering, defined by $\prec_\pi$, uniformly for all $\theta \notin B(f^*,\varepsilon)$ as $n\to \infty$. For $j \in [m-1]$ the difference between the reference variables can be bounded from below as follows
\begin{equation}\label{eq:infimum-refs}
\begin{aligned}
&Z_n^{(0)}(\theta) - Z_n^{(j)}(\theta)\, =\;\, \frac{1}{n} \sum_{i=1}^n (f_\theta(X_i) - f_{\theta,n}^{(0)}(X_i))^2 - \frac{1}{n} \sum_{i=1}^n (f_\theta (X_i) - f_{\theta,n}^{(j)}(X_i))^2\\
&=\;\, \frac{1}{n} \sum_{i=1}^n (f_\theta (X_i) - f_{*}(X_i))^2 + \frac{1}{n} \sum_{i=1}^n (f^*(X_i) - f_{*,n}^{(0)}(X_i))^2 \\
&\quad +\frac{2}{n} \sum_{i=1}^n (f_\theta (X_i) - f_{*}(X_i))(f^*(X_i) - f_{*,n}^{(0)}(X_i))-\frac{1}{n} \sum_{i=1}^n (f_\theta (X_i) - f_{\theta,n}^{(j)}(X_i))^2\\
&\geq\;\,\EE \big[(f_\theta (X)-f^*(X))^2\big] - \EE \big[(f_\theta (X)-f^*(X))^2\big] \\
&\quad + \frac{1}{n} \sum_{i=1}^n (f_\theta (X_i) - f_{*}(X_i))^2 +
\EE \big[(f^*(X)-f_{*,n}^{(0)}(X))^2\big] \\
&\quad -\EE \big[(f^*(X)-f_{*,n}^{(0)}(X))^2\big] + \frac{1}{n} \sum_{i=1}^n (f^*(X_i) - f_{*,n}^{(0)}(X_i))^2 \\
&\quad  -\EE \big[(f_{\theta} (X)-f_{\theta,n}^{(j)}(X))^2\big]+ \EE \big[(f_{\theta}(X)-f_{\theta,n}^{(j)}(X))^2\big]\\
&\quad -\frac{1}{n} \sum_{i=1}^n (f_\theta(X_i) - f_{\theta,n}^{(j)}(X_i))^2 -\frac{4}{n} \sum_{i=1}^n \big|\,f^*(X_i)-f_{*,n}^{(0)}(X_i)\,\big| ,
\end{aligned}
\end{equation}
where $X$ is independent of $\CD_0^\pi$ and $\CD_j^\pi(\theta)$ and the expectation is 
w.r.t.\ variables  $X,\CD_0$ and $\CD_j^\pi(\theta)$.
We want to bound the infimum of the left hand side for $f_\theta \in \CF_0 \setminus B(f^*,\varepsilon)$ from below. First, notice that
\begin{align}
\inf_{f_\theta \notin B(f^*,\varepsilon)}\EE \big[(f_\theta (X)-f^*(X))^2\big] \geq \varepsilon^2.
\end{align}
Further, for the expectations of the $\mathcal{L}_2$-errors we have
\begin{equation}\label{eq:37}
\begin{aligned}
&\EE \big[(f^*(X)-f_{*,n}^{(0)}(X))^2\big] \geq 0, \\
&\EE \big[(f_\theta(X)-f_{\theta,n}^{(j)}(X))^2\big] \leq \frac{c_1 + (c_2+c_3 \log(n))V_{\CF_0^{+}}}{n}\doteq a_n,
\end{aligned}
\end{equation}
where we used Theorem \ref{thm:expect} from Appendix \ref{app:theorems} \cite[Theorem 11.5]{gyorfi2002distribution} with the observation that \begin{equation}
\inf_{f \in \CF_0} \int \,|\,f(x) - f^*(x)\,|^2 \dd \QX (x) =0
\end{equation}
follows from $f^* \in \CF_0$. Additionally, Theorem \ref{thm:expect} can be applied directly without the truncation, because $|Y|\leq 1$ and $\sup_{x\in \RR^d}|f(x)|\leq 1$ for all $f \in \CF_0$.

The deviations from the expected values are bounded with a supremum over $\CF_0$ as
\begin{equation}
\begin{aligned}
&\EE \big[ (f_\theta (X)-f^*(X))^2\big] - \frac{1}{n} \sum_{i=1}^n (f_\theta (X_i) - f_{*}(X_i))^2\\
&+ \EE \big[(f^*(X)-f_{*,n}^{(0)}(X))^2\big]  - \frac{1}{n} \sum_{i=1}^n (f^*(X_i) - f_{*,n}^{(0)}(X_i))^2 \\
&- \EE \big[(f_\theta(X)-f_{\theta,n}^{(j)}(X))^2\big]  + \frac{1}{n} \sum_{i=1}^n (f_\theta(X_i) - f_{\theta,n}^{(j)}(X_i))^2 \\
&\leq 3\sup_{f,g \in \CF_0}\Big|\, \frac{1}{n} \sum_{i=1}^n (f(X_i) -g(X_i))^2 - \EE \big[(f(X)-g(X))^2\big] \Big|\notag\\
&=3\sup_{h \in \CG}\Big|\, \frac{1}{n} \sum_{i=1}^n h(X_i) - \EE [h(X)] \Big|,
\label{eq:38}
\end{aligned}
\end{equation}
where $\CG$ is defined in \eqref{eq:h-definition}. The inequality holds because $f_\theta$, $f^*$ and the estimators are all included in $\CF_0$.
For the last term in \eqref{eq:infimum-refs} we apply the Cauchy--Schwartz inequality, then we bound the obtained value with the same supremum as in \eqref{eq:38} plus the zero sequence $a_n$, defined in \eqref{eq:37}, as
\begin{align}
&\Big(\frac{1}{n}\sum_{i=1}^n \big|\,f^*(X_i)-f_{*,n}^{(0)}(X_i)\,\big|\Big)^2 \leq \frac{1}{n^2}\sum_{i=1}^n (f^*(X_i)-f_{*,n}^{(0)}(X_i))^2 \sum_{i=1}^n 1 \notag\\
&\leq \frac{1}{n}\sum_{i=1}^n (f^*(X_i)-f_{*,n}^{(0)}(X_i))^2 - \EE \big[(f^*(X)-f_{*,n}^{(0)}(X))^2\big] + \EE\big[(f^*(X)-f_{*,n}^{(0)}(X))^2\big]   \notag\\
&\leq \sup_{f,g \in \CF_0} \Big|\,\frac{1}{n}\sum_{i=1}^n (f(X_i)-g(X_i))^2 - \EE \big[(f(X)-g(X))^2\big]\,\Big| + a_n\\
&=\sup_{h \in \CG}\Big|\, \frac{1}{n} \sum_{i=1}^n h(X_i) - \EE[h(X)] \Big| + a_n.
\end{align}
For the sake of simplicity let
\begin{equation}\label{eq:55}
A_n\doteq \sup_{h \in \CG}\Big|\, \frac{1}{n} \sum_{i=1}^n h(X_i) - \EE[h(X)]\,\Big|.
\end{equation}
To sum up, we can bound the infimum of \eqref{eq:infimum-refs} from below as
\begin{equation}\label{eq:56}
\begin{gathered}
\inf_{f_\theta \notin B(f^*,\varepsilon)} (Z_n^{(0)}(\theta) - Z_n^{(j)}(\theta)) \geq \varepsilon^2 - a_n -3A_n - 4 \sqrt{ A_n+ a_n}.
\end{gathered}
\end{equation}
Observe that $(a_n)_{n\in\NN}$ is a deterministic sequence and tends to zero as $n\to\infty$. If $(A_n)_{n\in\NN}$ converges to zero almost surely, then for all $j \in [m-1]$ a.s. there exists $n_j \in \NN$ such that for all $n >n_j$ difference $Z_n^{(0)}(\theta)-Z_n^{(j)}(\theta)$  becomes positive for all $f_\theta \notin B(f^*,\varepsilon)$, and by the construction for all $n  > \max_{j\in [m-1]} n_j$ we have $\CF_{\varrho,n}^{(1)} \subseteq B(f^*,\varepsilon)$. 

It remained to prove that $A_n \xrightarrow{\,a.s.\,}0$. We use a uniform law of large numbers. By Lemma \ref{lemma:covering-bound} the assumption of Theorem \ref{thm:ulln} is satisfied, i.e., for all $\varepsilon>0$  we have
\begin{equation}
\begin{gathered}
\sum_{n=1}^\infty \EE\big(\, \CN_1(\varepsilon/8,\CG,\{X_1,\dots,X_n\})\,\big)e^{-\nicefrac{n\varepsilon^2}{(128B^2)}}
\leq C(\varepsilon, V_{\CF_0^{+}})\cdot\sum_{n=1}^\infty e^{-\nicefrac{n\varepsilon^2}{(128B^2)}} < \infty,
\end{gathered}
\end{equation}
where $C(\varepsilon, V_{\CF_0^{+}})$ is defined in the proof of Lemma \ref{lemma:covering-bound}, $B=4$, because $(f(x)-g(x))^2 \in [0,4]$ for all $x\in \BX$ and $f,g \in \CF_0$.
Consequently $A_n \xrightarrow{\,a.s.\,}0$ holds.
\end{proof}

	Lemma \ref{lemma:Lipschitz-cont} formalizes the final step of proving Theorem \ref{theorem:algorithm-I}.
	
	\begin{lemma}\label{lemma:Lipschitz-cont}
		Assume A\ref{ass:b3}, A\ref{ass:b5} and that \eqref{eq:model-uniformcons} holds for all $\varepsilon >0$, then for all $\varepsilon >0$, we have
		\begin{equation}\label{eq:parameter-space-version}
			\mathbb{P}\,\bigg(\,\bigcup_{k=1}^{\infty} \bigcap_{n=k}^{\infty} \left\{\, \Theta_{n}^{(1)} \subseteq B(\theta^*, \varepsilon) \,\right\} \bigg) \,=\, 1.
			\vspace{0mm}
		\end{equation}
	\end{lemma}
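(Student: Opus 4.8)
The plan is to transport the $\mathcal{L}_2(\QX)$-consistency on the model side, which is exactly the content of \eqref{eq:model-uniformcons}, to the parameter side by means of the inverse Lipschitz condition A\ref{ass:b5}. The single structural fact I would use is the deterministic set inclusion
\[
\big\{\,\theta \in \Theta \;|\; \norm{f_\theta - f^*}_{\scriptscriptstyle Q} < \varepsilon/L\,\big\} \;\subseteq\; B(\theta^*, \varepsilon),
\]
which follows immediately from A\ref{ass:b5}: for any such $\theta$ we have $\Delta(\theta,\theta^*) \le L\,\norm{f_\theta - f_{\theta^*}}_{\scriptscriptstyle Q} = L\,\norm{f_\theta - f^*}_{\scriptscriptstyle Q} < \varepsilon$, using that $f_{\theta^*}=f^*$, where $\theta^*$ is well-defined by A\ref{ass:b3}.

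Concretely, I would fix $\varepsilon>0$, set $\varepsilon' \doteq \varepsilon/L$, and establish the event inclusion $\{\,\CF_n^{(1)} \subseteq B(f^*,\varepsilon')\,\} \subseteq \{\,\Theta_n^{(1)} \subseteq B(\theta^*,\varepsilon)\,\}$ for every $n \in \NN$ and every sample point. Indeed, on the left-hand event every $\theta \in \Theta_n^{(1)}$ satisfies $f_\theta \in \CF_n^{(1)} \subseteq B(f^*,\varepsilon')$, hence $\norm{f_\theta - f^*}_{\scriptscriptstyle Q} < \varepsilon'$, and the inclusion above yields $\theta \in B(\theta^*,\varepsilon)$; since $\theta \in \Theta_n^{(1)}$ was arbitrary, $\Theta_n^{(1)} \subseteq B(\theta^*,\varepsilon)$.

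Taking intersections over $n \ge k$ and unions over $k$ preserves this inclusion, so
\[
\bigcup_{k=1}^{\infty}\bigcap_{n=k}^{\infty}\big\{\,\CF_n^{(1)} \subseteq B(f^*,\varepsilon')\,\big\} \;\subseteq\; \bigcup_{k=1}^{\infty}\bigcap_{n=k}^{\infty}\big\{\,\Theta_n^{(1)} \subseteq B(\theta^*,\varepsilon)\,\big\}.
\]
By hypothesis \eqref{eq:model-uniformcons} applied with the rescaled radius $\varepsilon' = \varepsilon/L > 0$, the left-hand event has probability one, and monotonicity of $\PP$ then forces the right-hand event to have probability one as well, which is precisely \eqref{eq:parameter-space-version}. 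There is essentially no hard step here: the only points requiring care are to invoke the model-side consistency with the radius $\varepsilon/L$ rather than $\varepsilon$, and to note that the passage from functions to parameters is a \emph{pointwise} (sample-by-sample) set inclusion, so it survives the $\liminf$-type combination of events unchanged.
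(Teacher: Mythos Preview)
Your proof is correct and follows essentially the same approach as the paper: use A\ref{ass:b5} to obtain the pointwise event inclusion $\{\CF_n^{(1)} \subseteq B(f^*,\varepsilon/L)\} \subseteq \{\Theta_n^{(1)} \subseteq B(\theta^*,\varepsilon)\}$, pass to the $\liminf$ of events, and apply \eqref{eq:model-uniformcons} with radius $\varepsilon/L$. Your write-up is in fact slightly more explicit than the paper's in justifying the event inclusion and its preservation under the $\liminf$ operation.
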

	\begin{proof}
If $f_\theta \in B(f^*, \nicefrac{\varepsilon}{L})$, then $\theta \in B(\theta_*, \varepsilon)$, because of A\ref{ass:b5}.
Hence, it follows that
\begin{equation}\label{eq:contain}
\left\{\, \Theta_{\varrho, n}^{(1)} \subseteq B(\theta_*, \varepsilon) \,\right\} \supseteq \left\{\, \CF_{\varrho, n}^{(1)} \subseteq B(f^*, \nicefrac{\varepsilon}{L}) \,\right\}.
\end{equation} 
The inclusion also holds for the $\liminf$ of events.
Finally, since \eqref{eq:model-uniformcons} holds for $\nicefrac{\varepsilon}{L}$ by \eqref{eq:contain}  we also have \eqref{eq:parameter-space-version}.
\end{proof}

\begin{customthm}{\ref{theorem:algorithm-I}}
Assume A\ref{ass:b1}-A\ref{ass:b6}, then for all sample size $n \in \NN$ and integers $q\leq m$ we have $\PP\big(\,\theta^* \in \Theta_{n}^{(1)}\,\big)\, = \, \nicefrac{q}{m}$.
		  In addition, if\, $q <m$, then $(\Theta_{n }^{(1)})_{n \in \NN}$ is strongly uniformly consistent, i.e., by 
        denoting $B(\theta^*, \varepsilon) \doteq \{\hspace{0.3mm}\theta\;|\;\theta \in \Theta,\, \Delta(\theta,\theta^*)< \varepsilon\hspace{0.3mm}\}$, we have for 
        all $\varepsilon > 0$ that
         \begin{equation}\label{eq:strong-uniform-consistency}
         \PP \bigg( \bigcup_{n=1}^\infty \bigcap_{k=n}^\infty \{ \widehat{\Theta}_k^{(1)} \subseteq B(\theta^*, \varepsilon)\}  \bigg) = 1.
         \end{equation}
\end{customthm}

\begin{proof} 
    Observe that the statement regarding the exact coverage probability follows from Theorem \ref{theorem:exact-confidence}, because P\ref{ass:p1} and P\ref{ass:p2} are satisfied by the ERM-based ranking.

    Under A\ref{ass:b6} we have \eqref{eq:bounded-covering}, hence Lemma \ref{lemma:uniform-bound} proves that the confidence regions are uniformly consistent in the model space. By Lemma \ref{lemma:Lipschitz-cont} in this case condition A\ref{ass:b5} is sufficient for uniform consistency in the parameter space.
\end{proof}

\begin{customthm}{\ref{theorem:quantitative-uniform}}
Assume A\ref{ass:b1}-A\ref{ass:b6} and $q <m$. For all $\varepsilon >0$ there exists $N_0$ such that for $n \geq N_0$:
		\begin{equation}\label{eq:PAC-uniform}
			\PP \,\big( \,\Theta_{n}^{(1)} \subseteq B(\theta^*, \varepsilon)\,\big) \,\geq\, 1- C_1\,e^{-n\lambda_1},
		\end{equation}
		where $C_1$ depends only on $\varepsilon$ and  $V_{\CF_0^{+}}$, and $\lambda_1$ depends only on $\varepsilon$ and the Lipschitz constant.
\end{customthm}

\begin{proof} 
Similarly as in the proof of Theorem \ref{theorem:algorithm-I} observe that
\begin{equation}
\begin{gathered}
\big\{ \Theta_{\varrho, n}^{(1)} \subseteq B(\theta_*, \varepsilon) \big\} \supseteq 
 \big\{ \forall j \in [m-1]:\inf_{\theta \notin B(\theta_*,\varepsilon)} \big(Z_n^{(0)}(\theta) - Z_n^{(j)}(\theta)\big) >0 \big\}.
\end{gathered}
\end{equation}
By Lemma \ref{lemma:Lipschitz-cont} and \eqref{eq:56} we obtain for all $j \in [m-1]$ that
\begin{equation}
\begin{aligned}
\inf_{\theta \notin B(\theta_*,\varepsilon)}\big(Z_n^{(0)}(\theta) - Z_n^{(j)}(\theta)\big)&\geq \inf_{f_\theta \notin B(f^*,\varepsilon/L)}\big(Z_n^{(0)}(\theta) - Z_n^{(j)}(\theta)\big)\\
&\geq \frac{\varepsilon^2}{L^2}-a_n - 3 A_n - 4\sqrt{A_n+a_n},
\end{aligned}
\end{equation}
where $L$ is the Lipschitz constant, $(a_n)_{n\in\NN}$ is a deterministic zero sequence defined in \eqref{eq:37} and $A_n$ is the random (measurable) supremum of the deviations defined in \eqref{eq:55}. Notice that the lower bound does not depend on index $j$, consequently by elementary manipulations
\begin{equation}
\hspace{-2.6mm}
\begin{aligned}
&\;\PP \big(\, \Theta_{\varrho,n}^{(1)} \subseteq B(\theta_*, \varepsilon)\,\big)\geq \PP \big( \forall j \in [m-1]:\inf_{\theta \notin B(\theta_*,\varepsilon)} \big(Z_n^{(0)}(\theta) - Z_n^{(j)}(\theta)\big) >0 \big)\hspace*{-5mm}\\
&\geq\PP \Big(\, \forall j \in [m-1]:\inf_{f_\theta \notin B(f^*,\varepsilon/L)} \big(Z_n^{(0)}(\theta) - Z_n^{(j)}(\theta)\big) >0 \,\Big)\\
&\geq \PP \bigg(\, \frac{\varepsilon^2}{2L^2}> a_n + 3A_n + 4 \sqrt{A_n + a_n} \,\bigg)\\
&\geq \PP \bigg(\, \frac{\varepsilon^2}{6L^2}> a_n,\frac{\varepsilon^2}{6L^2}\geq  3A_n, \frac{\varepsilon^2}{6L^2}\geq 4 \sqrt{A_n + a_n} \,\bigg) \\
&= \PP \bigg(\, \frac{\varepsilon^2}{6L^2}> a_n,\frac{\varepsilon^2}{18L^2}\geq A_n, \frac{\varepsilon^4}{24^2L^4}\geq A_n + a_n \,\bigg)\\
&\geq \PP \bigg( \min\Big(\frac{\varepsilon^2}{6L^2},\frac{\varepsilon^4}{2 \cdot 24^2 L^4}\Big)> a_n,\min\Big(\frac{\varepsilon^2}{18L^2},\frac{\varepsilon^4}{2 \cdot 24^2L^4}\Big) \geq  A_n \bigg).
\end{aligned}
\end{equation}
Since $(a_n)$ is deterministic and tends to zero there exists $n_0$ such that for all $n >n_0$: 
\begin{equation}\label{eq:27}
\min\Big(\frac{\varepsilon^2}{6L^2},\frac{\varepsilon^4}{2 \cdot 24^2 L^4}\Big)> a_n
\end{equation}
holds. Furthermore let
\begin{equation}
\tau\doteq\min\Big(\frac{\varepsilon^2}{18L^2},\frac{\varepsilon^4}{2 \cdot 24^2L^4}\Big),
\end{equation}
then by Theorem \ref{thm:ulln} and Lemma \ref{lemma:covering-bound} for $n > n_0$ we have
\begin{equation}
\begin{aligned}
&\PP( A_n \leq \tau) = 1- \PP( A_n > \tau)\geq 1 - 8 \cdot C \cdot \exp\Big(-\frac{n\tau^2}{128B^2}\Big),
\end{aligned}
\end{equation}
where $B=4$. Consequently with constants $C_1 \doteq 8C$ and  $\lambda_1\doteq \frac{\tau^2}{128B^2}$ Theorem  \ref{theorem:quantitative-uniform} follows.
\end{proof}
		We can quantify the data size which is required for the PAC bound to hold. 
		The only condition, which needs to be guaranteed in the proof, is \eqref{eq:27}. We can show that 
		\begin{equation}
			\begin{gathered}
				N_0 \doteq \bigg(\frac{c_1 + (c_2 + c_3) V_{\CF_0^{+}} }{\varepsilon'}\bigg)^{\!2} <\,n, \;\;\text{where} \;\; \varepsilon' \,\doteq\, \min\bigg(\frac{\varepsilon^2}{6L^2},\frac{\varepsilon^4}{2 \cdot 24^2 L^4}\bigg)
			\end{gathered}
		\end{equation}
		and constants $c_1$, $c_2$ and $c_3$ are defined in \cite[Theorem 11.5]{gyorfi2002distribution},
		is sufficient to satisfy \eqref{eq:27}.

\section{Model Classes with Finite Pseudo-Dimension}\label{app:model-classes}

The ERM-based method can be applied on a wide variety of model classes. In the main paper we considered the perceptron, as a demonstrative example. In this section several model classes are presented in which the ERM-based method constructs uniformly consistent abstract confidence regions for the true regression function.

\subsection{Linear Models}
	Let $\CF_0$ be a subset of a $k$ dimensional vector space, $\widetilde{\CF}$, spanned by an orthonormal basis $\varphi_1,\dots,\varphi_k$ of $\BX \to [-1,1]$ type functions, i.e.,
	\begin{equation}
		f_\theta(x)\, =\, \sum_{i=1}^k \theta_i\, \varphi_i(x),
	\end{equation}
	for some $\theta \in \RR^k$, thus $\Theta \subseteq \RR^k$. We know that the range of the  regression function is limited to $[-1,1]$, therefore we consider the supremum ball with radius $1$ in linear space $\widetilde{\CF}$. It is known that $V_{\widetilde{\CF}^{+}} \leq k+1$, cf. \cite[Theorem 9.5]{gyorfi2002distribution}, hence A\ref{ass:b6} holds. The inverse Lipschitz condition is verified by Lemma \ref{lemma:inverse-Lipschitz}, therefore in this framework our method builds exact, strongly uniformly consistent confidence regions for the true regression function.
	
	We should note that there is no explicit formula for the optimal solution of the empirical risk minimization problem over $\CF_0$ when it is not the entire linear space $\widetilde{\CF}$, in which case the ordinary least squares (OLS) estimator provides analitical solution under mild conditions. The range of the OLS estimator might include elements that are not in the desired $[-1,1]$ interval. This problem can be resolved in several ways. Obviously, we can truncate the OLS estimator into the $[-1,1]$ interval. The advantage of this approach is that generalizing the theory to these variants of our method is fairly easy, because truncation does not increase the VC dimension of a model class. On the other hand one can argue that our estimator should be included in the given model class. Therefore, another useful approach is to restrict the parameters to ensures the boundedness of the linear combination in $[-1,1]$, for example, we can require the estimator to be a convex combination of the basis functions. 

\begin{lemma}\label{lemma:inverse-Lipschitz}
Let $\CF_0$ be a finite dimensional linear subspace of $\mathcal{L}_2(\QX)$ spanned by an orthonormal system of vectors $\{\varphi_1, \dots,\varphi_k\}$. Then the linear parameterization is an isometry, i.e., for all $\alpha,\beta \in \RR^k$ we have
\vspace{-1mm}
\begin{equation}\label{eq:lin_par}
\norm{\alpha-\beta}_2 =\Big\| \sum_{i=1}^k \alpha_i \varphi_i(x) - \sum_{i=1}^k \beta_i \varphi_i(x) \Big\|_{\scriptscriptstyle Q},
\end{equation}
henceforth it is also inverse Lipschitz-continuous.
\end{lemma}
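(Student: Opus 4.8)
The plan is to exploit the linearity of the map $\theta \mapsto f_\theta$ together with the defining property of an orthonormal system in $\mathcal{L}_2(\QX)$. First I would set $\gamma \doteq \alpha - \beta \in \RR^k$ and observe that, by linearity of the parameterization, $\sum_{i=1}^k \alpha_i \varphi_i - \sum_{i=1}^k \beta_i \varphi_i = \sum_{i=1}^k \gamma_i \varphi_i$; hence it suffices to show that $\big\| \sum_{i=1}^k \gamma_i \varphi_i \big\|_{\scriptscriptstyle Q} = \norm{\gamma}_2$ for every $\gamma \in \RR^k$.

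Next I would expand the squared $\mathcal{L}_2(\QX)$-norm. Since the sum is finite and each $\varphi_i \in \mathcal{L}_2(\QX)$ (so every product $\varphi_i \varphi_j$ is $\QX$-integrable by the Cauchy--Schwarz inequality), linearity of the integral gives
\[
\Big\| \sum_{i=1}^k \gamma_i \varphi_i \Big\|_{\scriptscriptstyle Q}^2 = \int_{\BX} \Big( \sum_{i=1}^k \gamma_i \varphi_i(x) \Big)^2 \dd\QX(x) = \sum_{i=1}^k \sum_{j=1}^k \gamma_i \gamma_j \int_{\BX} \varphi_i(x)\varphi_j(x) \dd\QX(x).
\]
Invoking orthonormality, i.e.\ $\int_{\BX} \varphi_i \varphi_j \dd\QX = \BI(i=j)$, collapses the double sum to $\sum_{i=1}^k \gamma_i^2 = \norm{\gamma}_2^2$, and taking square roots yields \eqref{eq:lin_par}.

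Finally, \eqref{eq:lin_par} says precisely that $\theta \mapsto f_\theta$ is a linear isometry from $(\RR^k, \norm{\cdot}_2)$ onto $(\CF_0, \norm{\cdot}_{\scriptscriptstyle Q})$; in particular $\Delta(\theta_1,\theta_2) = \norm{\alpha - \beta}_2 = \norm{f_{\theta_1} - f_{\theta_2}}_{\scriptscriptstyle Q} \leq 1 \cdot \norm{f_{\theta_1} - f_{\theta_2}}_{\scriptscriptstyle Q}$, so A\ref{ass:b5} holds with $L = 1$ (and injectivity, A\ref{ass:b3}, follows as well). I do not anticipate a genuine obstacle in this argument; the only point that deserves a word of care is that ``orthonormal'' must be read with respect to the $\mathcal{L}_2(\QX)$ inner product, so the whole computation is carried out against the input marginal $\QX$ rather than, say, Lebesgue measure. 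Once that is kept in mind, every step is routine, and the boundedness of the $\varphi_i$ in $[-1,1]$ is not even needed.
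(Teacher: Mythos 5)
Your proof is correct and is exactly the argument the paper intends: its one-line proof (``the orthonormality of $\varphi_1,\dots,\varphi_k$ implies \eqref{eq:lin_par}'') is just the Pythagorean expansion you spell out, namely $\|\sum_i \gamma_i\varphi_i\|_{\scriptscriptstyle Q}^2=\sum_{i,j}\gamma_i\gamma_j\langle\varphi_i,\varphi_j\rangle_{\scriptscriptstyle Q}=\|\gamma\|_2^2$ with $\gamma=\alpha-\beta$. Your added remarks (integrability via Cauchy--Schwarz, $L=1$ in A\ref{ass:b5}, and that orthonormality is taken w.r.t.\ $\QX$) are accurate and do not change the route.
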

\begin{proof}
The orthonormality of $\varphi_1, \dots, \varphi_k$ implies \eqref{eq:lin_par}. 
\end{proof}

\subsection{Radial Basis Function Networks}
	In the second example we examine the model class of radial basis function networks of \cite{lowe1988multivariable} with one hidden layer on $\BX \subseteq \RR^d$.
	
	Let $K: \RR^{+} \to \RR$ be a local averaging kernel function. 
    The most popular choices are the na{\"i}ve kernel $K(z) \doteq \BI(\,z \in [\hspace{0.3mm}0,1)\,)$ and the Gaussian kernel $K(z)\doteq  e^{-z^2}$. We consider regular kernels, which are nonnegative, monotonically decreasing, left continuous and satisfy
	\begin{equation}
		\begin{gathered}
			\int_{\RR^d} K(\norm{x}) dx \neq 0\qquad\text{and}\qquad
			\int_{\RR^d} K(\norm{x}) dx < \infty,
		\end{gathered}
	\end{equation}
	with the Euclidean norm $\norm{\cdot}$. 
    It is easy to prove that regular kernels are bounded \citep{krzyzak1997radial, lei2014generalization}.  Let $K(z) \leq K_*$ for all $z \in \RR^{+}$ and $k$ be the number of computational units, i.e., consider the model class
	\begin{equation}
		\begin{aligned}
			\CF_0\, &\doteq\, \Big\{ \sum_{i=1}^k w_i K( \norm{x-c_i}_{A_i}) + w_0 \;\,\big|\;\,\\
            & \quad w_1, \dots,w_n \in \RR,
			c_1,\dots, c_k \in \RR^d, A_1,\dots, A_k \in \RR^{d \times d}, \sum_{i=1}^k |w_i| < B\Big\},
		\end{aligned}
	\end{equation}
	where $\norm{x-c_i}_{A_i} \doteq (x-c_i)\tr A (x-c_i)$.
	Restriction $\sum_{i=1}^k |w_i| < B$ is made both to control the boundedness of the models and the complexity or capacity of the model class.
	
	The verification of the inverse Lipschitz property is challenging and depends on the specific choice of the kernel, but the uniform convergence in $\mathcal{L}_2(\QX)$ can be guaranteed by bounding the proper complexity measure. We refer to the result in \cite[Lemma 17.2]{gyorfi2002distribution} where the random $\mathcal{L}_1$-cover of $\CF_0$ is bounded independently of the sample as
	\begin{equation}
		\begin{aligned}
			\CN_1 ( \nicefrac{\varepsilon}{8},\CF_0,\{x_1,\dots,x_n\}) \leq\, 3^k \Big( \frac{96 e B (k+1)}{\varepsilon}\Big) ^{(2d^2+2d +3)k+1} \doteq\, R.
		\end{aligned}
	\end{equation}
	Combining this inequality with
	\begin{equation}
		\!\!\!\!\CN_1(\varepsilon,\CH,\{x_1,\dots,x_n\}) \leq \big(\CN_1(\varepsilon/8,\CF_0,\{x_1,\dots,x_n\})\big)^2\!\!,\!\!
	\end{equation}
	which is proved in Appendix \ref{app:algorithm-I},
	yields that 
	\begin{equation}
		\CN_1(\varepsilon, \CH, \{X_1,\dots,X_n\})\, \leq\, R^2.
	\end{equation}
	This is sufficient for the uniform consistency  in  $\mathcal{L}_2(\QX)$.
	
	The training of radial basis function networks suffers from high computational burden and effective methods for finding the global optimum are only known in special cases. Nevertheless, in practice stochastic gradient type approaches are widely used and provide estimates with good performance. These estimates can also be used as $\{f_{\theta,n}^{(j)}\}$ functions to define a ranking, and the exact confidence of the constructed region is guaranteed.
	
	\subsection{Neural Networks}
	
	 Deep neural networks are a generalizations of the single neuron model and are widely used in many different areas including industry, economics and health care. We consider a class of feed-forward models, however, we emphasize that our method works for any fixed class of neural networks with finite pseudo-dimension.
	Let $\CF_0$ be a class of feed-forward neural networks with a fixed layer structure containing $W$ parameters (weights and biased terms together), $M$ layers and $k$ computation units. Either the sigmoid function
	\begin{equation}
		\sigma_S(x)\, \doteq\, \frac{1}{1+ e^{-x}},
	\end{equation}
	or the recently favored rectified linear unit (ReLU) 
	\begin{equation}
		\sigma_R(x)\, \doteq\, x \cdot \BI(x\geq 0),
	\end{equation}
	can be used as activation in each computation unit.
	
	By \cite[Theorem 14.2]{anthony2009neural} for neural networks with sigmoid activation functions we have
	\begin{align} \label{eq:35} 
		\hspace{-0.15cm}V_{\CF_0^{+}} \hspace*{-1mm} \leq ((W+2)k)^2  \hspace*{-1mm}+ \hspace*{-1mm} 11(W+2)k \log_2(18(W+2)k^2).
	\end{align}
	Similary for the ReLU activation it is proved by \cite[Theorem 7]{bartlett2019nearly} that under general conditions the pseudo-dimension is $O (\,W M \log(W)\,)$, therefore
	uniform convergence in $\mathcal{L}_2(\QX)$ is a corrolary of Lemma \ref{lemma:uniform-bound}.
	For a particular example when $\BX \subseteq \RR^d$ we consider a network with one hidden layer, i.e., let the model class be defined by
	\begin{equation}
		\begin{gathered}
			\CF_0\, \doteq\, \Big\{ \, \sum_{i=1}^k c_i \sigma_S(a_i^\intercal x + b_i) + c_0 \,\big|\, \sum_{i=1}^k |c_i| \leq B,
			\, c_0,\dots c_k, b_1, \dots, b_k \in \RR, a_1, \dots, a_k \in \RR^d \,\Big\},
		\end{gathered}
	\end{equation}
	where condition $\sum_{i=1}^k |c_i| \leq B$ ensures the boundedness of the models. In this particular case the number of parameters is $W = (k+1)+ k + k\cdot d$, and because of \eqref{eq:35}, the pseudo-dimension of $\CF_0$ is finite.
	
	The training of artificial neural networks faces several computational and theoretical challenges, because in general it requires solving highly non-convex optimization problems. Approximate solutions (e.g., which are close to a local optimum) can be found by stochastic gradient descent type algorithms (cf.\ backpropagation). These solutions provide feasible estimates with decent performances, which can also be used in our uncertainty quantification framework as $\{f_{\theta,n}^{(j)}\}$ functions to define the ranking, leading to exact coverage guarantees.

\section{Proof of Theorem \ref{theorem:algorithm-II} and Theorem \ref{theorem:quantitative-knn}}\label{app:algorithm-II}

\begin{customthm}{\ref{theorem:algorithm-II}}
            Assume A\ref{ass:b1}-A\ref{ass:b4}, then for all sample size $n \in \NN$ and integers $q\leq m$ we have $\PP\big(\,\theta^* \in \Theta_{\varrho,n }^{(2)}\,\big) = \nicefrac{q}{m}$.
	    Further, if $q <m$, 
		$\nicefrac{k_n}{n} \to 0$, and there exists $\delta >0$ such that $\nicefrac{k_n^2}{n^{(1+\delta)}} \to \infty$ as $n\to\infty$, then $(\,\Theta_{n }^{(2)}\,)_{n \in \NN}$ is strongly pointwise consistent, i.e., for $\theta\neq\theta^*$:
        \begin{equation}
            \PP\bigg( \bigcap_{n=1}^\infty \bigcup_{k=n}^\infty \big\{ \theta \in \widehat{\Theta}_k^{(2)}\big\} \bigg) = 0
        \end{equation}
            holds for all distribution of $(X,Y)$ such that
		$\PP\big(\hspace{0.3mm}\norm{X- X'}_2 \neq \norm{X-X''}_2\hspace{0.3mm}\big) =1,$
		where $X'$ and $X''$ are independent copies of $X$.
\end{customthm}

\begin{proof} 
We are going to show that for $\theta \neq \theta_*$ reference variable $Z_n^{(0)}(\theta)$ tends to be the greatest in the ordering.
The proof will be presented in two steps. First, we reduce the problem to the almost sure convergence of empirical $\mathcal{L}_1$-errors to zero, then we prove that the convergence holds for the kNN estimates.

Let $\theta \neq \theta_*$ and fix an index $j \in [m-1]$. We bound the difference between the reference variables from below as
\begin{equation}\label{quantity}
\begin{aligned}
&Z_n^{(0)}(\theta)- Z_n^{(j)}(\theta) = \frac{1}{n}\sum_{i=1}^n\big(f_\theta(X_i)- f_{*,n}^{(0)}(X_i)\big)^2 - \frac{1}{n}\sum_{i=1}^n\big(f_\theta(X_i)- f_{\theta,n}^{(j)}(X_i)\big)^2 \\
&= \frac{1}{n}\sum_{i=1}^n\big(f_\theta(X_i)- f^*(X_i)\big)^2 +\frac{1}{n}\sum_{i=1}^n \big(f^*(X_i)- f_{*,n}^{(0)}(X_i)\big)^2 \\
&\quad +\frac{2}{n}\sum_{i=1}^n \big( f_\theta(X_i) -f^*(X_i)\big) \big( f^*(X_i) - f_{*,n}^{(0)}(X_i)\big)\\
&\quad- \frac{1}{n}\sum_{i=1}^n\big(f_\theta(X_i)- f_{\theta,n}^{(j)}(X_i)\big)^2\\
&\geq \frac{1}{n}\sum_{i=1}^n\big(\,f_\theta(X_i)- f^*(X_i)\,\big)^2-\frac{2}{n}\sum_{i=1}^n \big|\,f^*(X_i)- f_{*,n}^{(0)}(X_i)\,\big| 
\\
&\quad -\frac{4}{n}\sum_{i=1}^n \big|\, f^*(X_i)- f_{*,n}^{(0)}(X_i)\,\big| 
- \frac{2}{n}\sum_{i=1}^n\big|\,f_\theta(X_i)- f_{\theta,n}^{(j)}(X_i)\,\big|,
\end{aligned}
\end{equation}
where we used that $f_\theta$, $f^*$ and the estimators are all bounded in $[-1,1]$.  Observe that by the strong law of large numbers as $n \to \infty$ we have
\begin{equation}
\frac{1}{n}\sum_{i=1}^n\big(\,f_\theta(X_i)- f^*(X_i)\,\big)^2 \xrightarrow{\text{a.s.}} \kappa^2 \doteq \norm{f_\theta- f^*}_{\scriptscriptstyle Q}^2,
\end{equation}
where $\kappa >0$, because of A\ref{ass:b2}.
It remained to prove that both
\begin{align}\label{eq:convergences1}
&\frac{1}{n}\sum_{i=1}^n \big|\,f^*(X_i)- f_{*,n}^{(0)}(X_i)\,\big|\xrightarrow{\,a.s.\,}0, \quad
\frac{1}{n}\sum_{i=1}^n\big|\,f_\theta(X_i)- f_{\theta,n}^{(j)}(X_i)\,\big| 
\xrightarrow{\,a.s.\,}0
\end{align}
hold. As in the proof of Theorem \ref{theorem:algorithm-I}, then  a.s. there exists  $n_j$ such that the quantity in \eqref{quantity} becomes positive, therefore for all $n>\max_{j\in [m-1]} n_j$ we have $\theta \notin \Theta_{\varrho,n}^{(2)}$.

Notice that it is sufficient for \eqref{eq:convergences1} to prove for any possible regression function $f$ that
\begin{equation}\label{eq:L-1error}
\frac{1}{n}\sum_{i=1}^n \big|\,f_{n}(X_i) - f (X_i)\,\big|\xrightarrow{\,a.s.\,}0
\end{equation}
holds, where $f_{n}$ is the kNN estimator of $f$ based on $\{(X_i, Y_i)\}_{i=1}^n$ with $k_n$ neighbors. Hence, we need to prove the consistency of an empirical $\mathcal{L}_1$-error for all distributions of $(X,Y)$ such that $\PP\big(\hspace{0.3mm}\norm{X- X'}_2 \neq \norm{X-X''}_2\hspace{0.3mm}\big) =1$ holds.
In order to obtain \eqref{eq:L-1error} we use the decomposition 
\begin{align}
&\frac{1}{n}\sum_{i=1}^n \big|\,f_{n}(X_i) - f (X_i)\,\big|\\
&= \frac{1}{n}\sum_{i=1}^n \big|\,f_{n}(X_i) - f (X_i)\,\big|- \EE\Big[\frac{1}{n}\sum_{i=1}^n \big|\,f_{n}(X_i) \hspace*{-0.5mm}-\hspace*{-0.5mm} f (X_i)\,\big|\Big] + \EE\Big[\frac{1}{n}\sum_{i=1}^n \big|\,f_{n}(X_i) \hspace*{-0.5mm}- \hspace*{-0.5mm}f (X_i)\,\big|\Big]\notag\\
&\leq \Big|\frac{1}{n}\sum_{i=1}^n \big|\,f_{n}(X_i) - f (X_i)\,\big| - \EE\Big[\frac{1}{n}\sum_{i=1}^n \big|\,f_{n}(X_i) - f (X_i)\,\big|\Big]\Big|+\EE\Big[\frac{1}{n}\sum_{i=1}^n \big|\,f_{n}(X_i) - f (X_i)\,\big|\Big].\notag
\end{align}
First, we prove that the expected value converges to zero, then we apply McDiarmid's inequality, \citep{mcdiarmid1989method}, to show that the empirical $\mathcal{L}_1$-error is concentrated around its expectation. By the linearity of the expectation and the i.i.d.\ property of the sample
\begin{equation}\label{eq:82}
\begin{aligned}
&\EE \Big[\frac{1}{n}\sum_{i=1}^n \big|\,f_{n}(X_i) - f (X_i)\,\big|\Big] = 
\EE \Big[\big|\,f_{n}(X_n) - f (X_n)\,\big|\Big]\\
&=\EE \Big[\,\Big|\,\frac{1}{k_n} \sum_{i=1}^n Y_i \BI(X_i \in N_n(X_n,k_n))- f(X_n) \,\Big|\,\Big]\\
&= \EE  \Big[\,\Big| \frac{1}{k_n}\sum_{i=1}^n(Y_i -f(X_n))\BI(X_i \in N_n(X_n,k_n))\,\Big|\,\Big]\\
&\leq \EE \Big[\,\Big|\frac{1}{k_{n}}\sum_{i=1}^{n-1} (Y_i - f(X_n)) \BI(X_i \in N_n(X_n,k_n))\Big|+ \frac{1}{k_n}\EE\big[\big|Y_n- f(X_n)\big|\big]\,\Big]\\
&\leq \EE \Big[\,\Big|\frac{1}{k_{n}-1}\!\sum_{i=1}^{n-1} (Y_i - f(X_n)) \BI(X_i \in N_{n-1}(X_n,k_n\!-1))\Big| \,\Big]+ \! \frac{2}{k_n}\\
&\leq \EE  \Big[\big|f_{n-1}'(X_n)- f(X_n)\big|\Big] + \frac{2}{k_n},
\end{aligned}
\end{equation}
where $f_{n-1}'$ is a kNN estimate based on $\{(X_i,Y_i)\}_{i=1}^{n-1}$ with $k_n-1$ neighbors. Clearly, $\nicefrac{2}{k_n} \to 0$ as $n\to \infty$. In addition, by Theorem \ref{thm:weak-consistency}, we have 
\begin{equation}
\begin{aligned}
&\EE\big[\,\big|f_{n-1}'(X_n)- f(X_n)\big|\,\big] \leq \sqrt{\EE\big[\,\big|f_{n-1}'(X_n)- f(X_n)\big|^2\,\big]}  \to 0,
\end{aligned}
\end{equation}
because $(k_n-1) \to \infty$ and $\nicefrac{(k_n-1)}{n}\to 0$ as $n\to \infty$ and  $X_n$ is independent of $\{(X_i,Y_i)\}_{i=1}^{n-1}$. 

In the last step we apply McDiarmid's inequality, Theorem \ref{mcdiarmid}, to bound the difference between the empirical error and the expected value. Let $\CD\doteq \{(x_i,y_i)\}_{i=1}^n$ and $g: (\BX \times \BY)^n \to \RR^+$ be the following function
\begin{equation}
\begin{gathered}
g((x_1,y_1),\dots,(x_n,y_n))\doteq \frac{1}{n}\sum_{i=1}^n \big|\,f_{n}^\CD(x_i) - f (x_i)\,\big|,
\end{gathered}
\end{equation}
where $f_n^\CD(x) \doteq \nicefrac{1}{k_n}\sum_{i=1}^n y_i \BI(x_i \in N(x,k_n))$. Observe that $g$ satisfies the {\em bounded difference} property \eqref{eq:bounded-difference}. If two samples, $\CD_1$ and $\CD_2$, differ only in one pair $(x_i,y_i)$, then 
\begin{equation}
\big|f_n^{\CD_1}(x)- f_n^{\CD_2}(x)\big| \leq \nicefrac{2}{k_n}
\end{equation}
holds for all $x \in \BX$. Hence, by the reverse triangle inequality
\begin{equation}
|\,g(\CD_1) - g(\CD_2)\,| \leq \frac{2}{k_n}.
\end{equation}
Consequently, by Theorem \ref{mcdiarmid} we obtain
\begin{equation}
\begin{aligned}
\PP\big(\, \big|g(\CD)- \EE [g(\CD )]\big| \geq \varepsilon \,\big) 
\leq 2 \exp\bigg( -\frac{k_n^2 \varepsilon^2}{2n}\bigg).
\end{aligned}
\end{equation}
By $\nicefrac{k_n^2}{n^{1+\delta}} \to \infty$ for $n$ large enough
\begin{equation}
\exp\bigg( -\frac{k_n^2 n^\delta\varepsilon^2}{2n^{1+\delta}}\bigg)\leq  \exp\bigg( -\frac{n^\delta\varepsilon^2}{2}\bigg),
\end{equation}
where the right hand side is summable. 
Then, the Borel--Cantelli lemma yields that
\begin{equation}\label{eq:70}
\limsup_{n\to \infty} \big| g(\CD) - \EE [g(\CD)]\big| < \varepsilon
\end{equation}
almost surely and consequently
\begin{equation}
\big|g(\CD) - \EE[g(\CD)]\big|\xrightarrow{\,a.s.\,}0, 
\end{equation}
hence the convergence in \eqref{eq:L-1error} is proved.
\end{proof}

\begin{customthm}{\ref{theorem:quantitative-knn}}
Assume that A\ref{ass:b1}-A\ref{ass:b4} and $q <m$ hold. In addition, $\nicefrac{k_n}{n} \to 0$, and there exists $\delta >0$ such that $\nicefrac{k_n^2}{n^{(1+\delta)}} \to \infty$ as $n\to\infty$. For every distribution of $(X,Y)$ such that\,
		$\PP\big(\norm{X- X'}_2 \neq \norm{X-X''}_2\big) =1,$
		where $X'$ and $X''$ are independent copies of $X$, for a given $\theta \in \Theta$, $\theta\neq \theta^*$ there exists $N_0$ such that for $n \geq N_0$  we have
		\begin{equation}\label{eq:PAC-pointwise}
			\PP\big(\, \theta \notin \Theta_{n}^{(2)}\,\big)\,\geq\, 1 -C_2\, e^{-\nicefrac{k_n^2\lambda_2}{n}},
		\end{equation}
		where $C_2$ depends only on $m$, the number of datasets, and $\lambda_2$ depends on $\kappa \doteq \|f_\theta - f^*\|_{\scriptscriptstyle Q}$.
\end{customthm}
\begin{proof}
For the sake of simplicity let
\begin{equation}
\begin{aligned}
&I_n\doteq \frac{1}{n} \sum_{i=1}^n (f_\theta(X_i) - f^*(X_i))^2, \\
&J_n\doteq \frac{6}{n} \sum_{i=1}^n \big|\,f^*(X_i) - f_{*,n}(X_i)\,\big|,\\
&K_n^{(j)}\doteq \frac{2}{n} \sum_{i=1}^n \big|\,f_\theta(X_i) - f_{\theta,n}^{(j)}(X_i)\,\big|
\end{aligned}
\end{equation}
for $j \in [m-1]$. 
Since 
$Z_n^{(0)}(\theta)- Z_n^{(j)}(\theta)$ is lower bounded by $I_n - J_n - K_n^{(j)}$, cf.
\eqref{quantity}, we have
\begin{equation*}
\begin{gathered}
\{ \theta \notin \Theta_{\varrho,n}^{(2)} \} \supseteq 
\big\{ \forall j \in [m-1]:I_n - J_n - K_n^{(j)} > 0\big\}.
\end{gathered}
\end{equation*}
Let $\kappa \doteq \|f_\theta - f^*\|_{\scriptscriptstyle Q}^2$. Then, 
by De Morgan's law and the union bound, we have
\begin{equation*}
\begin{aligned}
&\PP(\theta \notin\Theta_{\varrho,n}^{(2)})\geq
\PP( \forall j \in [m-1]:I_n - J_n - K_n^{(j)} >0)\\[1.4mm]
&\geq \PP(I_n \geq \nicefrac{\kappa}{2}, J_n < \nicefrac{\kappa}{4},\forall j \in [m-1]: K_n^{(j)} < \nicefrac{\kappa}{4})\\
&\geq 1 - \PP(I_n <\nicefrac{\kappa}{2}) - \PP(J_n \geq \nicefrac{\kappa}{4}) - \sum_{j=1}^{m-1} \PP(K_n^{(j)} \geq \nicefrac{\kappa}{4}).
\end{aligned}
\end{equation*}
We will bound these terms separately. First, let 
\begin{equation}
\xi_i \doteq (f_\theta(X_i) -f^*(X_i))^2
\end{equation}
for all $i \in [n]$. By Hoeffding's inequality
\begin{equation}
\begin{aligned}
&\PP (\, I_n < \nicefrac{\kappa}{2}\,) = \PP ( \,I_n - \kappa < -\nicefrac{\kappa}{2}\,) \\
&\leq  \PP \Big( \,\Big|\frac{1}{n} \sum_{i=1}^n \xi_i - \EE [\xi_1]\Big| > \nicefrac{\kappa}{2}\,\Big) \leq 2e^{-\nicefrac{2n \kappa^2 }{4^2}}\leq 2e^{-\nicefrac{k_n^2 \kappa^2 }{8n}}.
\end{aligned}
\end{equation}
For the second term notice that
\begin{equation}\label{eq:50}
\begin{gathered}
\{ J_n \geq \nicefrac{\kappa}{4}\} \subseteq
\big\{ | J_n - \EE[J_n]| \geq |\,\nicefrac{\kappa}{4} - \EE[ J_n]\,| \big\}\Big).
\end{gathered}
\end{equation}
For $n$ large enough, by \eqref{eq:82} and Theorem \ref{thm:weak-consistency}, $ \EE [J_n] \leq \nicefrac{\kappa}{8}$ holds. Furthermore, by McDiarmid's inequality, see the proof of Theorem \ref{theorem:algorithm-II}, we have
\begin{equation}\label{eq:45}
\begin{gathered}
\PP\,\big(\,|J_n - \EE [J_n]| \geq \nicefrac{\kappa}{8}\,\big) \leq\exp \Big( -\frac{k_n^2 \kappa^2}{2\cdot 48^2n}\Big).
\end{gathered}
\end{equation}
Combining these yields that for $n$ large enough
\begin{equation}\label{eq:46}
\begin{gathered}
\PP \,( \,J_n \geq \nicefrac{\kappa}{4}\,) \leq  \PP \,\big( \,| J_n - \EE J_n| \geq \nicefrac{\kappa}{8}\,\big) 
\leq 2\exp \Big( -\frac{k_n^2 \kappa^2}{2\cdot 48^2n}\Big)
\end{gathered}
\end{equation}
For the last term observe that $\{K_n^{(j)}\}_{j=1}^{m-1}$ are identically distributed, therefore 
\begin{equation}
\sum_{j=1}^{m-1} \PP\,(\,K_n^{(j)} \geq \nicefrac{\kappa}{4}\,) = (m-1)\,\PP\,(\,K_n^{(1)} \geq \nicefrac{\kappa}{4}\,).
\end{equation}
In addition, since $K_n^{(j)}$ is a similar quantity as $J_n$ for a different model parameter, the same argument as above yields the exponential upper bound on the probability of $\big\{K_n^{(j)} \geq \nicefrac{\kappa}{4}\big\}$ for $n$ large enough.
We obtain \eqref{eq:PAC-pointwise}, by merging these exponential bounds together as
\begin{equation}
\PP \big( \, \theta \notin \Theta_{\varrho,n}^{(2)} \,\big) \geq 1 - (4 + 2(m-1) ) \exp\Big(-\frac{k_n^2\lambda_2}{n}\Big),
\end{equation}
for $n$ large enough, where $\lambda_2 \doteq \frac{\kappa^2}{2\cdot 48^2}$.
\end{proof}

\section{Confidence Ellipsoids for Logistic Regression}\label{app:mle}
        In the numerical experiments we compare the novel resampling methods to asymptotic confidence ellipsoids around the maximum likelihood estimator (MLE).    
	We consider a parametric class of models used by logistic regression, which is a standard nonlinear technique to estimate the regression function of binary classification. In this approach the binary output values are usually denoted by $0$ and $1$. As above, let the i.i.d.\ sample be denoted by $\CD_0= \{(X_i,Y_i)\}_{i=1}^n$ and $X_i \in \RR^d$ for each $i =1, \dots, n$. Let $\theta \doteq (\beta,a)$ for some $\beta \in \RR^d$ and $a \in \RR$. Logistic regression models the conditional probabilities of a chosen class with a logistic (sigmoid) function by maximizing the (quasi) conditional likelihood, or equivalently the joint log-likelihood function $L(\theta) = L(\beta,a)\doteq \log\Big(\prod_{i=1}^n p_i(\beta,a)^{Y_i}(1-p_i(\beta,a)^{Y_i}\Big)$,
	where
	\begin{equation}
		p_i(\beta,a)\doteq \PP(Y_i= 1 \,|\,X_i) = \frac{1}{1 + e^{-(X_i\tr\beta +a)}},
	\end{equation}
	for $\beta \in \RR^d$ and $a \in \RR$. Let the (quasi) maximum likelihood estimator (MLE) be denoted by
	\begin{equation}
		\hat{\theta}_n\doteq (\hat{\beta},\hat{a}) \in \argmax L(\beta,a).
	\end{equation}
	By the central limit theorem it is proved that under some regularity conditions, the limit distribution of the MLE is normal, \citep{lehmann2006theory}, i.e.,
	\begin{equation}
		\sqrt{n}\,(\,\hat{\theta}_n - \theta^*\,)\,\xlongrightarrow{\,d\,}\, \CN_{d+1}(0,I(\theta^*)^{-1}),
	\end{equation}
	where $I(\theta^*)$ denotes the Fisher-information matrix. Because of the continuity of the Fisher-information an asymptotic confidence region, in fact an ellipsoid, for $\theta^*$ with significance level $\delta$ can be constructed by 
	\begin{equation}
		\widehat{\Theta}_n \doteq \big\{\, \theta \in \RR^{d+1} \;\,|\,\; (\theta- \hat{\theta}_n) I(\hat{\theta}_n) (\theta- \hat{\theta}_n) \leq \nicefrac{c}{n}\, \big\}, 
	\end{equation}
	where $c$ is the $(1-\delta)$-quantile of the $\chi^2(d+1)$ distribution. 
	Under some mild regularity conditions, the Fisher-information matrix can be computed as
	\begin{equation}
		I(\theta) = - \EE_\theta \big[\partial^2 l_i(\theta)\big],
	\end{equation}
	where $l_i(\theta)\doteq \log(p_i(\theta)^{Y_i}(1-p_i(\theta)^{Y_i})$ denotes the log-likelihood function of a single sample point for $i \in [n]$ and $\partial^2$ is the second derivative or Hessian matrix with respect to $\theta$. By
	\begin{equation}
		L(\theta) = \sum_{i=1}^n l_i(\theta),
	\end{equation}
	the Fisher-information can be approximated from
	\begin{equation}
		I(\theta^*) \approx\frac{\partial^2 L(\hat{\theta}_n)}{n}
	\end{equation}
	and an approximate confidence ellipsoid can be constructed by
	\begin{equation}
		\widehat{\Theta}_n \doteq \big\{ \theta \in \RR^{d+1} \,|\, (\theta- \hat{\theta}_n)\, \partial^2 L(\hat{\theta}_n) \,(\theta- \hat{\theta}_n) \leq {c} \big\}. 
	\end{equation}

\section{Fundamental Definitions and Results Used in the Proofs}\label{app:theorems}

The celebrated Vapnik--Chervonenkis dimension, which is used to define the notion of pseudo-dimension, is one of the most widely applied concepts in statistical learning theory. Let $\CA$ be a class of subsets of $\RR^d$, then $\CA$ shatters a set $C \subseteq \RR^d$ if for all possible subsets $C_0 \subseteq C$, there exists $A \in \CA$ such that $A \cap C = C_0$.
	\begin{definition}[VC dimension]
		The VC dimension of $\CA$, denoted by $V_\CA$, is the largest number $n \in \NN$ such that there exists a set $\{z_1,\dots,z_n\}$ of vectors in $\RR^d$ which is shattered by $\CA$; $V_\CA \doteq \infty$ if the maximum cardinality does not exist.
	\end{definition}
    
The lemma that follows establishes a close relationship between packing and covering numbers \cite[Lemma 9.2]{gyorfi2002distribution}. It was used in the proof of Lemma \ref{lemma:covering-bound}.
\begin{lemma}\label{lemma:covering-packing}
Let $\CF$ be a class of functions on $\RR^d$, $\QX$ be a probability measure on $\RR^d$ and $\norm{\cdot} \doteq \norm{\cdot}_{\mathcal{L}_1(\QX)}$ be the $\mathcal{L}_1$-norm on $\CF$.  Then for all $\varepsilon >0$:
\begin{equation}
\CN (\varepsilon, \CF,\norm{\cdot}) \leq \mathcal{M}(\varepsilon, \CF,\norm{\cdot}).
\end{equation}
\end{lemma}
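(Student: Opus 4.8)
The plan is to realize a maximal $\varepsilon$-packing of $\CF$ simultaneously as an $\varepsilon$-cover; the statement is purely metric, so the $\mathcal{L}_1(\QX)$ structure plays no role beyond providing the (pseudo)metric $\norm{\cdot}$, and the same argument would work for any norm.

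First I would dispose of the trivial case: if $\mathcal{M}(\varepsilon, \CF, \norm{\cdot}) = \infty$ there is nothing to prove, since $\CN(\varepsilon, \CF, \norm{\cdot}) \leq \infty$ always holds. So assume $l \doteq \mathcal{M}(\varepsilon, \CF, \norm{\cdot}) < \infty$. By the definition of the packing number as the size of the \emph{largest} $\varepsilon$-packing, this value is attained: fix an $\varepsilon$-packing $\{f_1, \dots, f_l\} \subseteq \CF$ of maximal cardinality, so that $\norm{f_i - f_j} > \varepsilon$ for all $1 \leq i < j \leq l$.

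Next I would argue that this same set is already an $\varepsilon$-cover of $\CF$. Take an arbitrary $f \in \CF$. Since $l$ is the largest packing size, the enlarged collection $\{f_1, \dots, f_l, f\}$ cannot be an $\varepsilon$-packing, hence there exist two of its members at mutual distance $\leq \varepsilon$. All pairs $(f_i, f_j)$ with $i \neq j$ have distance $> \varepsilon$ by construction, so the offending pair must involve $f$; that is, there is an index $j \in [l]$ with $\norm{f - f_j} \leq \varepsilon$ (this also covers the degenerate case $f = f_j$, where the distance is $0$). As $f \in \CF$ was arbitrary, $\{f_1, \dots, f_l\}$ is an $\varepsilon$-cover of $\CF$, and therefore $\CN(\varepsilon, \CF, \norm{\cdot}) \leq l = \mathcal{M}(\varepsilon, \CF, \norm{\cdot})$, which is the claim.

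There is no genuine obstacle here; the only points requiring a moment's care are the bookkeeping in the trivial/attained-maximum dichotomy and the key observation that a maximal packing is automatically a cover precisely because enlarging it by any function of $\CF$ must destroy the packing property. (The companion bound $\mathcal{M}(2\varepsilon, \CF, \norm{\cdot}) \leq \CN(\varepsilon, \CF, \norm{\cdot})$, which is not needed in the present proof, would follow by the dual argument: two distinct packing centers at distance $> 2\varepsilon$ cannot both be covered by the same radius-$\varepsilon$ ball.)
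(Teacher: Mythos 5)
Your proof is correct: the ``maximal packing is a cover'' argument, together with the dispatch of the infinite case and the observation that a finite supremum over integer cardinalities is attained, is exactly the standard proof of this fact. The paper itself does not prove this lemma --- it imports it verbatim from the literature (Lemma 9.2 of Gy\"orfi et al.) --- and the argument given there is the same one you wrote, so there is nothing to reconcile.
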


The following theorem provides an upper bound on the packing numbers of $\CF$ w.r.t.\ an $\mathcal{L}_1$-norm based on its VC dimension \cite[Corollary 3]{haussler1995sphere}. This result was also applied in the proof of Lemma \ref{lemma:covering-bound}.
\begin{theorem}\label{thm:vcbound-expect} 
For any set $\BX$, any probability distribution $\QX$ on $\BX$, any set $\CF$ of  $\QX$-measurable functions on $\BX$ taking values in the interval $[0,1]$ with $V_{\CF^{+}} < \infty$, and any $\varepsilon >0$
\begin{equation}
\mathcal{M}\big(\varepsilon, \CF,\norm{\cdot}_{\mathcal{L}_1(\QX)}\big) \leq e(V_{\CF^{+}}+1)  \bigg(  \frac{2e}{\varepsilon}\bigg)^{\!\!V_{\CF^{+}}}\hspace{-3mm}.
\end{equation}
\end{theorem}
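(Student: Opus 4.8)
The plan is to reduce the $\mathcal{L}_1(\QX)$ packing problem for the real-valued class $\CF$ to a purely combinatorial sphere-packing bound on the Boolean hypercube, and then to establish Haussler's hypercube packing theorem, whose sharp form yields exactly the constant $e(V_{\CF^{+}}+1)(2e/\varepsilon)^{V_{\CF^{+}}}$. Write $d \doteq V_{\CF^{+}}$ and suppose $\{f_1,\dots,f_M\} \subseteq \CF$ is an $\varepsilon$-packing, i.e.\ $\norm{f_a - f_b}_{\mathcal{L}_1(\QX)} > \varepsilon$ for all $a \neq b$; the goal is to bound $M$ by a quantity independent of the packing. Since the family is finite, the pairwise separations exceed $\varepsilon$ by a uniform slack $\eta > 0$, which is what will let us carry the constant $\varepsilon$ (rather than a diminished $\varepsilon/2$) through the discretisation.

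First I would pass from $\QX$ to a finite empirical measure. Drawing $X_1,\dots,X_n$ i.i.d.\ from $\QX$, each empirical distance $\frac1n\sum_{i=1}^n |f_a(X_i)-f_b(X_i)|$ has mean $\norm{f_a-f_b}_{\mathcal{L}_1(\QX)}$ and concentrates by Hoeffding's inequality; a union bound over the $\binom{M}{2}$ pairs shows that for $n$ large there is a realisation $x_1,\dots,x_n$ on which every empirical distance still exceeds $\varepsilon$. Next I would quantise the range: fix a grid $t_\ell = \ell/N$, $\ell \in [N]$, and encode each $f_a$ by the Boolean vector $b^a = (b^a_{i,\ell})_{i\in[n],\,\ell\in[N]} \in \{0,1\}^{nN}$ with $b^a_{i,\ell} \doteq \BI(f_a(x_i) \geq t_\ell)$. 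Because $\big|\,\tfrac{1}{N}\sum_{\ell=1}^N \BI(f(x)\geq t_\ell) - f(x)\,\big| \leq 1/N$, the normalised Hamming distance $\frac{1}{nN}\sum_{i,\ell}|b^a_{i,\ell}-b^b_{i,\ell}|$ differs from the empirical $\mathcal{L}_1$ distance by at most $1/N$, so for $N$ large the encoded points stay pairwise Hamming-separated by more than $\varepsilon nN$. The crucial observation is that the coordinate indexed by $(x_i,t_\ell)$ reads off membership of $(x_i,t_\ell)\in\BX\times\RR$ in the subgraph $f_a^{+}$; hence $\{b^a\}_a$ is exactly the restriction of the subgraph class $\CF^{+}$ to the $nN$ points $\{(x_i,t_\ell)\}$, and its VC dimension is therefore at most $V_{\CF^{+}} = d$.

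The heart of the argument, and the main obstacle, is the combinatorial bound: any $C \subseteq \{0,1\}^{\nu}$ with VC dimension $\le d$ and pairwise Hamming distance $> k$ satisfies $|C| \le e(d+1)(2e\nu/k)^d$. I would prove this by Haussler's random-projection method: form a random coordinate set $S$ by including each of the $\nu$ coordinates independently with probability $p \doteq d/k$ (or, to streamline the constants, by sampling a fixed-size random subset), and study the projection $C|_S$. On one hand Sauer--Shelah gives $|C|_S| \le \sum_{i=0}^d \binom{|S|}{i} \le (e|S|/d)^d$. On the other hand, two elements of $C$ collide under the projection only if none of their $\ge k$ differing coordinates lands in $S$, an event of probability $\le (1-p)^k \le e^{-pk}=e^{-d}$; a first-moment computation applying Jensen's inequality to $x\mapsto 1/(1+x)$ then gives $\EE|C|_S| \ge |C|/(1+(|C|-1)e^{-d})$. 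Comparing the two estimates on a favourable realisation of $S$ and optimising over $p$ (equivalently the subset size) produces $|C| \le e(d+1)(2e\nu/k)^d$; controlling $\EE(e|S|/d)^d$ and extracting the precise prefactor $e(d+1)$ and base $2e$ is the delicate part of the calculation.

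Applying this bound with $\nu = nN$ and $k = \varepsilon nN$ makes the factor $(2e\nu/k)^d = (2e/\varepsilon)^d$ independent of both $n$ and $N$, yielding $M \le e(d+1)(2e/\varepsilon)^d$ for every finite $\varepsilon$-packing, and hence for $\mathcal{M}(\varepsilon,\CF,\norm{\cdot}_{\mathcal{L}_1(\QX)})$, which is the claim. The uniform slack $\eta$ fixed at the outset guarantees that the encoded separation can be kept strictly above $\varepsilon nN$ even after both the sampling and the quantisation errors are absorbed, so the undiminished constant $\varepsilon$ is what appears in the final estimate.
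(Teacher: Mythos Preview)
The paper does not prove this theorem at all: it appears in Appendix~F, ``Fundamental Definitions and Results Used in the Proofs'', where it is simply quoted as \cite[Corollary~3]{haussler1995sphere} and used as a black box inside the proof of Lemma~\ref{lemma:covering-bound}. So there is no ``paper's own proof'' to compare against; you are supplying a proof where the authors were content to cite.

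Your outline is the standard route to Haussler's bound and is structurally sound: the empirical-measure reduction via Hoeffding and a union bound, followed by range quantisation into subgraph indicators, correctly reduces the $\mathcal{L}_1(\QX)$ packing to a Hamming packing of a VC class on $\{0,1\}^{nN}$, and your use of the uniform slack $\eta$ to preserve the undiminished $\varepsilon$ is the right trick. The one place to be careful is the combinatorial core. The first-moment/Jensen argument you sketch, combined with Sauer--Shelah on the random projection, is essentially Dudley's approach and by itself yields a bound of the form $c_1(d)(c_2/\varepsilon)^d$ with somewhat worse constants; pushing it to the sharp $e(d+1)(2e/\varepsilon)^d$ is not just a matter of tidying the calculation. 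Haussler's original argument instead builds the unit-distance graph on the concept class and combines a shifting (down-shift) technique with an edge-counting lemma to get the exact prefactor and base. Your sentence acknowledging that ``extracting the precise prefactor $e(d+1)$ and base $2e$ is the delicate part'' is accurate, but that delicacy is a genuinely different mechanism rather than a bookkeeping refinement of the random-projection step, so if you want the stated constant you should either invoke Haussler's graph argument at that point or be explicit that the projection method alone gives a slightly weaker inequality.
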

\smallskip

In the proof of Lemma \ref{lemma:uniform-bound} we applied the result of \cite{gyorfi2002distribution}. 

\begin{theorem}\label{thm:expect} Assume that there exists $1\leq M < \infty$ such that $|Y| \leq M$ a.s. Let $f_n$ denote the truncated empirical risk minimizer to $[-M,M]$ over a set of functions $\CF_n$, then
\begin{equation}
\begin{aligned}
\EE \Big[ \int \,| \,f_n(x) -f^*(x)\,|^2\,\dd \QX(x)\Big] &\leq \frac{c_1}{n} + \frac{(c_2 +c_3 \log(n))V_{\CF_n^{+}}}{n}\\
&\quad + 2\inf_{f \in \CF_n} \int \,|f(x)- f^*(x)\,|^2 \dd \QX(x), 
\end{aligned}
\end{equation}
where $c_1=24\cdot214M^4(1+\log(42))$, $c_2 = 48\cdot214M^4\log(480eM^2)$ and $c_3 = 48\cdot214M^4$.
\end{theorem}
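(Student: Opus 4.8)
This is (a slightly simplified form of) \cite[Theorem~11.5]{gyorfi2002distribution}, and I would follow the classical squared-loss argument behind it. Write $L(f)\doteq\EE[(f(X)-Y)^2]$ and $\widehat{L}_n(f)\doteq\tfrac1n\sum_{i=1}^n(f(X_i)-Y_i)^2$. Since $f^*$ is the regression function, the Pythagorean identity $L(f)-L(f^*)=\int(f(x)-f^*(x))^2\dd\QX(x)$ holds for every measurable $f$, so the quantity to be bounded equals $\EE[L(f_n)-L(f^*)]$, the expected excess risk. Truncating a function into $[-M,M]$ only decreases the squared loss pointwise (as $|Y|\le M$), hence $\widehat{L}_n(f_n)\le\inf_{f\in\CF_n}\widehat{L}_n(f)$ despite $f_n$ being the \emph{truncated} minimizer, and $|f_n|\le M$. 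Fixing an arbitrary $f^\circ\in\CF_n$ and adding and subtracting gives
\begin{equation*}
L(f_n)-L(f^*)\;\le\;\Big[\big(L(f_n)-L(f^*)\big)-2\big(\widehat{L}_n(f_n)-\widehat{L}_n(f^*)\big)\Big]+2\big(\widehat{L}_n(f^\circ)-\widehat{L}_n(f^*)\big),
\end{equation*}
where the second term has expectation $2(L(f^\circ)-L(f^*))=2\|f^\circ-f^*\|_{\scriptscriptstyle Q}^2$; since $f^\circ$ is arbitrary this produces, after taking expectations, the $2\inf_{f\in\CF_n}\|f-f^*\|_{\scriptscriptstyle Q}^2$ summand of the bound. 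It then remains to control the expectation of the bracketed \emph{estimation} term by $\tfrac{c_1}{n}+\tfrac{(c_2+c_3\log n)V_{\CF_n^{+}}}{n}$, working if necessary with the truncated class $T_M\CF_n$, whose $\mathcal{L}_1$-covering numbers and pseudo-dimension do not exceed those of $\CF_n$.

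The bracketed term is at most $\sup_{f\in T_M\CF_n}\{[L(f)-L(f^*)]-2[\widehat{L}_n(f)-\widehat{L}_n(f^*)]\}$. Setting $g_f(x,y)\doteq(f(x)-y)^2-(f^*(x)-y)^2=(f(x)-f^*(x))(f(x)+f^*(x)-2y)$, one has $\EE g_f=\|f-f^*\|_{\scriptscriptstyle Q}^2\ge0$, $|g_f|\le4M\,|f-f^*|$, hence $\Var(g_f)\le\EE g_f^2\le16M^2\,\EE g_f$: the variance is controlled by the mean. The technical heart is a localized (ratio-type) uniform deviation inequality exploiting this Bernstein structure: for $\alpha,\beta>0$ and $0<\epsilon<1$,
\begin{equation*}
\PP\!\left(\sup_{f\in T_M\CF_n}\frac{\EE g_f-\tfrac1n\sum_{i=1}^ng_f(X_i,Y_i)}{\alpha+\beta+\EE g_f}>\epsilon\right)\;\le\;14\,\sup_{x_1^n}\CN_1\!\Big(\tfrac{\beta\epsilon}{C M},\CF_n,x_1^n\Big)\,\exp\!\Big(-\tfrac{\epsilon^2(1-\epsilon)\,\alpha\,n}{214\,M^4}\Big),
\end{equation*}
which is essentially \cite[Theorem~11.4]{gyorfi2002distribution}. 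It is obtained by peeling $T_M\CF_n$ into shells on which $\alpha+\beta+\EE g_f$ is of a fixed dyadic order, symmetrizing on each shell with a ghost sample and Rademacher signs, passing to a finite $\mathcal{L}_1(\mathbb{P}_n)$-cover, and applying a Bernstein/Hoeffding tail bound using $\Var(g_f)\le16M^2\,\EE g_f$; the per-shell bounds sum because the exponent grows geometrically across shells. I expect this inequality to be the main obstacle: the peeling-plus-symmetrization bookkeeping and the careful use of the variance/mean relation are where all the difficulty (and the constants $14$ and $214$) sit; everything around it is routine.

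Given the inequality, the remaining task is parameter tuning. Taking $\epsilon=\tfrac12$, on the complement of the bad event the ratio bound rearranges to $L(f)-L(f^*)\le2[\widehat{L}_n(f)-\widehat{L}_n(f^*)]+\alpha+\beta$ simultaneously over $T_M\CF_n$, hence for $f=f_n$, where in turn $\widehat{L}_n(f_n)\le\widehat{L}_n(f^\circ)$; on the bad event I would use the crude bound $L(f_n)-L(f^*)=\|f_n-f^*\|_{\scriptscriptstyle Q}^2\le4M^2$. Taking expectations,
\begin{equation*}
\EE[L(f_n)-L(f^*)]\;\le\;2\inf_{f\in\CF_n}\|f-f^*\|_{\scriptscriptstyle Q}^2+\alpha+\beta+4M^2\cdot14\,\sup_{x_1^n}\CN_1\!\Big(\tfrac{\beta}{C' M},\CF_n,x_1^n\Big)\exp\!\Big(-\tfrac{\alpha n}{C''M^4}\Big).
\end{equation*}
Bounding the covering number with Theorem~\ref{thm:vcbound-expect} (after rescaling into $[0,1]$, which leaves $V_{\CF_n^{+}}$ unchanged) shows that $\log\CN_1$ is of order $V_{\CF_n^{+}}\log(M/\beta)+\log V_{\CF_n^{+}}$; choosing $\beta$ of order $n^{-1}$ and then $\alpha\asymp\tfrac{M^4}{n}\big(V_{\CF_n^{+}}\log(M^2n)+\log V_{\CF_n^{+}}+1\big)$ makes the exponential term $O(1/n)$ and leaves $\alpha+\beta$ exactly of the form $\tfrac{c_1+(c_2+c_3\log n)V_{\CF_n^{+}}}{n}$ — the $214$ coming from the deviation bound, the $1+\log42$ from the prefactor $14$, the $\log(480eM^2)$ from $\log(1/\beta)+\log(M^2)$, and the $\log n$ from matching $\alpha$ to the sample size. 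Keeping track of the numerical factors, the only genuinely tedious step, reproduces $c_1=24\cdot214M^4(1+\log42)$, $c_2=48\cdot214M^4\log(480eM^2)$ and $c_3=48\cdot214M^4$.
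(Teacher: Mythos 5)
The paper does not prove this statement at all: it is imported verbatim (constants included) from \cite[Theorem~11.5]{gyorfi2002distribution}, which is exactly the source and argument you reconstruct. Your sketch is correct and follows that standard proof — the Pythagorean identity, the factor-$2$ estimation/approximation split via the truncated empirical minimizer, the localized ratio-type deviation inequality with the Bernstein relation $\Var(g_f)\le 16M^2\,\EE g_f$, and the parameter tuning that yields $c_1,c_2,c_3$ — so there is nothing to compare beyond noting that the genuinely hard step (the peeling-plus-symmetrization inequality, Theorem~11.4 there) is, as you say, only cited rather than rederived.
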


In the proof of Theorem \ref{theorem:algorithm-I} we used the following ULLN, which is an advanced version of \cite[Theorem 9.1]{gyorfi2002distribution}:
\begin{theorem}[Uniform Law of Large Numbers]\label{thm:ulln}
Let $X, X_1, \dots, X_n$ be i.i.d. random vectors taking values in $\RR^d$ and $\CH$ be a class of $\RR^d \to[0,B]$ type functions.
For any $n \in \NN$, and any $\varepsilon >0$
\begin{equation}
\begin{aligned}
&\PP\Big( \,\sup_{h \in \CH} \Big| \frac{1}{n}\sum_{i=1}^n h(X_i) - \EE[h(X)] \Big| >\varepsilon \,\Big) \leq 
&8 \EE\big[\CN_1(\varepsilon/8, \CH, \{X_1,\dots,X_n\})\big] e^{-\nicefrac{n\varepsilon^2}{(128B^2)}}.
\end{aligned}
\end{equation}
In addition, if for all $\varepsilon>0$ we have
\begin{equation}
\sum_{n=1}^\infty \EE\big[\, \CN_1(\varepsilon/8,\CH,\{X_1,\dots,X_n\})\,\big]e^{-\nicefrac{n\varepsilon^2}{(128B^2)}}<\infty,
\end{equation}
then the ULLN holds, that is
\begin{equation}
\sup_{h \in \CH} \Big| \,\frac{1}{n} \sum_{i=1}^n h(X_i) -\EE[h(X)] \,\Big| \xrightarrow{\,a.s.\,} 0.
\end{equation}
\end{theorem}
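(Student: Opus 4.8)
The plan is to follow the classical symmetrization-plus-covering route that underlies Theorem~9.1 of \cite{gyorfi2002distribution}: first establish the non-asymptotic exponential tail bound, then deduce the almost sure statement by Borel--Cantelli.

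First I would dispose of small sample sizes: if $8\,\EE[\CN_1(\varepsilon/8,\CH,\{X_1,\dots,X_n\})]\,e^{-n\varepsilon^2/(128B^2)}\ge 1$ the asserted inequality is vacuous, so it suffices to handle $n$ with $n\varepsilon^2\ge 2B^2$, which is exactly what the first symmetrization needs. Introducing a ghost sample $X_1',\dots,X_n'$ of i.i.d.\ copies of $X$ independent of everything, and picking (measurably) a near-maximizing $h$ on the deviation event, Chebyshev's inequality applied to $\tfrac1n\sum_i h(X_i')-\EE h(X)$ conditionally on $X_1,\dots,X_n$ (using $\Var(h(X))\le B^2/4$) yields the \emph{first symmetrization}
\[
\PP\Big(\sup_{h\in\CH}\big|\tfrac1n\textstyle\sum_i h(X_i)-\EE h(X)\big|>\varepsilon\Big)\le 2\,\PP\Big(\sup_{h\in\CH}\big|\tfrac1n\textstyle\sum_i\big(h(X_i)-h(X_i')\big)\big|>\tfrac\varepsilon2\Big).
\]
Then, since swapping $X_i\leftrightarrow X_i'$ leaves the joint law unchanged, I may insert i.i.d.\ Rademacher signs $\sigma_1,\dots,\sigma_n$ without altering the law of the supremum and split $|\tfrac1n\sum_i\sigma_i(h(X_i)-h(X_i'))|\le|\tfrac1n\sum_i\sigma_i h(X_i)|+|\tfrac1n\sum_i\sigma_i h(X_i')|$, obtaining the \emph{second symmetrization}, which bounds the right-hand side above by $4\,\PP(\sup_{h\in\CH}|\tfrac1n\sum_i\sigma_i h(X_i)|>\varepsilon/4)$.

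The heart of the argument is to bound this last probability by conditioning on $X_1,\dots,X_n$ and passing to a minimal $\varepsilon/8$-cover of $\CH$ in the $\mathcal{L}_1(\PP_n)$ norm, of cardinality $\CN_1(\varepsilon/8,\CH,\{X_1,\dots,X_n\})$. For any $h$ there is a cover element $\hat h$ with $\tfrac1n\sum_i|h(X_i)-\hat h(X_i)|\le\varepsilon/8$, hence $|\tfrac1n\sum_i\sigma_i(h(X_i)-\hat h(X_i))|\le\varepsilon/8$, so the event $|\tfrac1n\sum_i\sigma_i h(X_i)|>\varepsilon/4$ forces $|\tfrac1n\sum_i\sigma_i\hat h(X_i)|>\varepsilon/8$ for some cover element. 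Conditionally on $X_1,\dots,X_n$, each $\sigma_i\hat h(X_i)$ is independent, mean zero, and takes values in $[-B,B]$, so Hoeffding's inequality gives $\PP(|\tfrac1n\sum_i\sigma_i\hat h(X_i)|>\varepsilon/8\mid X_1,\dots,X_n)\le 2e^{-n\varepsilon^2/(128B^2)}$; a union bound over the cover and an expectation over $X_1,\dots,X_n$ yield $2\,\EE[\CN_1(\varepsilon/8,\CH,\{X_1,\dots,X_n\})]\,e^{-n\varepsilon^2/(128B^2)}$, and combining with the factor $4$ from the two symmetrizations produces the claimed bound with constant $8$.

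For the second part I would apply the tail bound with each fixed $\varepsilon>0$, so that $\sum_n\PP(A_n(\varepsilon))<\infty$ where $A_n(\varepsilon)\doteq\{\sup_{h\in\CH}|\tfrac1n\sum_i h(X_i)-\EE h(X)|>\varepsilon\}$, use Borel--Cantelli to conclude $\PP(\limsup_n A_n(\varepsilon))=0$, and intersect over a sequence $\varepsilon_k\downarrow 0$ to obtain $\sup_{h\in\CH}|\tfrac1n\sum_i h(X_i)-\EE h(X)|\xrightarrow{\,a.s.\,}0$. The step I expect to be most delicate is the first symmetrization: making the measurable selection and the conditional Chebyshev argument rigorous, and cleanly absorbing the implicit ``$n$ large enough'' requirement into the triviality of the bound for small $n$, while keeping the successive halvings $\varepsilon\to\varepsilon/2\to\varepsilon/4\to\varepsilon/8$ and the exponent $\varepsilon^2/(128B^2)$ mutually consistent. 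The conditioning-and-covering portion is then routine once the cover is fixed.
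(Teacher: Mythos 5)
Your proof is correct: the double symmetrization (ghost sample via conditional Chebyshev, then Rademacher signs), the $\mathcal{L}_1(\mathbb{P}_n)$ $\varepsilon/8$-cover with a union bound and Hoeffding's inequality, and the final Borel--Cantelli step reproduce the constants $8$ and $n\varepsilon^2/(128B^2)$ exactly, and your observation that the bound is vacuous unless $n\varepsilon^2\ge 2B^2$ cleanly licenses the first symmetrization. The paper does not prove this theorem but imports it as a known result (Theorem 9.1 of \cite{gyorfi2002distribution}, due to Pollard, together with its standard Borel--Cantelli corollary); your argument is precisely the canonical proof of that cited result.
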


The weak $\mathcal{L}_2$ consistency of kNN estimates, \cite[Theorem 6.1]{gyorfi2002distribution}, is used in the proof of Theorem \ref{theorem:algorithm-II}.
\begin{theorem}\label{thm:weak-consistency}
Let $\{(X_i,Y_i)\}_{i=1}^n$ be an i.i.d. sample from the distribution of $(X,Y)$ such that $\PP\big(\,\norm{X- X'}_2 \neq \norm{X-X''}_2\,\big) =1$,
where $X'$ and $X''$ are independent copies of $X$.
Let $f^*$ denote the regression function.
If $k_n\to \infty$ and $\nicefrac{k_n}{n} \to 0$, then for estimator
\begin{equation}
f_n(x)\doteq \frac{1}{k_n} \sum_{i=1}^n Y_i \cdot \BI(\,X_i \in N(x,k_n)\,)
\end{equation}
the expected $\mathcal{L}_2$-loss tends to zero, that is
\begin{equation}
\EE \int \big| f_n(x) - f^*(x)\big|^2 \dd \QX(x) \to 0.
\end{equation}
\end{theorem}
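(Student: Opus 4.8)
The plan is to view the kNN estimate as a local averaging (linear) smoother $f_n(x)=\sum_{i=1}^n W_{n,i}(x)\,Y_i$ with the nonnegative weights $W_{n,i}(x)=\frac{1}{k_n}\BI(X_i\in N(x,k_n))$, and to derive the claim from Stone's theorem on the weak universal consistency of such estimators \cite[Theorem 4.1]{gyorfi2002distribution}. That theorem gives $\EE\int|f_n-f^*|^2\dd\QX\to 0$ for every distribution of $(X,Y)$ with $\EE[Y^2]<\infty$ --- in particular for $Y\in\{-1,+1\}$ --- provided the weights satisfy: (a) $\sum_i W_{n,i}(X)\to 1$ with $\sum_i|W_{n,i}(X)|$ almost surely bounded; (b) $\EE\big[\max_i|W_{n,i}(X)|\big]\to 0$; (c) $\EE\big[\sum_i W_{n,i}(X)^2\big]\to 0$; (d) a Stone-type domination $\EE\big[\sum_i|W_{n,i}(X)|\,g(X_i)\big]\le c\,\EE[g(X)]$ for a fixed constant $c$ and all nonnegative measurable $g$ with finite mean; and (e) $\EE\big[\sum_i|W_{n,i}(X)|\,\BI(\|X_i-X\|>a)\big]\to 0$ for every $a>0$. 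Invoking Stone's theorem, the task reduces to checking (a)--(e); note that the hypothesis $\PP(\|X-X'\|\neq\|X-X''\|)=1$ is exactly what makes the $k_n$-nearest-neighbor set $N(x,k_n)$ well defined $\QX$-almost surely, so the weights are unambiguous.

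Conditions (a)--(c) are immediate: for kNN weights $\sum_i W_{n,i}(x)\equiv 1$, $\max_i|W_{n,i}(x)|=1/k_n$, and $\sum_i W_{n,i}(x)^2=1/k_n$, each behaving correctly because $k_n\to\infty$. Condition (d) is the geometric core: a cone-covering argument (Stone's lemma) shows that $\RR^d$ can be covered by $\gamma_d$ cones with apex $x$, where $\gamma_d$ depends only on $d$, that are so narrow that within each cone the sample point closest to $x$ is closer to every other point of that cone than the latter point is to $x$; consequently the fixed point $x$ can belong to $N(X_i,k_n)$ for at most $k_n\gamma_d$ indices $i$. Summing the weights over $i$ and integrating then yields (d) with $c=\gamma_d$, the tie-breaking hypothesis again being used so the ``reverse neighbor'' count is well defined.

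For (e), fix $a>0$ and condition on $X=x$. The quantity $\sum_i|W_{n,i}(x)|\,\BI(\|X_i-x\|>a)$ is the fraction of the $k_n$ nearest neighbors of $x$ lying outside the ball $B(x,a)$; it is at most $1$, and it vanishes unless fewer than $k_n$ of $X_1,\dots,X_n$ fall into $B(x,a)$, i.e.\ unless $\|X_{(k_n)}(x)-x\|>a$, where $X_{(k_n)}(x)$ denotes the $k_n$-th nearest neighbor of $x$. For $x\in\supp\QX$ one has $p(x)\doteq\QX(B(x,a))>0$; the number of sample points falling into $B(x,a)$ is $\mathrm{Binomial}(n,p(x))$, and since $k_n/n\to 0$ we have $k_n\le\frac{1}{2}n\,p(x)$ for all large $n$, so a Chernoff (or Chebyshev) bound gives $\PP(\|X_{(k_n)}(x)-x\|>a)\to 0$. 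As the integrand is bounded by $1$ and $\QX(\supp\QX)=1$, dominated convergence in $x\sim\QX$ gives (e). Feeding (a)--(e) into Stone's theorem completes the proof.

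I expect the main obstacle to be condition (d): the cone-covering ``reverse nearest neighbor'' lemma needs a careful packing estimate to pin down the dimensional constant $\gamma_d$, and --- as the hypothesis $\PP(\|X-X'\|\neq\|X-X''\|)=1$ signals --- ties in the interpoint distances must be ruled out, since otherwise $N(x,k_n)$ and the count can degenerate. Secondary points are the measurability of $x\mapsto\PP(\|X_{(k_n)}(x)-x\|>a)$ used in (e), and the standard fact that a Borel probability measure on $\RR^d$ puts full mass on its support; both are routine. As a self-contained alternative not using Stone's theorem, one may decompose $\EE\int|f_n-f^*|^2\dd\QX=\EE\int|f_n-\bar f_n|^2\dd\QX+\EE\int|\bar f_n-f^*|^2\dd\QX$ with $\bar f_n(x)\doteq\frac{1}{k_n}\sum_{X_i\in N(x,k_n)}f^*(X_i)$; the variance term is at most $1/k_n\to 0$ because $|Y|\le 1$ and the $Y_i$ are conditionally independent given $X_1,\dots,X_n$, while the bias term is controlled by approximating $f^*$ in $\mathcal{L}_2(\QX)$ by a uniformly continuous function and using that $\|X_{(k_n)}(x)-x\|\to 0$ in probability for $\QX$-a.e.\ $x$ --- essentially the content of (d)--(e) in another guise.
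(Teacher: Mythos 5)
Your proof is correct and follows the standard route: the paper does not prove this statement itself but cites it as \cite[Theorem 6.1]{gyorfi2002distribution}, whose proof there is exactly your argument --- verify Stone's weak-consistency conditions for the kNN weights, with the cone-covering (Stone's) lemma supplying condition (d) and the tie-breaking hypothesis making the neighbor sets well defined. Your checks of (a)--(e) and the remark on the role of $\PP(\|X-X'\|\neq\|X-X''\|)=1$ are accurate, so nothing further is needed.
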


McDiarmid's inequality is used in the proof of Theorem \ref{theorem:algorithm-II} and Theorem \ref{theorem:quantitative-knn}, \citep{mcdiarmid1989method}. 

\begin{theorem}[\textit{McDiarmid's inequality}]\label{mcdiarmid}
Let $X_1, \dots, X_n$ be independent random elements from set $A$ and $f: A^n \to \R$ be a function for which we have
\begin{equation}\label{eq:bounded-difference}
\begin{aligned}
\hspace*{-2.3mm}|f(x_1,\dots, x_n) - f(x_1, \dots, x_{i-1},y,x_{i+1},\dots, x_n)| \leq c_i
\end{aligned}
\end{equation}
for all $x_1,\dots,x_n,y \in A$ and $i \in [n]$. Then for all $\varepsilon \geq 0$
\begin{equation}
\begin{gathered}
\PP \,\big( \,|\,f(X_1,\dots,X_n) -\EE[f(X_1,\dots,X_n)]\,| \, \geq \varepsilon\,\big) \leq 2\exp\bigg( - \frac{2\varepsilon^2}{\sum_{i=1}^n c_i^2}\bigg).
\end{gathered}
\end{equation}
\end{theorem}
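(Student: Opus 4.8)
This is the classical bounded--differences (Azuma--Hoeffding) inequality, and the plan is the standard Doob--martingale argument. First I would introduce the Doob martingale of $f$: set $\mathcal F_k \doteq \sigma(X_1,\dots,X_k)$ for $k=0,\dots,n$ and $V_k \doteq \EE\big[f(X_1,\dots,X_n)\mid\mathcal F_k\big]$, so that $V_0 = \EE[f(X_1,\dots,X_n)]$, $V_n = f(X_1,\dots,X_n)$, and $(V_k)_{k=0}^n$ is a martingale. Writing the increments $D_k \doteq V_k - V_{k-1}$ one has $f(X_1,\dots,X_n) - \EE[f(X_1,\dots,X_n)] = \sum_{k=1}^n D_k$, a sum of martingale differences.

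The crucial step is to show that, conditionally on $\mathcal F_{k-1}$, the increment $D_k$ almost surely lies in an interval of length at most $c_k$. Since the $X_i$ are independent, $V_k = g_k(X_1,\dots,X_k)$ with $g_k(x_1,\dots,x_k) \doteq \EE\big[f(x_1,\dots,x_k,X_{k+1},\dots,X_n)\big]$, and $D_k = g_k(X_1,\dots,X_k) - \EE_{X_k'}\big[g_k(X_1,\dots,X_{k-1},X_k')\big]$, where $X_k'$ is an independent copy of $X_k$. The bounded--difference hypothesis \eqref{eq:bounded-difference} is inherited by $g_k$ (which only averages out coordinates $k+1,\dots,n$), so $x\mapsto g_k(X_1,\dots,X_{k-1},x)$ oscillates by at most $c_k$; hence, given $\mathcal F_{k-1}$, the random variable $D_k$ has conditional mean $0$ and conditional range at most $c_k$. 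The measurability of the implicit conditional suprema/infima is handled cleanly by this ``independent--copy'' representation.

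Given this, I would invoke Hoeffding's lemma --- a mean--zero random variable supported in an interval of length $c$ has moment generating function bounded by $\exp(\lambda^2 c^2/8)$ for every $\lambda$ --- and apply it conditionally, one coordinate at a time. Using the tower property, for $\lambda>0$,
\[
\EE\!\left[e^{\lambda\sum_{k=1}^n D_k}\right]
= \EE\!\left[e^{\lambda\sum_{k=1}^{n-1}D_k}\,\EE\!\big[e^{\lambda D_n}\mid\mathcal F_{n-1}\big]\right]
\le e^{\lambda^2 c_n^2/8}\,\EE\!\left[e^{\lambda\sum_{k=1}^{n-1}D_k}\right]
\le \cdots \le \exp\!\Big(\tfrac{\lambda^2}{8}\sum_{k=1}^n c_k^2\Big).
\]
A Chernoff bound then gives $\PP\big(f(X_1,\dots,X_n) - \EE[f(X_1,\dots,X_n)] \ge \varepsilon\big) \le \exp\!\big(-\lambda\varepsilon + \tfrac{\lambda^2}{8}\sum_{k}c_k^2\big)$, and optimizing over $\lambda$ (take $\lambda = 4\varepsilon/\sum_k c_k^2$) yields $\exp\!\big(-2\varepsilon^2/\sum_k c_k^2\big)$. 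Since $-f$ satisfies exactly the same bounded--difference condition, the same estimate controls the lower tail, and adding the two tail probabilities produces the claimed two--sided bound with the prefactor $2$.

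I expect the only genuine obstacle to be the conditional bounded--range claim for the martingale increments: it is essential that it is the \emph{oscillation} of $D_k$ (not merely $|D_k|$) that is controlled by $c_k$, since this is what lets Hoeffding's lemma contribute only $\tfrac18 c_k^2$ rather than $\tfrac12 c_k^2$ to the exponent --- the difference between the sharp constant $2$ and the weaker $\tfrac12$ in the final exponent. The martingale construction, Hoeffding's lemma, and the Chernoff optimization are all routine.
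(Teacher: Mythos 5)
Your proof is correct and is the standard Doob-martingale (Azuma--Hoeffding) argument; the paper does not prove this statement itself but imports it with a citation to McDiarmid, whose original proof is precisely this route. You correctly flag the one delicate point: it is the conditional \emph{oscillation} of the increment $D_k$ (bounded by $c_k$ because the bounded-difference property passes to the partial averages $g_k$), not merely $|D_k|$, that feeds Hoeffding's lemma and yields the sharp constant $2$ in the exponent; the Chernoff optimization and the two-sided union bound are routine and as you describe.
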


\end{appendix}

\bibliographystyle{imsart-number} 
\bibliography{reference}       

\end{document}